\documentclass{article}

\usepackage{microtype}
\usepackage{graphicx}
\usepackage{subcaption}
\usepackage{booktabs}

\usepackage[accepted]{icml2026}
\usepackage{hyperref}
\usepackage{amsmath,amsfonts,amssymb}
\usepackage{mathtools}
\usepackage{amsthm}
\usepackage{bm}

\usepackage[capitalize,noabbrev]{cleveref}
\usepackage{algorithm}
\usepackage{algorithmic}
\usepackage{wrapfig}
\usepackage{tikz}
\usetikzlibrary{positioning,calc}
\usepackage{multirow}
\usepackage{xcolor}
\usepackage{adjustbox}
\usepackage{makecell} 
\usepackage{enumitem}
\usepackage{pifont}

\usepackage{amsmath,amsfonts,bm}

\def\eqref#1{equation~\ref{#1}}
\def\Eqref#1{Equation~\ref{#1}}

\def\1{\bm{1}}

\DeclareMathAlphabet{\mathsfit}{\encodingdefault}{\sfdefault}{m}{sl}
\SetMathAlphabet{\mathsfit}{bold}{\encodingdefault}{\sfdefault}{bx}{n}

\theoremstyle{plain}
\newtheorem{theorem}{Theorem}[section]
\newtheorem{proposition}[theorem]{Proposition}

\theoremstyle{definition}

\theoremstyle{remark}

\newcommand{\method}{scCBGM}
\newcommand{\cellflow}{Vanilla-FM}
\def\Eqref#1{Equation~\ref{#1}}
\newcommand{\tabwidth}{0.9\textwidth} 

\definecolor{mypink}{rgb}{0.858, 0.188, 0.478}
\definecolor{mygreen}{RGB}{0, 128, 0}
\definecolor{myblue}{RGB}{0, 0, 255}
\definecolor{myred}{RGB}{255, 0, 0}
\definecolor{mybrown}{RGB}{150, 75, 0}
\definecolor{myorange}{RGB}{255,69,0}
\definecolor{mypurple}{RGB}{128,0,128}
\definecolor{cbpink}{RGB}{220,38,127}
\definecolor{cborange}{RGB}{254,97,0}
\definecolor{cbpurple}{RGB}{120,94,240}

\newcommand{\bestres}[1]{\textcolor{cbpurple}{#1}}
\newcommand{\secondbest}[1]{\textbf{\textcolor{cborange}{#1}}}
\newcommand{\thirdbest}[1]{\textbf{#1}}

\newcommand{\cmark}{\textcolor{mygreen}{\ding{51}}}%
\newcommand{\xmark}{\textcolor{myred}{\ding{55}}}

\icmltitlerunning{scCBGM: Interpretable Single-Cell Counterfactual Editing}

\begin{document}

\twocolumn[

\icmltitle{Single-Cell Concept Bottleneck Generative Models for Interpretable and Controllable Cellular Editing}
\icmlsetsymbol{equal}{*}

\begin{icmlauthorlist}
\icmlauthor{Alma Andersson}{equal,gne}
\icmlauthor{Aya Abdelsalam Ismail}{equal,gl}
\icmlauthor{Edward De Brouwer}{equal,gne}
\icmlauthor{Doron Haviv}{equal,gne}
\icmlauthor{Tommaso Biancalani}{gne}
\icmlauthor{Kyunghyun Cho}{gne,nyucs,nyucds}
\icmlauthor{Gabriele Scalia}{gne}
\icmlauthor{Aicha BenTaieb}{gne}
\icmlauthor{Hector Corrada Bravo}{gne}
\end{icmlauthorlist}

\icmlaffiliation{gne}{Genentech}
\icmlaffiliation{gl}{Guide Labs}
\icmlaffiliation{nyucs}{Department of Computer Science, New York University}
\icmlaffiliation{nyucds}{Center for Data Science, New York University}


\icmlcorrespondingauthor{Alma Andersson}{andersson.alma@gene.com. Code available at \url{https://github.com/almaan/scCBGM}}

\icmlkeywords{Machine Learning, Single-Cell Genomics, Counterfactual Inference, Concept Bottleneck Models, Generative Models, ICML}

\vskip 0.4in
]

\printAffiliationsAndNotice{\icmlEqualContribution}

\begin{abstract}
Understanding cellular phenotypes and how they respond to perturbations is critical for disease biology and therapeutic design. Single-cell RNA sequencing enables characterization at cellular resolution, yet the combinatorial space of conditions makes exhaustive experimental mapping infeasible. We introduce single-cell Concept Bottleneck Generative Models (\method), a framework for interpretable and precise counterfactual editing of individual cells. \method~adapts concept bottleneck architectures for single-cell data through decoder skip connections and a cross-covariance penalty that promotes disentanglement without dimensional constraints. We extend the framework to flow matching models, enabling concept-guided editing in both encoding-decoding and generation regimes. To enable rigorous evaluation, we develop a synthetic benchmark with ground-truth counterfactuals. Across multiple real datasets, \method~demonstrates superior performance in combinatorial generalization and counterfactual prediction, supported by cell-level validation on synthetic data and population-level benchmarks on real datasets.

\end{abstract}

\section{Introduction}
\label{introduction}

Understanding cellular phenotypes and how they mediate response to exposures, e.g., drugs, cytokines, chemokines, is critical to disease biology and translational clinical research. Single-cell RNA sequencing (scRNA-seq) enables the characterization of cellular phenotypes and measuring responses at cellular resolution, revealing cell states, trajectories, and disease mechanisms \citep{scvelo,Aevermann2018,kang,Wu2021}. Yet the combinatorial space of cellular populations and conditions (treatments, exposures, doses) makes exhaustive experimental mapping infeasible \citep{scVIDR}. 
Computational models that can help fill this map by predicting cellular responses under unseen conditions are therefore essential.


A critical capability required of these models is the ability to perform precise \textbf{cellular editing}: starting with an observed cell and systematically modifying specific biological properties while preserving others. Editing differs from conditional generation in that the latter considers \textit{any} cell under given conditions, while the former considers a \textit{specific} cell under altered conditions. For example, a researcher might ask \textit{``What would \underline{this} T-cell look like after receiving anti-PD-1 or anti-TNF treatment, compared to its observed untreated state? What if its NF-kB pathway is turned off?''}. Such editing capabilities enable cell-level counterfactual reasoning, which is critical for causal discovery, therapeutic design, and precision medicine.


A cell-editing model must satisfy two requirements. First, it should generate cell-specific counterfactuals, predicting how an observed cell would respond to a specified intervention, rather than only population-level averages. Second, it should provide interpretable control, allowing interventions on biologically meaningful concepts such as gene programs or cell types, rather than opaque latent variables. Earlier methods modeled conditional distributions of cell states across contexts, capturing population-level effects but not counterfactuals for individual cells \citep{scgen2019, scVIDR, state}. More recent work has moved toward cell-level counterfactual prediction \citep{zhang2024scdisinfact, biolord}, but explicit, interpretable control remains an open and important area for further development. Meanwhile, existing interpretability methods are descriptive but not actionable: they explain correlations but cannot simulate or edit cellular responses \citep{scETM,Chen2024}. Enabling precise counterfactual editing at the level of individual cells with biologically interpretable control remains an open challenge.



 \looseness=-1 In this work, we introduce single-cell Concept Bottleneck Generative Models (scCBGM), a generative framework that enables interpretable and controllable cellular editing. Our approach builds on Concept Bottleneck Generative Models (CBGMs)~\citep{cbgm}, which extend concept bottleneck models~\citep{koh2020concept} to the generative setting, adapting them to the unique challenges of single-cell data: high heterogeneity, complex biological processes, substantial technical noise, and unreliable concept annotations. Our main contributions are:

\begin{itemize}[itemsep=2pt, parsep=0pt, topsep=0pt, partopsep=0pt]
\item We introduce architectural modifications to CBGMs: a computationally efficient cross-covariance penalty that promotes decoupled embeddings without imposing dimensionality constraints on model components and decoder concept skip connections that enable controllable generation under noisy biological annotations.
\item We extend scCBGM beyond VAEs to flow matching models, enabling concept-guided editing in both decoding-only and encoding–decoding regimes. This demonstrates the flexibility of our framework across generative architectures.
\item We develop a synthetic data generation process that separates exogenous noise from conditions, providing access to true counterfactuals. This enables systematic evaluation of concept-based editing under realistic single-cell settings (noise, missing labels, heterogeneous populations).
\item We demonstrate strong cell-level editing accuracy on our synthetic benchmark, where ground-truth counterfactuals are available, and competitive-to-strong performance on population-level proxy metrics across three real-world datasets, outperforming several state-of-the-art methods in combinatorial and zero-shot generalization. We show a use case where in-silico interventions with \method~on both cellular phenotype, i.e., pathway activity \textit{and} treatment elucidate mechanistic hypotheses of treatment response.
\end{itemize}

\section{scCBGM   Single-Cell Concept Bottleneck Generative Model}
\label{section:sccbgm}

%


This section formalizes the problem of single-cell counterfactual editing, describes the architecture and training objectives of \method, and details a procedure for augmenting flow matching generative models with \method.

\subsection{Problem setup} 

We assume $N$ i.i.d.\ single-cell RNA-seq profiles $\mathbf{x}\in\mathbb{R}^d$, each with $d$ genes, and an associated vector of $K$ biological concepts $\mathbf{c}\in\mathbb{R}^K$. Each concept may be binary (e.g., cell type or stimulation status, encoded as $0/1$) or continuous/soft (e.g., a drug dosage or pathway-activity score). Our goal is to generate \emph{counterfactual gene expression}: given a factual cell $(\mathbf{x},\mathbf{c})$, predict what the \emph{same} cell would look like if its concepts had been set to $\mathbf{c}'$ instead.

\textbf{Notation.} We use capital letters for random variables ($U, U_C, C, X$), lowercase bold for observed data ($\mathbf{x}, \mathbf{c}, \mathbf{c}'$), and lowercase with hats for model predictions ($\hat{\mathbf{c}}, \hat{\mathbf{u}}$).

\textbf{Data-generating process.}
We adopt Pearl's structural causal model (SCM) formalism \citep{pearl2009causality}. A gene expression $X$ is influenced by two sources of variation as illustrated in Figure ~\ref{fig:dag_ucx}: the observed concepts $C$ and unobserved residual factors $U$. Concepts themselves are driven by their own exogenous variables $U_C$:
\begin{equation}
\begin{split}
  &U\!\sim\!P(U),\quad U_C\!\sim\!P(U_C), \\
  &C \leftarrow f_C(U_C),\quad X \leftarrow f_X(C,U).
\end{split}
\label{eq:struct}
\end{equation}
where $P(U)$ and $P(U_C)$ are probability distributions over the unobserved factors, and $f_C$ and $f_X$ are functions that determine how these factors generate concepts and gene expression, respectively. We assume $U \perp U_C$; therefore $U \perp C$ under $C\!\leftarrow\!f_C(U_C)$.

\textbf{Counterfactuals.} The key insight is to ask \textit{“what if”} questions about cells. Given observed gene expression $\mathbf{x}$ and associated concepts $\mathbf{c}$, we ask: what would the \emph{same} cell look like if its concepts were instead $\mathbf{c}'$, everything else being equal? We refer to the observed gene expression as the \emph{factual} outcome $X$, and the hypothetical alternative as the \emph{counterfactual} outcome $X'$. Formally, for a given unobserved residual factor $\mathbf{u}$, if the concepts were $\mathbf{c}'$ rather than $\mathbf{c}$, the counterfactual gene expression would be:
\begin{equation}
X'(\mathbf{u}) = f_X(\mathbf{c}', \mathbf{u}),
\label{eq:cf-value}
\end{equation}
where $\mathbf{u}$ represents the same unobserved factors that generated the original cell.  Given a factual observation $\mathbf{x}$, we define the \emph{counterfactual edit} of that cell to concepts $\mathbf{c}'$ as the expected counterfactual outcome:
\begin{equation}
\mu_{\mathbf{x},\mathbf{c}'} := \mathbb{E}_U\!\left[f_X(\mathbf{c}', U)\,\middle|\, X = \mathbf{x}\right].
\label{eq:target}
\end{equation}
This represents what we expect that specific cell would look like on average if it had concepts $\mathbf{c}'$ instead, where the expectation is over the posterior of $U$ given $X=\mathbf{x}$ (abduction).





\begin{figure}[htpb!]
  \centering
  \begin{tikzpicture}[->, >=stealth, thick, scale=1,
    latent/.style={draw, circle, dashed, minimum size=10mm, inner sep=0pt},
    obs/.style={draw, circle, minimum size=10mm, inner sep=0pt}]
    \node[latent] (U) {$U$};
    \node[latent] (UC) at ($(U)+(-3.0cm,0)$) {$U_C$};
    \node[obs] (C) at ($(UC)+(0,-2cm)$) {$C$};
    \node[obs] (X) at ($(U)+(0,-2cm)$) {$X$};
    \draw (UC) -- (C);
    \draw (U)  -- (X);
    \draw (C)  -- (X);
  \end{tikzpicture}
  \caption{Directed Acyclic Graph (DAG) of the data-generating process. Two unobserved variables \(U\) and \(U_C\) drive \(X\) and \(C\), respectively, with \(C\) also influencing \(X\), consistent with \(C \leftarrow f_C(U_C),\; X \leftarrow f_X(C,U)\).}
  \label{fig:dag_ucx}

\end{figure}

\textbf{Cell editing as counterfactual prediction.}
We model single-cell editing as learning a predictor of counterfactual edit (\Eqref{eq:target}). Given an observed cell $\mathbf{x}$ with concepts $\mathbf{c}$, the goal is to learn a function $f_{\theta}$ that generates the cell under alternative concepts $\mathbf{c}'$:
\begin{equation}
\hat{\mathbf{x}}' = f_{\theta}(\mathbf{x}, \mathbf{c}') \approx \mu_{\mathbf{x},\mathbf{c}'}.
\label{eq:learning-goal}
\end{equation}
In other words, $f_{\theta}$ estimates how the same cell $\mathbf{x}$ would appear if its concepts were $\mathbf{c}'$ rather than $\mathbf{c}$; this is equivalent to the cell editing task.

\subsection{Architecture of \method}
\label{subsection:architecture}
The overall architecture of our model is shown in Figure~\ref{fig:scCBGM_arch}. It follows an encoder–decoder design with a concept bottleneck. The encoder $E(\cdot)$ maps single-cell expression $\mathbf{x}$ to a latent $\mathbf{z}$, which is decomposed into known factors $\hat{\mathbf{c}}$ and unknown factors $\mathbf{u}$ reflecting the assumed data-generating process. These are concatenated and passed to the decoder $D(\cdot)$ to reconstruct $\mathbf{x}$. Details of the encoder, decoder, and bottleneck are given below, with losses and training described in Section~\ref{sub_section:losses}.



\begin{figure*}[t]
\centering
\includegraphics[width=0.58\linewidth]{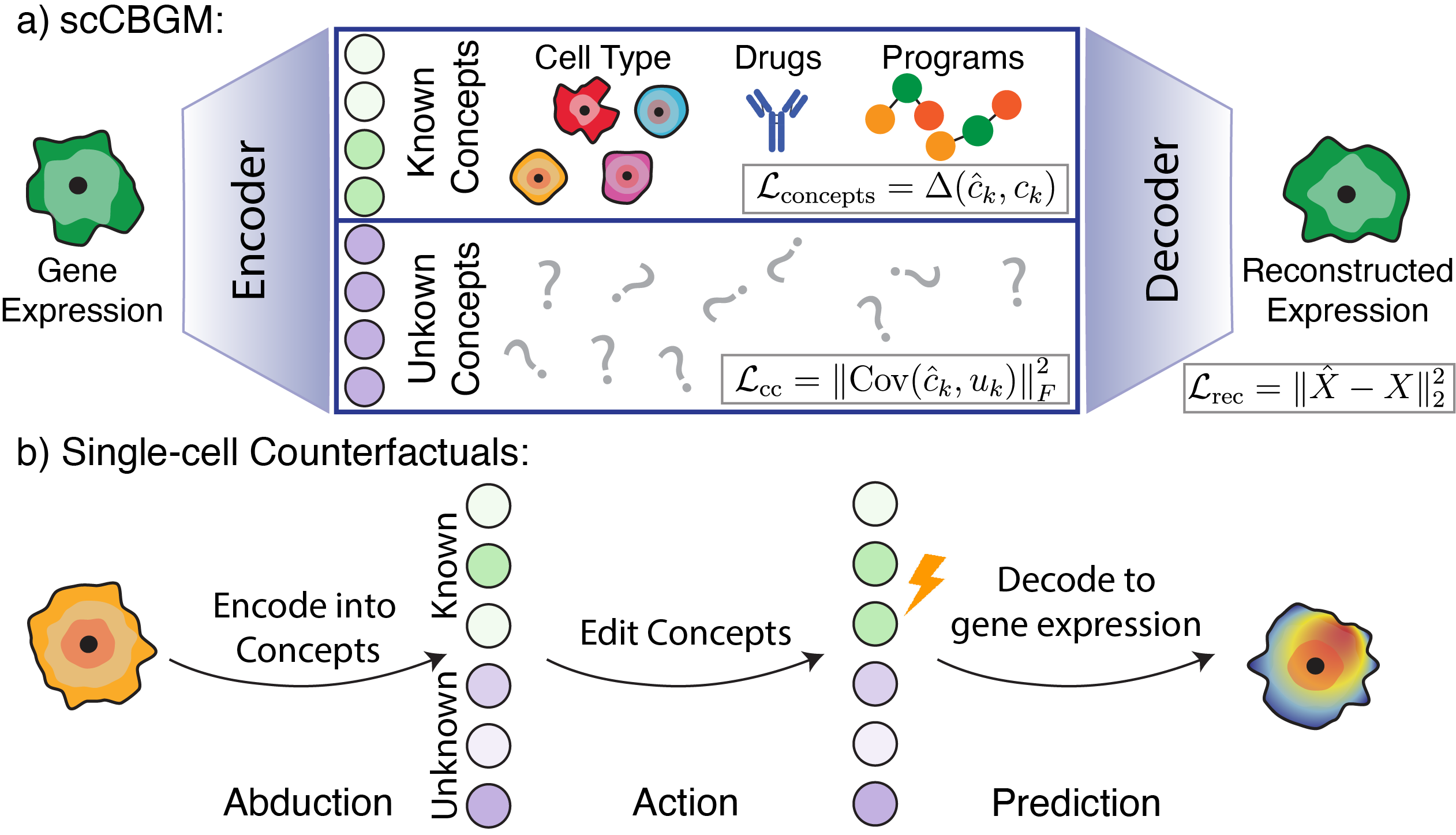}
\caption{Single-cell Concept Bottleneck Generative Model Overview. \textbf{Top:} A concept bottleneck VAE encodes gene expression into known concepts and unknown concepts, which are subsequently decoded to reconstruct the expression profile. The model is trained to match the encoded concepts ($\hat{c}_{k}$) to the ground truth ($c_{k}$) while keeping the unknown concepts ($u_{k}$) independent via the cross-covariance loss. \textbf{Bottom:} Counter-factuals are constructed by: \textbf{1.} encoding each cells into its known and unknown concept. \textbf{2.} editing the desired concepts while keeping the rest stationary and \textbf{3.} decoding with the edited embedding.}
\label{fig:scCBGM_arch}
\end{figure*}


\textbf{Encoder}
The encoder $E(\cdot)$ is a neural network that maps an input $\mathbf{x}\in\mathbb{R}^d$ to the parameters of a multivariate Gaussian, defining the posterior distribution $q(\mathbf{z}|\mathbf{x}) = \mathcal{N}(\mu_z,\Sigma_z)$. A latent vector $\mathbf{z}$ is then sampled from this distribution. 

\textbf{Concept bottleneck module} The concept bottleneck module is composed of (1) a concept network~\cite{koh2020concept} that parametrizes a function $f_c$ mapping the latent representation $\mathbf{z}$ into concept predictions $\hat{\mathbf{c}}=f_c(\mathbf{z})\in\mathbb{R}^{K}$, and (2) a network $f_u$ that produces representation $\mathbf{u}=f_u(\mathbf{z})\in\mathbb{R}^{d_u}$ capturing variation not explained by $\hat{\mathbf{c}}$. We refer to $\hat{\mathbf{c}}$ and $\mathbf{u}$ as the known and unknown factors, respectively.

\textbf{Decoder with concept skip connections} The decoder is a network $D(\cdot)$ that takes the concatenation of the known and unknown factors as input ($\mathbf{h}_0=[\mathbf{u},\hat{\mathbf{c}}]$). At each layer $\ell > 1$, we feed the previous hidden state concatenated with the known factors:
\begin{equation}
\begin{split}
    &\mathbf{h}_1 = D_1(\mathbf{h}_0), \quad \ell=2,\dots,L, \\
    &\mathbf{h}_\ell = D_\ell\big([\mathbf{h}_{\ell-1},\,\hat{\mathbf{c}}]\big), \quad \hat{\mathbf{x}} = D_{\text{final}}(\mathbf{h}_L).
\end{split}
\label{eq:layer}
\end{equation}
Including $\hat{\mathbf{c}}$ at every layer enforces a stronger and more systematic conditioning on the concepts. This leads to better performance than only adding $\hat{\mathbf{c}}$ in the input $\mathbf{h}_0$, especially when dealing with noisy concept annotations, as demonstrated empirically in Section~\ref{sec:scCBGM_vs_CBGM}.

\subsection{Loss functions and training}
\label{sub_section:losses}

We train the model by minimizing a loss derived from the $\beta$-VAE evidence lower bound (ELBO), \emph{i.e.,} a reconstruction term, a concept supervision term, and a $\beta$-scaled KL regularization:
\begin{equation}
\begin{split}
\mathcal{L}_{\text{VAE}}
&= -\mathbb{E}_{q(\mathbf{z}|\mathbf{x})}\!\left[\log p(\mathbf{x}|\mathbf{z})\right]
+ \lambda_c\,\mathcal{L}_{\text{concept}} \\
&\quad + \beta\,\mathrm{KL}\!\left(q(\mathbf{z}\mid \mathbf{x})\,\|\,p(\mathbf{z})\right).
\end{split}
\label{eq:loss}
\end{equation}
where $\mathcal{L}_{\text{concept}}$ is a surrogate for $-\mathbb{E}_{q(\mathbf{z}\mid\mathbf{x})}[\log p(\mathbf{c}\mid\mathbf{z})]$, as described next.

\textbf{Concept loss $\mathcal{L}_{\text{concept}}$}
Biological data often has missing concept annotations. To accommodate this, we define a binary mask $\mathbf{m}\in\{0,1\}^{K}$ indicating which of the $K$ total concepts are present for a given sample. We partition the concepts into binary and continuous sets, indexed by $K_{\text{bin}}$ and $K_{\text{cont}}$ respectively. The total concept loss is the normalized sum of the \textbf{Binary Cross Entropy (BCE)} loss for binary concepts and the \textbf{Mean Squared Error (MSE)} loss for continuous concepts:
\begin{equation}
\begin{split}
\mathcal{L}_{\text{concept}}
= \frac{1}{\sum_k m_k} \Bigg( &\sum_{k \in K_{\text{bin}}} m_k \, \mathcal{L}_{\text{BCE}}(\hat{c}_k, c_k) \\
&+ \sum_{k \in K_{\text{cont}}} m_k \, \mathcal{L}_{\text{MSE}}(\hat{c}_k, c_k) \Bigg).
\end{split}
\label{eq:concept_loss}
\end{equation}

where $\hat{c}_k$ is the prediction for the ground-truth concept $c_k$.

\textbf{Cross-covariance penalty}
To enforce independence between latent factors $U$ and concepts $C$, we penalize their cross-covariance. Unlike cosine-similarity losses, this approach accommodates arbitrary embedding dimensions ($K \neq d_u$). For a minibatch of size $B$, the penalty is the squared Frobenius norm of the empirical cross-covariance between predicted concepts $\hat{C} \in \mathbb{R}^{B \times K}$ and unknown factors $U \in \mathbb{R}^{B \times d_u}$:
\begin{equation}
\mathcal{L}_{\text{cc}} = \left\| \frac{1}{B-1}(\hat{C} - \mathbf{1}\boldsymbol{\mu}_{\hat{c}}^{\top})^{\top}(U - \mathbf{1}\boldsymbol{\mu}_u^{\top}) \right\|_F^2
\end{equation}
where $\boldsymbol{\mu}_{\hat{c}}$ and $\boldsymbol{\mu}_u$ are the column mean vectors of $\hat{C}$ and $U$, respectively. Scaling is enforced implicitly for $\hat{C}$ via the \textbf{Concept Loss}, and for $U$ via a ReLU non-linearity. This prevents degenerate solutions where the unknown factors take on near-zero values. This pushes the model to encode non-concept specific sources of variation (e.g., batch, cell type) in $\mathbf{u}$, as theoretically analyzed in Appendix~\ref{app:decoupling_proof} and empirically demonstrated in Appendix \ref{app:concept_leakage}.

\textbf{Final loss}

The complete training objective is obtained by combining $\mathcal{L}_\text{VAE}$ and $\mathcal{L}_{\text{cc}}$:
\begin{equation}
\label{eq:full_loss}
    \mathcal{L} = \mathcal{L}_\text{VAE} +\lambda_{\text{cc}} \mathcal{L}_{\text{cc}}
\end{equation}
To further motivate our choice of model architecture and loss function, we provide proofs for the consistency of the counterfactual estimator learnt by minimizing $\mathcal{L}$ in Appendix~\ref{app:consistency}.


\textbf{Differences with standard concept bottleneck generative models.}
Our approach differs from standard CBGMs~\citep{cbgm} in three important ways: (i) we use a standard concept bottleneck model \citep{koh2020concept}, where the bottleneck maps inputs to scalar concept predictions that directly correspond to interpretable concepts, rather than a concept embedding model \citep{espinosa2022concept} where the bottleneck maps each concept to high-dimensional representations;(ii) we add skip connections to the decoder to maintain persistent concept conditioning; and (iii) we use a cross-covariance loss \cite{hsic_2009,hsic_2020} instead of the cosine similarity loss for orthogonality. Changes (i) and (ii) improve robustness to noisy concept annotations, while change (iii) removes dimensional constraints on embeddings that are enforced by CBGMs. We show empirical performance gains compared to existing CBGMs in Section~\ref{sec:scCBGM_vs_CBGM}.

\subsection{Counterfactual prediction}
Our model enables counterfactual predictions by implementing the standard abduction-action-prediction framework \citep{pearl2009causality} with our generative architecture. (\textbf{1-Abduction Step}) Given an observed cell $\mathbf{x}$, we first encode it with our encoder to obtain $\mathbf{z} = E_{\mu}(\mathbf{x})$, and decompose it into concepts $\hat{\mathbf{c}}$ and unknown factors $\mathbf{u}$. (\textbf{2-Action Step}) We edit the desired concept values, to obtain a modified concept vector $\hat{\mathbf{c}}'$. We edit the vector $\hat{\mathbf{c}}$ by assigning the dimensions $k$ for which the $\hat{\mathbf{c}}$ and $\mathbf{c}'$ differ, and leave the other dimensions untouched. That is, $ \hat{\mathbf{c}}' \equiv \hat{\mathbf{c}}_k\leftarrow \mathbf{c}'_k, \forall k: \mathbf{c}_k' \neq \mathbf{c}_k$. (\textbf{3-Prediction Step}) Lastly, we decode the modified representation $[\mathbf{u}, \hat{\mathbf{c}}']$ to produce the counterfactual prediction $\hat{\mathbf{x}}' = D([\mathbf{u},\hat{\mathbf{c}}'])$. Figure~\ref{fig:scCBGM_arch} shows an illustration of the abduction-action-prediction process.

\subsection{Enabling fine-grained control of generative models with \method~conditioning}
\label{sub_section:scCBMFM}

While \method~is a standalone generative model, it can also enhance other generative models' architectures to enable more fine-grained control. By conditioning the generative process on the \method~embeddings, we can combine the generation quality of state-of-the-art generative models with the interpretability and controllability of our method. In this section, we describe the procedure for flow matching (FM) models~\citep{lipman2022flow,liu2022flow,lipman2024flow}; the extension to diffusion models is analogous. 



\textbf{\method-guided FM}
Given a trained \method-VAE, we train a flow matching model by directly conditioning on the \method~embeddings. For a cell $\mathbf{x}$, we extract its concept embeddings $\hat{\mathbf{c}} = f_c(\mathbf{z})$ and unknown representations $\mathbf{u} = f_u(\mathbf{z})$ where $\mathbf{z} = E_{\mu}(\mathbf{x})$. We use these vectors to learn a conditional vector field $v_\theta(\mathbf{x}_t, t; [\mathbf{u}, \hat{\mathbf{c}}])$ by minimizing the conditional flow matching loss:
\begin{equation*}
\mathcal{L}_{\text{FM}} = \displaystyle \mathbb{E}_{t,\mathbf{\zeta}\sim q,\mathbf{x}_t \sim p_t(\cdot |\mathbf{z})} \Big[\|v_\theta(\mathbf{x}_t, t; [\mathbf{u}, \hat{\mathbf{c}}]) \\
- v^*(\mathbf{x}_t, t\mid\mathbf{\zeta})\|_2^2\Big]
\end{equation*}
Here, $v^*$ is the target conditional velocity field that generates the conditional probability paths $p_t(\cdot\mid\mathbf{\zeta})$ for $t \in [0,1]$, with $p_0(\cdot|\zeta)$ and $p_1(\cdot|\zeta)$ are boundary conditional probabilities whose marginal match with a noise distribution and the data distribution at $t=0$ and $t=1$ respectively~\citep{lipman2022flow}. The learnt conditional vector field $v_\theta(\mathbf{x}_t, t; [\mathbf{u}, \hat{\mathbf{c}}])$ generates a flow $\varphi_t(\mathbf{x}, [\mathbf{u}, \hat{\mathbf{c}}])$ defined by $\frac{d}{dt}\varphi_t(\mathbf{x}, [\mathbf{u}, \hat{\mathbf{c}}]) = v_\theta(\varphi_t(\mathbf{x}_t,[\mathbf{u}, \hat{\mathbf{c}}]), t; [\mathbf{u}, \hat{\mathbf{c}}])$ and $\varphi_0(\mathbf{x}, [\mathbf{u}, \hat{\mathbf{c}}]) = \mathbf{x}$. Using the flow, one can generate counterfactuals in two ways: decoding-only and encoding-decoding.

\textbf{Decoding-only counterfactual prediction}
Given $\mathbf{x}$, we obtain $\hat{\mathbf{c}}'$ via the abduction and action steps on \method. We then predict the counterfactuals by sampling 
$\mathbf{x_0} \sim \mathcal{N}(\mathbf{0}, \mathbf{I})$ and passing it to the conditional flow $\hat{\mathbf{x}}' = \varphi_1(\mathbf{x}_0, [\mathbf{u}, \hat{\mathbf{c}}'])$. This approach is useful for generating diverse examples of cells with a specific profile. We refer to this approach as scCBGM-FM (decode).

\textbf{Encoding-decoding counterfactual prediction}
A more accurate option for counterfactual prediction relies on mapping $\mathbf{x}$ back to the noise distribution (encoding step) and then using the conditional flow from this initial condition to generate the modified cell (decoding step). Importantly, the encoding step uses a conditioning on $\hat{\mathbf{c}}$ while, the decoding uses the edited concepts $\hat{\mathbf{c}}'$. Given a starting cell $\mathbf{x}$, we \textit{encode} it using $\mathbf{x}_0 = \varphi_1^{-1}(\mathbf{x},[\mathbf{u}, \hat{\mathbf{c}}])$, then \textit{decode} it using  $\hat{\mathbf{x}}' = \varphi_1(\mathbf{x}_0, [\mathbf{u}, \hat{\mathbf{c}}'])$.  This implements Pearl's abduction-action-prediction framework~\citep{pearl2009causality, xia2025decoupled, rout2024semantic}: mapping $\mathbf{x}$ to noise $\mathbf{x}_0$ constitutes the \textit{abduction} of exogenous factors (cell identity), while the forward pass under intervention $\hat{\mathbf{c}}'$ performs the \textit{action} and \textit{prediction}~\citep{sanchez2022diffusion, wang2024taming}. We refer to this approach as scCBGM-FM (edit).


\section{Do we need a single-cell specific CBGM?}
\label{sec:scCBGM_vs_CBGM}

While concept bottleneck models have been explored in prior work, existing architectures 
do not address the unique challenges of scRNA-seq data, a modality that presents 
fundamentally different problems than well-curated domains such as images. In scRNA-seq, 
some concepts are well-defined by the experimental design (e.g., drug treatments, dosages), 
whereas others are inherently noisier (e.g., cell state, pathway activation). Noisy labels 
may arise from challenging annotation tasks, limited signal, incomplete coverage, or 
redundancy across concepts. To demonstrate that our architectural modifications address 
these challenges, we systematically compare scCBGM against standard CBGMs using synthetic 
data where ground-truth counterfactuals are available for direct evaluation.


\textbf{Synthetic Data for Evaluation} To compare model performance, we evaluate (1) cell-level counterfactual generation and (2) robustness to noisy concepts. Because real data do not permit controlled evaluation, we propose a synthetic scRNA-seq generation process based on a hierarchical overdispersed Poisson model \cite{Pan2023, arxiv_pln, Subedi2025}. In this process, gene expression is decomposed into contributions from batches, tissues, cell types, and concepts. By isolating exogenous noise, we can generate true counterfactuals for evaluating model predictions. The process also supports the injection of four types of noisy concept annotations: incorrectly annotated, irrelevant, missing, and duplicated concepts. Appendix~\ref{app:synth:process}-\ref{app:synth:example} describes the data generating process , and Appendix~\ref{app:synth:noisy_concepts} details the noisy concept types.

\begin{figure}[t]
    \centering
    \includegraphics[width=0.7\columnwidth]{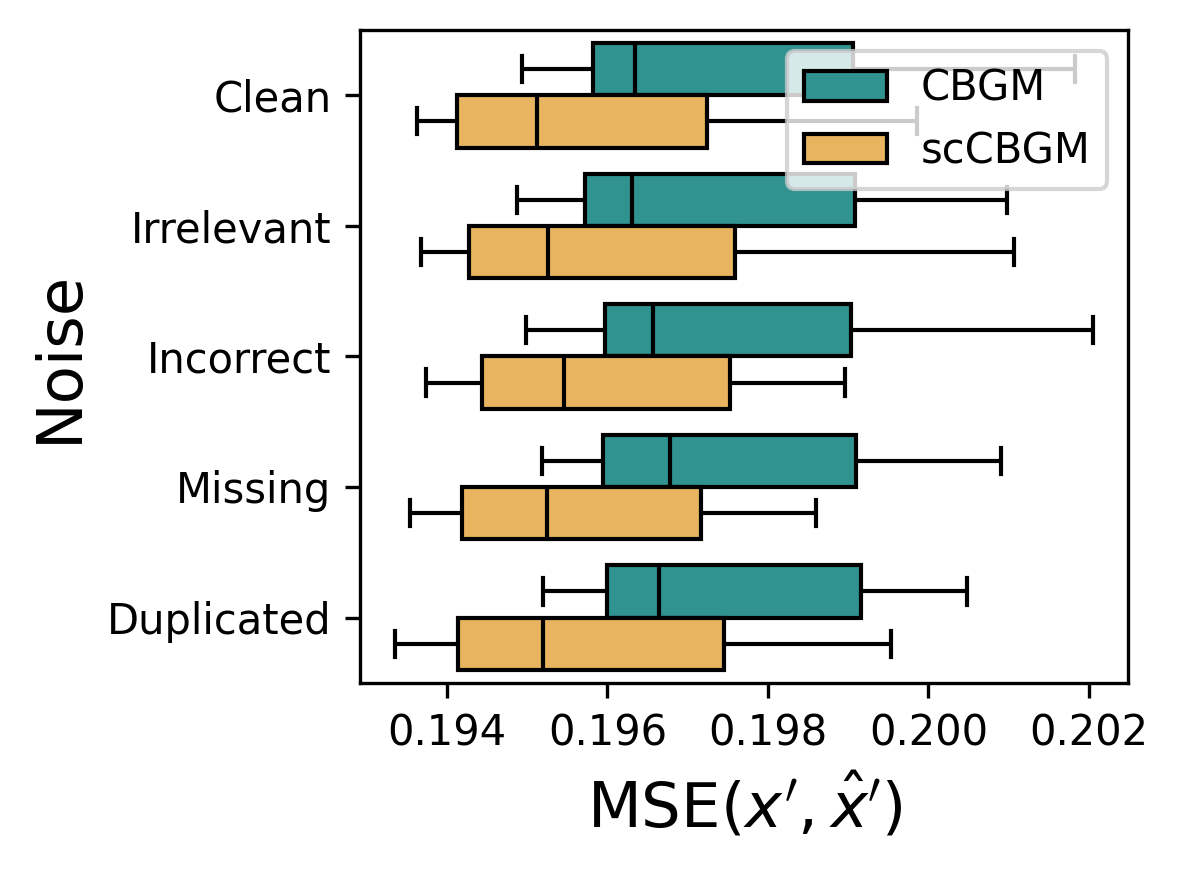}
    \caption{ \looseness=-2 Standard CBGMs vs. \method~under different concept annotations noise. 
    MSE between true and predicted counterfactuals across datasets (3), interventions (5), noise levels (3), and seeds (2).}
    \label{fig:synth_comparison}
\end{figure}

\textbf{Results} We compare standard CBGMs with \method~to demonstrate the importance of our architectural modifications for single-cell data. Experiments use three synthetic datasets (20,000 cells, 5,000 genes) varying in technical noise and concept effect size (Appendix~\ref{app:synth:synt_datasets}). For each dataset, we hold out five concept–cell type pairs, meaning the model never observes these combinations during training, thus emulating a compositional generalization task (Appendix~\ref{app:synth_split}). We evaluate robustness under four types of concept noise at two corruption levels with four random seeds. Intervention performance is measured by MSE between predicted and true counterfactuals. To ensure fairness, we conducted a hyperparameter sweep (432 configurations) on one noiseless dataset and selected the best model by intervention performance (Appendix~\ref{app:syn:hparam_sweep}). Results are presented in Figure~\ref{fig:synth_comparison}. \method~outperforms standard CBGM across all scenarios, both noisy and clean. scCBGM demonstrates greater robustness to noise, with only minimal performance degradation under different noise conditions.  Furthermore, as shown in Appendix~\ref{app:performance_by_concept}, \method~remains stable as the number of concepts increases.


%

\textbf{Ablation Study} 
To evaluate the impact of our main architectural components, we conducted an ablation study varying three factors: the decoder type (skip vs.\ direct), the concept head (concept bottleneck vs.\ concept embedding), and the orthogonality loss (cross-covariance vs.\ cosine). Experiments were performed on three synthetic datasets, each with five intervention types, two random seeds, and multiple noise settings. The results, summarized in Table~\ref{tab:ablation}, indicate that the concept bottleneck (CBM) variant is preferable to the concept embedding (CEM) counterpart on this class of data. Within the CBM model family, incorporating the cross-covariance loss further improves performance. The skip-connection decoder enhances performance only when paired with both the cross-covariance loss and the concept-bottleneck architecture. This behavior is supported by ablations on real datasets, see Appendix~\ref{app:real_ablations}.

\begin{figure*}[!ht]
  \centering
  \includegraphics[width=0.8\textwidth]{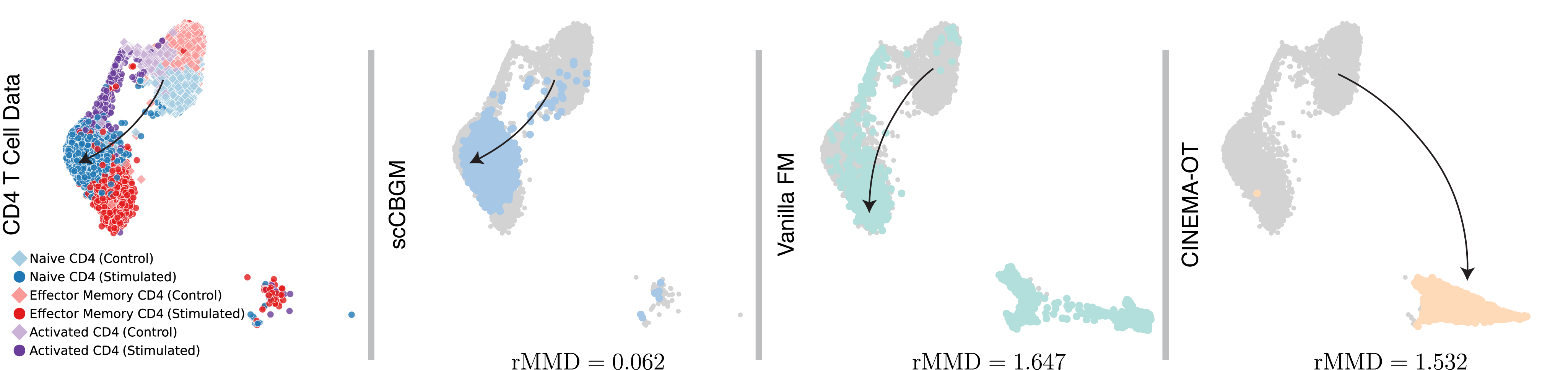}
  \caption{Counterfactual modeling predicts cellular response to perturbation. \textbf{Left:} UMAP of CD4 T-cell data from the Kang et al. data, showing Control and Stimulated cells, colored by subtype. Stimulated Naive CD4 T cells were held out during training. \textbf{Right:} Control Naive CD4 T cells are edited in silico to predict their stimulated state. scCBGM (second panel) accurately predicts the held-out stimulated Naive CD4 T cells, demonstrating superior zero-shot generalization compared to  \cellflow~ and CINEMA OT.}
    \label{fig:kang_stimulation}
\end{figure*}

\section{Related Works}
\label{sec:scCBGM_vs_exiting}
scCBGM delivers accurate control over single-cell editing by uniquely integrating decoupled and interpretable representations with counterfactual generation. Existing methods typically address these aspects in isolation: most approaches for disentangled representations of single cells do not directly support interventions, while most intervention-based models capture conditional probability distributions without enabling fine-grained editing or counterfactual generation. Below, we review key methods from both domains.

\textbf{Disentangled and interpretable representations for single cell data.} Many methods have recently proposed interpretable latent decompositions for single cell data~\citep{lopez2018deep,LDVAE,PmVAE,vega,sivae,lopez2023learning,pouyabahar2025interpretable}. Among these, a few combine interpretability with intervention prediction. For example, Celcomen~\citep{celcomen} leverages disentangled representations to model gene–gene interactions in spatial data and MichiGAN~\citep{yu2021michigan} conditions a generative model on the latent space of a $\beta-$VAE which provides limited biological interpretability. Closer to our work, scDisInFac~\citep{zhang2024scdisinfact} learns a disentangled representation of biological conditions and batch effects but assumes mutually independent factors and additive treatment mechanism in the latent space, which are unrealistic in our intended applications. biolord~\citep{biolord} incorporates interventions and use a noise injection mechanism for disentanglement. CinemaOT~\citep{CinemaOT} first factors the data into causal and spurious factors before predicting treatment effects, but is limited to mapping cells to observations in the training data, restricting its generalizability.

\looseness=-1
\textbf{Single-cell intervention prediction.}
 scRNA-seq measurements cannot be repeated on the same cell, models typically predict intervention effects at the population level. Early approaches modeled temporal dynamics between cell populations \citep{Yeo2021,WOT}, later extended to perturbation-response prediction \citep{rohbeck2025modeling,CellOT,cellflow}. Conditional generative models without explicit dynamics  have also been adapted~\citep{CFGen,scDiffusion,scGAN,ATAC-Diff}. However, these models primarily learn \emph{conditional distributions} and do not support counterfactual inference or cell-level editing. Moreover, many focus on specific perturbation classes such as gene knockouts~\citep{GEARS,wang2024modeling,littman2025gene} or chemical treatments~\citep{qi2024predicting}, whereas our approach supports diverse biological interventions.  Methods that attempt cell-level editing, such as scVIDR and scGen~\citep{scVIDR,scgen2019}, depend on strong assumptions about perturbation mechanisms (e.g., additive latent effects) and do not provide interpretable representations.

\section{Experiments}
\label{sec:experiments}

\subsection{Experiment setup and metrics}
We evaluate \method~on its ability to perform precise, fine-grained, and identity-preserving edits across three diverse real-world datasets~\citep{kang, cui2024dictionary, nault2023single}. Since exogenous noise in real-world data cannot be controlled, exact counterfactual evaluation is infeasible. To approximate this, we treat broad cell-type categories as observable concepts (\emph{e.g.}, CD4 T cells) and consider granular subtypes (\emph{e.g.}, \emph{activated} CD4 T cells) as ground-truth counterfactual distributions.

\begin{table*}[!ht]
\centering
\begin{adjustbox}{width=\tabwidth}
\begin{tabular}{lccccccc}
\toprule
\textbf{Model} & \textbf{B cells} & \makecell{\textbf{T cells} \\ \textbf{(CD4)}} & \makecell{\textbf{T cells} \\ \textbf{(CD8)}} & \makecell{\textbf{Monocytes} \\ \textbf{(FCGR3A)}} & \makecell{\textbf{Monocytes} \\ \textbf{(CD14)}} & \textbf{Dendritic} & \textbf{NK cells} \\
\midrule
scCBGM & \thirdbest{0.112 $\pm$ 0.028} & \thirdbest{0.169 $\pm$ 0.086} & \thirdbest{0.171 $\pm$ 0.012} & \thirdbest{1.845 $\pm$ 1.776} & 1.309 $\pm$ 0.231 & \thirdbest{0.375 $\pm$ 0.041} & 1.167 $\pm$ 0.271 \\
scCBGM-FM (decode) & \secondbest{0.106 $\pm$ 0.032} & \secondbest{0.162 $\pm$ 0.073} & \secondbest{0.138 $\pm$ 0.040} & \bestres{\textbf{1.141 $\pm$ 1.078}} & 1.334 $\pm$ 0.296 & \secondbest{0.288 $\pm$ 0.061} & \thirdbest{0.093 $\pm$ 0.016} \\
scCBGM-FM (edit)  & \bestres{\textbf{0.093 $\pm$ 0.031}} & \bestres{\textbf{0.156 $\pm$ 0.066}} & \bestres{\textbf{0.119 $\pm$ 0.019}} & \secondbest{1.206 $\pm$ 1.167} & \thirdbest{1.171 $\pm$ 0.390} & \bestres{\textbf{0.231 $\pm$ 0.042}} & \secondbest{0.084 $\pm$ 0.003} \\
\midrule
CBGM & 0.902 $\pm$ 0.058 & 2.228 $\pm$ 0.079 & 1.914 $\pm$ 0.049 & 9.514 $\pm$ 0.290 & 7.206 $\pm$ 0.196 & 1.503 $\pm$ 0.039 & 1.270 $\pm$ 0.140 \\
\midrule
\cellflow (decode) & 0.926 $\pm$ 0.067 & 1.037 $\pm$ 0.197 & 0.912 $\pm$ 0.079 & 4.389 $\pm$ 4.029 & \secondbest{0.549 $\pm$ 0.240} & 2.250 $\pm$ 1.019 & 0.099 $\pm$ 0.015 \\
\cellflow (edit) & 0.492 $\pm$ 0.174 & 0.487 $\pm$ 0.098 & 0.364 $\pm$ 0.078 & 2.188 $\pm$ 1.848 & \bestres{\textbf{0.394 $\pm$ 0.095}} & 1.307 $\pm$ 0.145 & \bestres{\textbf{0.082 $\pm$ 0.021}} \\
\midrule
biolord & 2.622 $\pm$ 0.128 & 5.514 $\pm$ 1.856 & 4.829 $\pm$ 0.680 & 12.123 $\pm$ 11.118 & 2.350 $\pm$ 0.385 & 3.904 $\pm$ 1.480 & 2.355 $\pm$ 0.006 \\
biolord-FM & 1.163 $\pm$ 0.003 & 2.309 $\pm$ 0.003 & 1.762 $\pm$ 0.008 & 11.749 $\pm$ 0.054 & 10.091 $\pm$ 0.071 & 2.191 $\pm$ 0.010 & 1.051 $\pm$ 0.000 \\
CINEMA-OT & 2.259 $\pm$ 0.276 & 7.042 $\pm$ 2.520 & 5.362 $\pm$ 0.562 & 11.193 $\pm$ 9.807 & 10.185 $\pm$ 5.194 & 1.367 $\pm$ 0.333 & 3.707 $\pm$ 0.008 \\
scGen & 1.830 $\pm$ 0.397 & 5.117 $\pm$ 2.398 & 4.748 $\pm$ 0.072 & 7.227 $\pm$ 6.673 & 5.748 $\pm$ 0.349 & 1.133 $\pm$ 0.587 & 2.436 $\pm$ 0.072 \\
CVAE & 0.620 $\pm$ 0.008 & 1.374 $\pm$ 0.008 & 1.043 $\pm$ 0.027 & 9.055 $\pm$ 0.040 & 6.446 $\pm$ 0.067 & 1.134 $\pm$ 0.021 & 0.706 $\pm$ 0.010 \\
CVAE-FM (decode) & 0.525 $\pm$ 0.005 & 1.115 $\pm$ 0.010 & 0.901 $\pm$ 0.007 & 8.379 $\pm$ 0.045 & 6.051 $\pm$ 0.076 & 0.982 $\pm$ 0.012 & 0.604 $\pm$ 0.008 \\
CVAE-FM (edit) & 0.512 $\pm$ 0.003 & 1.080 $\pm$ 0.007 & 0.847 $\pm$ 0.005 & 8.442 $\pm$ 0.066 & 5.930 $\pm$ 0.067 & 0.987 $\pm$ 0.013 & 0.567 $\pm$ 0.004 \\
\bottomrule
\end{tabular}
\end{adjustbox}
\caption{rMMD  per cell group for different models (\bestres{\textbf{best}}, \secondbest{$2^{nd}$ best}, and \thirdbest{$3^{rd}$ best} bolded) in the~\citet{kang} dataset.}
\label{tab:mmd_stats_kang}
\vspace{-7pt}
\end{table*}

Let $\mathbf{s}$ denote a subtype. We define $\hat{p}_{\mathbf{s},\mathbf{c}}$ as the empirical distribution of cells with subtype $\mathbf{s}$ under concept $\mathbf{c}$. The corresponding ground-truth counterfactual distribution is $\hat{p}_{\mathbf{s},\mathbf{c}'}$, i.e., the distribution of cells with the same subtype but different observable concept values. Since the subtype information is hidden from the model, it can be considered part of the exogenous noise $U$, making this construction a principled proxy for counterfactual evaluation.

Given a counterfactual predictor $f(\mathbf{x},\mathbf{c}')$, we require the distribution induced by applying $f$ to samples from $\hat{p}_{\mathbf{s},\mathbf{c}}$ to align with $\hat{p}_{\mathbf{s},\mathbf{c}'}$. We assess this alignment using the Maximum Mean Discrepancy ratio (rMMD):
\begin{equation}
\textrm{rMMD} = \frac{\textrm{MMD}((f_{\mathbf{c}'})_{\#}\hat{p}_{s,\mathbf{c}},\hat{p}_{s,\mathbf{c}'})}{\min_{\mathbf{c}}\textrm{MMD}(\hat{p}_{\mathbf{c}},\hat{p}_{s,\mathbf{c}'})}.
\end{equation}
Here, $(f_{\mathbf{c}'})\#\hat{p}_{\mathbf{s},\mathbf{c}}$ denotes the distribution induced by transforming $\hat{p}_{\mathbf{s},\mathbf{c}}$ with $f$, while $\hat{p}_{\mathbf{c}}$ is the empirical distribution of cells under concept $\mathbf{c}$. The denominator minimizes MMD over all observed concepts in the training data, normalizing for baseline misspecification common in single-cell perturbation modeling~\citep{wenteler2024perteval}. Intuitively, the numerator measures the discrepancy between predicted and observed counterfactuals at the subtype level, while the denominator reflects the similarity between the target population and its closest match in the training data. An rMMD below 1 indicates success, i.e., the model outperforms the trivial baseline of mapping to the most similar existing population. Analogously, we also evaluate the Frechet Inception Distance ratio and the Sinkhorn Divergence ratio, which are presented in Appendices~\ref{app:metrics} and~\ref{app:additional_benchmark}.

\textbf{Baselines.} \method~is compared against CBGM, a standard concept bottleneck model~\citep{cbgm}, as well as single-cell editing baselines: CINEMA-OT~\citep{CinemaOT}, biolord~\citep{biolord}, scGen~\citep{scgen2019}, and a standard Conditional Flow Matching model, representing state-of-the-art generative modeling in single-cell analysis (akin to the framework used in CellFlow~\citet{cellflow}). We refer to this baseline as \emph{Vanilla-FM}. A standard Conditional VAE (CVAE), a common approach in batch correction tasks which achieve disentanglement by implicitly regressing out conditioning variables via the latent, is also included. Benchmarks are further extended to \method~combined with flow matching models (\method-FM). To verify that \method-FM's performance stems from the structured \method~latent space rather than solely the generative capabilities of Flow Matching, CVAE-FM and biolord-FM are also evaluated. Finally, for Vanilla-FM, CVAE-FM and \method-FM, two counterfactual strategies—\emph{edit} and \emph{decode}—are evaluated as detailed in Section~\ref{sub_section:scCBMFM}. While all baselines successfully regenerated the \citet{kang} dataset, scCBGM and scCBGM-FM demonstrated superior accuracy in reconstructing cell identity (Figure \ref{fig:decode_accuracy}). Complete data processing and model implementation details are provided in Appendix~\ref{app:experiment_details}. 

\subsection{Benchmarking single-cell editing}

\textbf{Counter-factual modeling predicts cellular response to perturbation}
We first benchmarked \method~on predicting treatment responses across 14 immune cell subtypes using the~\citet{kang} dataset, which contains Peripheral Blood Mononuclear Cells (PBMCs) with and without IFN-$\beta$ stimulation. For each subtype, the stimulation response was treated as an independent benchmark across all models. The model’s concepts included 7 high-level cell types and a binary indicator for IFN-$\beta$ stimulation.
Table~\ref{tab:mmd_stats_kang} shows that \method-based edits accurately predict the stimulation response while preserving subtype identity,  significantly outperforming existing conditional generation methods in counterfactual accuracy on 5 out of the 7 experiments. Additional metrics can be found in Appendix~\ref{app:additional_benchmark}. While Table~\ref{tab:mmd_stats_kang} reports average rMMD across all CD4 T cell subtypes, Figure \ref{fig:kang_stimulation} illustrates this zero-shot generalization capability for a specific subtype. Here, we held out stimulated Naive CD4 T cells during training and evaluated our model's ability to predict their response from control Naive CD4 T cells. We find that scCBGM accurately predicts the held-out stimulated Naive CD4 T cells, demonstrating superior zero-shot generalization compared to baseline methods. 

\begin{table*}[!ht]
\centering
\begin{adjustbox}{width=\tabwidth}
\begin{tabular}{lcccc||ccc}
\toprule
 & \multicolumn{4}{c}{\citet{cui2024dictionary} dataset} & \multicolumn{3}{c}{\citet{nault2023single} dataset}\\
 \hline
\textbf{Model} & \makecell{\textbf{T cells}\\\textbf{(Gamma-delta)}} & \makecell{\textbf{T cells}\\\textbf{(CD4)}}  &  \makecell{\textbf{T cells}\\\textbf{(CD8)}} &\makecell{\textbf{Dendritic}\\\textbf{(Langerhans)}} & \makecell{\textbf{Hepatocytes} \\ \textbf{(Centrilobular)}} & \textbf{Stellate cell} &  \makecell{\textbf{Hepatocytes} \\ \textbf{(Periportal)}} \\
\midrule
scCBGM & 0.304 $\pm$ 0.024 & 0.062 $\pm$ 0.014  & 0.295 $\pm$ 0.038 & \thirdbest{0.110 $\pm$ 0.007} &  0.627 $\pm$ 0.011  & 0.895 $\pm$ 0.548  & \thirdbest{0.752 $\pm$ 0.046} \\
scCBGM-FM (decode)& \bestres{\textbf{0.238 $\pm$ 0.032}} & 0.046 $\pm$ 0.011 & 0.259 $\pm$ 0.042  & \secondbest{0.104 $\pm$ 0.005}  & \secondbest{0.608 $\pm$ 0.029}  & \thirdbest{0.861 $\pm$ 0.440}  & \secondbest{0.719 $\pm$ 0.056} \\
scCBGM-FM (edit) & \secondbest{0.278 $\pm$ 0.017} & \bestres{\textbf{0.034 $\pm$ 0.012}} & 0.254 $\pm$ 0.022 & \bestres{\textbf{0.097 $\pm$ 0.003}}  & \thirdbest{0.617 $\pm$ 0.001} & \bestres{\textbf{0.844 $\pm$ 0.451}}  & \bestres{\textbf{0.708 $\pm$ 0.056}} \\
\midrule
CBGM & 0.824 $\pm$ 0.151 & 0.401 $\pm$ 0.113 & 0.389 $\pm$ 0.054 & 0.139 $\pm$ 0.009 & 1.470 $\pm$ 0.695 & 0.979 $\pm$ 0.014 & 1.343 $\pm$ 0.466 \\
\midrule
\cellflow (decode) & 1.822 $\pm$ 0.264 & 1.647 $\pm$ 0.252 & 1.721 $\pm$ 0.232 & 1.609 $\pm$ 0.110  & 1.458 $\pm$ 0.690  & 10.481 $\pm$ 7.058  & 1.185 $\pm$ 0.484 \\
\cellflow (edit) & 0.540 $\pm$ 0.116 & 0.164 $\pm$ 0.037 & \bestres{\textbf{0.210 $\pm$ 0.062}}  & 0.368 $\pm$ 0.058 & \bestres{\textbf{0.442 $\pm$ 0.088}}  & 7.235 $\pm$ 3.239  & 1.030 $\pm$ 0.317 \\
\midrule
biolord & 2.848 $\pm$ 0.005 & 2.308 $\pm$ 0.006 & 1.934 $\pm$ 0.004 & 2.751 $\pm$ 0.003 & /  & 44.340 $\pm$ 32.531  & 4.707 $\pm$ 2.031 \\
CINEMA-OT & 2.033 $\pm$ 0.004 & 1.532 $\pm$ 0.003  & 1.244 $\pm$ 0.004 &  1.504 $\pm$ 0.002  & 4.667 $\pm$ 1.598  & 45.570 $\pm$ 33.289  & 5.295 $\pm$ 1.363 \\
scGen & 2.069 $\pm$ 0.074 & 2.849 $\pm$ 0.134 & 1.506 $\pm$ 0.083 &  0.388 $\pm$ 0.013 & 2.214 $\pm$ 0.642 & 12.067 $\pm$ 9.021 & 2.389 $\pm$ 0.798 \\
CVAE & 0.300 $\pm$ 0.007 & 0.049 $\pm$ 0.006 & 0.238 $\pm$ 0.012 &  0.127 $\pm$ 0.003 & 1.842 $\pm$ 0.809  & 1.349 $\pm$ 0.281 & 1.554 $\pm$ 0.433 \\
CVAE-FM (decode) & 0.284 $\pm$ 0.006 & \thirdbest{0.043 $\pm$ 0.006} & \secondbest{0.224 $\pm$ 0.006} &  0.121 $\pm$ 0.003 & 1.333 $\pm$ 0.585  & 0.882 $\pm$ 0.117  & 1.143 $\pm$ 0.263 \\
CVAE-FM (edit) & \thirdbest{0.292 $\pm$ 0.003} & \secondbest{0.036 $\pm$ 0.010} & \thirdbest{0.232 $\pm$ 0.011} & 0.117 $\pm$ 0.001 & 1.337 $\pm$ 0.569 & \secondbest{0.853 $\pm$ 0.171} & 1.131 $\pm$ 0.253 \\
\bottomrule
\end{tabular}
\end{adjustbox}
\caption{rMMD per cell group for different models (\bestres{\textbf{best}}, \secondbest{$2^{nd}$ best}, and \thirdbest{$3^{rd}$best} bolded)  in the~\citet{cui2024dictionary} and~\citet{nault2023single}.}
\label{tab:mmd_stats_cui}
\end{table*}

The~\citet{cui2024dictionary} immune dictionary dataset measures the response of 17 immune cell subtypes across an expansive panel of 86 cytokine-based stimulations. This setting presents a more challenging scenario, requiring the model to learn distinct responses for each perturbation applied to different cell types. Concepts here included both broad cell type and stimulation identity. The counterfactual task was to predict the response of control cells to a particular stimulation, which the model had not encountered in combination with that specific cell subtype during training. We restricted our evaluation to four cell-type–cytokine pairs previously identified by \citep{cui2024dictionary} to induce a significant transcriptional shifts relative to controls. As reported in Table~\ref{tab:mmd_stats_cui} (left), \method~successfully predicts accurate cellular responses for these unseen combinations. 

\textbf{Modeling continuous dose responses and hypothesis generation} To assess whether \method~can model continuous responses, we used the~\citet{nault2023single} dataset, which measures liver responses to varying dosages of TCDD, a toxic dioxin compound. In this setting, concepts consisted of broad cell type and treatment dosage. The counterfactual task was to predict gene expression profiles for intermediate dosages, given that high and low dosages of the test subtypes were withheld during training. As shown in Table~\ref{tab:mmd_stats_cui} (right), \method~generally outperforms other methods, thereby helping to fill experimental gaps while supporting mechanistic hypothesis generation. Results on all available cell types are given in Appendix~\ref{app:liver_results}.

\begin{figure}[!htbp]
\centering
\includegraphics[width=1\columnwidth]
{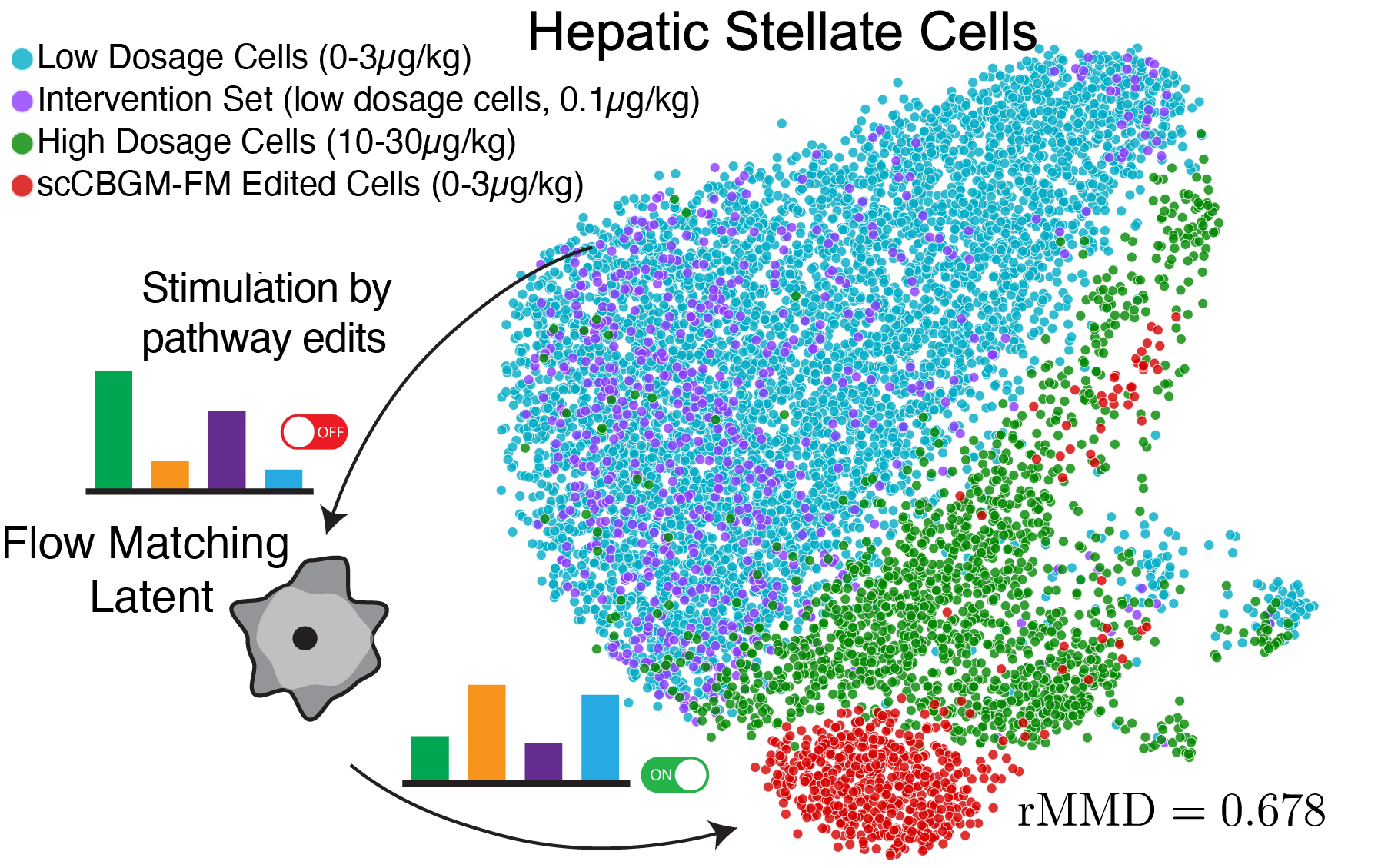}
\caption{\method~enables interpretable control of cells to enhance response to stimulation. Stellate cells with low dosage of TCDD showed limited treatment response, while cells with high dosage showed a clear response. By editing control cells' pathway activity profile, cells become more sensitive to TCDD, showing a treatment response similar to cells with higher TCDD dosage.}
\label{fig:pathway_pertubation}

\end{figure}

\noindent \textbf{Benchmarking on synthetic data} We benchmarked methods on the three synthetic datasets from Section~\ref{sec:scCBGM_vs_CBGM}, where access to true counterfactuals enables fine-grained cell-level rather than population-level evaluation. Flow matching models showed clear advantage, with scCBGM-FM outperforming all others across datasets  (Appendix, Table~\ref{tab:syn_benchmark_sota}).

\textbf{\method~boosts performance of flow matching models.} As discussed in Section~\ref{sub_section:scCBMFM}, \method~can be combined with generative models to harness their established generative capabilities. Tables~\ref{tab:mmd_stats_kang} and~\ref{tab:mmd_stats_cui} compare \method-FM directly with \cellflow~(\emph{i.e} our implementation of CellFlow). Whereas \cellflow~conditions the vector field only on observed concepts, allowing conditional but not fully counterfactual distributions, \method-FM conditions on the \method~embedding, enabling richer counterfactual inference. In our experiments, \method-FM improved over or matches vanilla-FM in 12/14 of the cell types across all datasets. 
We also note that the \emph{edit} procedure is generally much more effective than the \emph{decode} procedure. More broadly, the two components play complementary roles: base \method~provides the structured, disentangled latent space that enables interpretable concept-level editing, whereas \method-FM (edit) leverages this representation for higher-fidelity generation. The reported gains therefore reflect the framework as a whole rather than either component in isolation.

\subsection{Case study: controlled single-cell editing for enhanced drug response}


Beyond cell-editing, we show that \method~offers a framework for guiding cells toward a desired biological state in an interpretable manner. In the original~\citet{nault2023single} dataset, stellate cells exhibited only a moderate response to TCDD. With \method, we can identify biological concepts that appear to be crucial for mediating the treatment effect in this specific cell type. To this end, we trained the model with gene regulation pathways extracted from a reference signaling pathway database~\citep{badia2022decoupler, schubert2018perturbation}, represented as soft concepts (\emph{i.e.}, continuous values). By comparing control and treated stellate cells, we uncovered key pathways that were differentially regulated. We then applied \method~to edit control cells by jointly activating these pathways while simulating TCDD treatment. As illustrated in Figures ~\ref{fig:pathway_pertubation}, ~\ref{fig:pathway_marker_genes} \& ~\ref{fig:umap_gene_trends}, the resulting cell population after editing appeared similar to the populations which responded to the treatment, in both rMMD and gene-expression changes. That is despite the model not being trained on any treatment data, suggesting a potential mechanistic route for mediating treatment response. These results highlight the utility of \method~for testing mechanistic hypotheses and exploring strategies to overcome cellular resistance.



\section{Discussion and conclusion}

We introduced \method, a framework for interpretable and precise single-cell counterfactual editing. Our experiments demonstrate improved accuracy, flexibility, and robustness over existing methods, including under noisy annotations.

\textbf{Limitations.} Our evaluation approach involves assessing counterfactual methods on real single-cell data: true cell-specific counterfactuals cannot be observed, as this would require measuring the same cell under different conditions simultaneously. For real data, we therefore rely on population-level metrics as the best available approximation for validating cell-level edits. While this approach is standard in the field, it introduces a potential pitfall: population-level accuracy does not guarantee that individual cell predictions preserve biological realism or cell identity at the single-cell level. Our synthetic data evaluation addresses this limitation by providing access to ground-truth counterfactuals under controlled conditions. A further limitation concerns generalization scope: scCBGM generalizes to unseen combinations of known concepts but not to entirely new concepts absent from training, as each concept must be explicitly learned. However, this annotation bottleneck is partially mitigated by the fact that concepts can be learned in a semi-supervised manner, reducing the need for fully labeled data. Methods relying on a pretrained concept encoder (e.g., biolord) can in principle accept novel concept embeddings at inference, reflecting a design tradeoff rather than a strict advantage. Extending scCBGM to novel concepts, e.g. via Label-Free CBMs \cite{oikarinen2023label}, is an interesting direction for future work.




\section*{Impact Statement}
This work enables researchers to navigate combinatorial spaces of cellular responses that are experimentally infeasible to map, accelerating therapeutic design. However, high-fidelity generative models carry ethical risks, such as the potential for fabricating biological data. Furthermore, an over-reliance on simulated counterfactuals without rigorous experimental validation could lead to false therapeutic leads, wasting critical resources.

\bibliography{references}
\bibliographystyle{icml2026}

\newpage
\appendix
\onecolumn



\part*{Appendix}
\section{Additional details on Singe-Cell Concept Bottleneck Generative Models}
\subsection{A very short primer on counterfactuals}
\label{app:counterfactuals}
We posit a structural causal model (SCM) 
$M = (G, F, P(U))$ with a directed acyclic graph $G$, 
endogenous variables $V$, exogenous noise $U$, 
and assignments $F = \{ v_i \leftarrow f_i(\mathrm{pa}_i, u_i) \}$. 

Intuitively, a counterfactual query asks: 
\emph{What would an observed variable $X$ have been, 
had we set a parent variable to a different value, 
while keeping everything else (including latent background $U$) fixed?}

Formally, counterfactuals are generated by applying the Pearlian
$\mathrm{do}$-operator within an SCM, and their values 
depend on both the structural assignments and the realization of $U$.

\paragraph{Setup in this paper.}
Let $X$ denote a cell’s RNA-seq counts and $C$ a concept variable 
we wish to intervene on. Let $Z$ collect other endogenous covariates 
(e.g. other concepts). Then, we can abstract the data generating process as:
\begin{align}
Z &\leftarrow f_Z(\mathrm{pa}_Z, U_Z), \\
C &\leftarrow f_C(\mathrm{pa}_C, U_C), \\
X &\leftarrow f_X(Z, C, U_X).
\end{align}

\paragraph{Counterfactual of $X$ under an action on $C$.}
For an observed cell with concept $C=c$, $X_{C=c} = x$, we write the counterfactual value of $X$ if $C$ had been set to $c'$ as
\begin{align}
X^{\mathrm{cf}} \;=\; X_{\mathrm{do}(C:=c')}(U),
\end{align}
defined by the standard abduction--action--prediction steps:
\begin{enumerate}
    \item \textbf{Abduction:} infer the posterior 
    $p(u,z \mid X_C = x)$ over latent variables;
    \item \textbf{Action:} replace 
    $C \leftarrow f_C(\cdot)$ by $C := c'$;
    \item \textbf{Prediction:} compute 
    $X^{\mathrm{cf}} = f_X(z, c', u_X)$ 
    with $(z,u)$ drawn from the abducted posterior.
\end{enumerate}

Given $X_{C=c} = x$, the probability distribution of the counterfactual is computed as follows:
\begin{align}
p(x^{\mathrm{cf}} \mid X_{C=c}=x,\mathrm{do}(C:=c')) 
= \int \mathbb{I}[ f_X(z,c',u) = x^{\mathrm{cf}}], 
   p(z,u \mid X_{C=c}=x)\, dz\, du .
\end{align}

\paragraph{Ground-truth counterfactuals obtained from our synthetic data generating process}

When the data-generating process is fully known and simulatable, 
one can hold the exogenous noise $U$ fixed and generate two outcomes 
with different concept values $C=c$ and $C=c'$. This yields
\begin{align}
x &= f_X(z, c, u_X), \\
x^{\mathrm{cf}} &= f_X(z, c', u_X),
\end{align}
for the same realization of $(z,u_X)$. Such pairs $(x, x^{\mathrm{cf}})$ 
constitute \emph{true counterfactual examples} and can serve as ground 
truth for evaluating predictive methods.


\color{black}

\section{Consistency of our counterfactual predictor}
\label{app:consistency}
In this Section, we prove the consistency of our counterfactual estimator. That is, in the infinite data regime, our estimator converges to the expected counterfactual outcome defined in \Eqref{eq:target}, as the following results shows.

\begin{proposition}[Consistency of \method counterfactuals]

Let $\hat\theta_n \in \arg\min_{\theta} \widehat L_n(\theta)$ be any empirical minimizer of the training objective in \Eqref{eq:full_loss} on $n$ i.i.d.\ samples ($\mathbf{x},\mathbf{c}$). For $\mathbf{x}$ fixed and any counterfactual edit $\mathbf{c}'$ in the model's support, we define the learned counterfactual-mean predictor as
\begin{equation}
    g_{\theta}(\mathbf{x},\mathbf{c}') := \mathbb{E}_{\mathbf{z}\sim q_{\theta}(\mathbf{z}\mid \mathbf{x})} \left[\,D_{\theta} \big([f_{u,\theta}(\mathbf{z}),\mathbf{c}']\big)\right].
\end{equation}
Assuming:
\begin{enumerate}[label=(A\arabic*)]
\item (\emph{Data-generating SCM \& exogeneity}) $U \perp C$ with $X = f_X(C,U)$ and the target $\mu_{x,c'} := \mathbb{E}[f_X(\mathbf{c}',U)\mid X=\mathbf{x}]$ well-defined (exists and is finite).
\item (\emph{Realizability}) There exists $\theta^\star$ such that the pushforward $\big(D_{\theta^\star}([f_{u,\theta^\star}(\mathbf{z}),\mathbf{c}])\big)_{\#}q_{\theta^\star}(\mathbf{z}\mid \mathbf{x})  \equiv f_X(\mathbf{c},U)_{\#}P(U|X=\mathbf{x})$, for all $\mathbf{c}\sim P(C)$.
\item (\emph{Regularity}) The parameter space is compact, The neural networks are bounded and Lipschitz in parameters and inputs. $\widehat L_n$ admits near-global minimizers.
\item (\emph{Uniform convergence}) $\sup_{\theta}\big|\widehat L_n(\theta) - L(\theta)\big|\xrightarrow{p}0$, where $L(\theta) = \mathbb{E}_{X,C}[\widehat L_n(\theta)]$
\item (\emph{Support/positivity}) $\mathbf{c}'$ lies in the support of the training distribution of concepts (and in the support of the model).
\end{enumerate}

We have:
\begin{equation}
    g_{\hat\theta_n}(\mathbf{x},\mathbf{c}') \xrightarrow{p} \mu_{x,c'}.
\end{equation}
That is, the learned counterfactual \emph{mean} converges in probability to the Pearlian counterfactual mean for the same cell $\mathbf{x}$ under the edit $\mathbf{c}'$.
\end{proposition}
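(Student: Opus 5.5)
The argument has three steps: an M-estimation consistency argument, an identification step showing that every population minimizer produces the Pearlian counterfactual mean, and a continuity step linking the two.

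\textbf{Step 1 (parameter-level consistency).} Let $\Theta^\star := \arg\min_\theta L(\theta)$. By (A3) the parameter space is compact, and because the networks are bounded and Lipschitz in $\theta$, the population loss $L$ is continuous. Hence $\Theta^\star$ is nonempty and compact. We then apply the standard argmin (Wald-type) consistency theorem using (A4).
\begin{itemize}
\item Uniform convergence gives $L(\hat\theta_n) \le \widehat L_n(\hat\theta_n) + o_p(1) \le \widehat L_n(\theta^\star) + o_p(1) = L(\theta^\star) + o_p(1)$. The near-global minimizer clause of (A3) is what permits the middle inequality.
\item Compactness then implies that for every $\varepsilon>0$ the constrained infimum $\inf\{L(\theta): d(\theta,\Theta^\star)\ge\varepsilon\}$ is strictly above $\min L$.
\item Therefore $d(\hat\theta_n,\Theta^\star)\xrightarrow{p}0$.
\end{itemize}
Note that we only obtain convergence to the \emph{set} $\Theta^\star$, not to a single $\theta^\star$. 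This is why Step 2 must establish a property shared by every element of $\Theta^\star$.

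\textbf{Step 2 (identification at population minimizers).} This is the step I expect to be the main obstacle. We must show that every $\theta\in\Theta^\star$ satisfies $g_\theta(\mathbf{x},\mathbf{c}')=\mu_{x,c'}$. The plan is as follows.
\begin{itemize}
\item Use (A2) to argue that the realizing $\theta^\star$ attains the minimal value of the reconstruction term, the concept term, and the cross-covariance term (the last being zero, since $U\perp C$ under (A1)). It follows that any minimizer must attain the same optimal value of each component.
\item Conclude that any minimizer induces the same pushforward as $\theta^\star$: the decoded distribution $D_\theta([f_{u,\theta}(\mathbf{z}),\mathbf{c}])_\# q_\theta(\mathbf{z}\mid\mathbf{x})$ equals $f_X(\mathbf{c},U)_\#P(U\mid X=\mathbf{x})$ for all $\mathbf{c}$ in the support. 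This uses (A5) to cover $\mathbf{c}'$ specifically.
\item Take expectations of that distributional identity at $\mathbf{c}=\mathbf{c}'$. The left side's mean is $g_\theta(\mathbf{x},\mathbf{c}')$ by definition, and the right side's mean is $\mu_{x,c'}$. The expectations exist because the networks are bounded and $\mu_{x,c'}$ is finite by (A1).
\end{itemize}

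The delicate part is the claim that minimizing the loss forces the realizability pushforward. The $\beta$-weighted KL and the concept-loss surrogate do not obviously have their joint minimizer at the true posterior. I would first try to interpret (A2) as asserting that $\theta^\star$ minimizes $L$ and that minimizers are characterized by their induced conditional law. Failing that, I would restrict attention to the reconstruction term. Under a strictly proper likelihood, the expected reconstruction loss is uniquely minimized by matching the conditional law, and the zero cross-covariance condition makes the $\mathbf{u}$ channel carry no concept information. That in turn lets the substituted $\mathbf{c}'$ act as an intervention rather than being confounded through $\mathbf{u}$.

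\textbf{Step 3 (continuity and conclusion).} We show that $\theta\mapsto g_\theta(\mathbf{x},\mathbf{c}')$ is continuous, uniformly on the compact parameter space. This follows from dominated convergence, since $D_\theta$, $f_{u,\theta}$ and the reparametrized sample $\mathbf{z}=\mu_\theta(\mathbf{x})+\Sigma_\theta^{1/2}(\mathbf{x})\epsilon$ are bounded and Lipschitz in $\theta$. By Step 2, $g$ is constant and equal to $\mu_{x,c'}$ on $\Theta^\star$. Uniform continuity combined with $d(\hat\theta_n,\Theta^\star)\xrightarrow{p}0$ then gives $|g_{\hat\theta_n}(\mathbf{x},\mathbf{c}')-\mu_{x,c'}|\xrightarrow{p}0$ by the continuous mapping argument.
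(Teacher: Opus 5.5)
Your architecture matches the paper's: consistency of the empirical minimizer under (A4), identification at the population minimizer via (A2), and continuity of $\theta\mapsto g_\theta(\mathbf{x},\mathbf{c}')$. Your Steps 1 and 3 are more careful than the paper's sketch, since they handle a set-valued $\arg\min$, get well-separatedness from compactness, and use uniform continuity. The gap is Step 2, where the identification cannot be derived from the objective in the way you propose.

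First, (A2) only asserts that \emph{some} $\theta^\star$ realizes the counterfactual kernel. It says nothing about the concept head at $\theta^\star$. It says nothing about the cross-covariance of $(\hat{\mathbf{c}},\mathbf{u})$ there either: $U\perp C$ in (A1) concerns the true exogenous variable, not the learned code $f_{u,\theta^\star}(\mathbf{z})$. Moreover, the $\beta$-weighted KL term cannot be minimized jointly with the others, because its minimum $0$ forces $q(\mathbf{z}\mid\mathbf{x})=p(\mathbf{z})$, which destroys reconstruction. So there is no common componentwise optimum, and $\theta^\star$ need not minimize $L$ at all. Minimizers of $L$ may also trade reconstruction against KL. Your claim that ``any minimizer must attain the same optimal value of each component'' therefore has no footing.

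Second, and decisively, equal values of every loss term would still not give equal counterfactual pushforwards. Each term of $L$ depends only on the observational joint law of $(X,C,\hat{\mathbf{c}},\mathbf{u})$. For a sufficiently rich class, replace $f_u$ by $h_{\hat{\mathbf{c}}}\circ f_u$, where each bijection $h_{\hat{\mathbf{c}}}$ preserves the conditional law of $\mathbf{u}$ given $\hat{\mathbf{c}}$. Replace $D([\mathbf{u},\hat{\mathbf{c}}])$ by $D([h_{\hat{\mathbf{c}}}^{-1}(\mathbf{u}),\hat{\mathbf{c}}])$. Reconstruction, concept loss, KL and $\mathcal{L}_{\text{cc}}$ are all unchanged, as is independence of $\mathbf{u}$ and $\hat{\mathbf{c}}$ if it held. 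Yet $D([\mathbf{u},\mathbf{c}'])$ changes for $\mathbf{c}'\neq\hat{\mathbf{c}}$.

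This also rules out your fallback. A strictly proper likelihood only pins down the observational conditional law. Zero cross-covariance is a second-moment condition, so it does not make $\mathbf{u}$ concept-free. Even strengthened to independence, it does not identify the counterfactual.

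What is missing is therefore an assumption rather than a lemma. You need $g_\theta(\mathbf{x},\mathbf{c}')=\mu_{x,c'}$ for every $\theta\in\arg\min_\theta L(\theta)$, i.e.\ (A2) read as a property of the population minimizers. The paper proceeds exactly this way and does not derive it either. Its (S1) sets $\theta^\star:=\arg\min_\theta L(\theta)$, hedging with convergence ``to a set whose $g_\theta$-image equals $g_{\theta^\star}$''. Its (S2) and (S4) then apply (A2) to that same $\theta^\star$.

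Your ``first try'' is this reading, provided ``characterized by their induced conditional law'' means the decoded law at every $\mathbf{c}$ in the support (the counterfactual pushforward), not merely the observational law of $X$. Adopt that reading and drop the derivation attempt in Step 2. Your Steps 1--3 then give a complete proof that coincides with the paper's, with a cleaner treatment of non-unique minimizers.
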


\begin{proof}[Proof Sketch] The proof follows an empirical minimization argument combined with the abduction-action-prediction formalism of counterfactual inference.
\begin{enumerate}[label=(S\arabic*)]
    \item From the consistency of M estimators~\citep{van2000asymptotic}, and assumption (A4), any sequence of empirical minimizers $\hat\theta_n$ converges in probability to $\theta^\star := \arg\min_\theta L(\theta)$ (or to a set whose $g_\theta$-image equals $g_{\theta^\star}$).
    \item At $\theta^\star$, (A2) leads $q_{\theta^\star}(\mathbf{z}\mid \mathbf{x})=p(\mathbf{z}\mid \mathbf{x})$ in the model’s latent space and ensures $D_{\theta^\star}([f_{u,\theta^\star}(\mathbf{z}),\mathbf{c}])$ implements the true structural map $f_X(\mathbf{c},U)$ when $\mathbf{z}$ is drawn from that posterior.
    \item Let $T(\theta) := g_{\theta}(\mathbf{x},\mathbf{c}')$. By (A3), we have that $T$ is continuous at $\theta^\star$, so $T(\hat\theta_n)\xrightarrow{p}T(\theta^\star)$.
    \item By (A2), the pushforward $\big(D_{\theta^\star}([f_{u,\theta^\star}(\mathbf{z}),\mathbf{c}])\big)_{\#}q_{\theta^\star}(\mathbf{z}\mid \mathbf{x})$ matches $f_X(\mathbf{c},U)_{\#}P(U|X=\mathbf{x})$. Hence,
\begin{equation}
T(\theta^\star) = \int D_{\theta^\star} \big([\,f_{u,\theta^\star}(\mathbf{z}),\mathbf{c}']\big)\,q_{\theta^\star}(\mathbf{z}\mid \mathbf{x})\,dz
\;=\; \int f_X(\mathbf{c}',\mathbf{u})\,p(\mathbf{u}\mid \mathbf{x})\,du
\;=\; \mu_{x,c'}.
\end{equation}
\end{enumerate}

Combining (S3) and (S4), we have

\begin{equation}
    g_{\hat\theta_n}(\mathbf{x},\mathbf{c}') \xrightarrow{p} \mu_{x,c'}.
\end{equation}

\end{proof}

The realizability assumption is the crux of the above result. The following proposition shows that minimizing our loss in~\Eqref{eq:full_loss} leads to the correct pushforward.

\begin{proposition}[Realizability of the pushforward]

Assume:
\begin{enumerate}[label=(A\arabic*)]
\item \textbf{SCM and exogeneity.} The data follow $X=f_X(C,U)$ with $U \perp C$, so that $f_X(\mathbf{c},U)_{\#}P(U|X=\mathbf{x})$ is well-defined for all $\mathbf{c}'$.
\item \textbf{Proper reconstruction and rich classes.} The reconstruction term in~\Eqref{eq:full_loss} is a proper conditional risk for the law of $X$ given the decoder input (e.g., a correctly specified exponential-family likelihood), and the encoder/decoder classes are rich enough to realize the data distribution. During training, the decoder is conditioned on the \emph{true} $c$.
\item \textbf{Concept supervision.} The concept head is consistent for predicting $\mathbf{c}$ from $\mathbf{x}$ under the chosen loss, so that at the population optimum we predict $\mathbf{c}$ correctly.
\end{enumerate}
Let $L(\theta)$ be the population objective corresponding to ~\Eqref{eq:full_loss}, and let $\theta^\star \in \arg\min_\theta L(\theta)$. Then for almost every $\mathbf{x}$ and every $\mathbf{c}'$,
\begin{equation}
\big(D_{\theta^\star}([f_{u,\theta^\star}(Z),\mathbf{c}'])\big)\# q_{\theta^\star}(Z\mid X=\mathbf{x})
=f_X(\mathbf{c}',U)_{\#}P(U|X=\mathbf{x}),
\end{equation}
i.e., the learned pushforward (decoder applied to the residual code with $\mathbf{c}'$) \emph{matches the true counterfactual kernel in law}. Equivalently, for any bounded measurable $\varphi$,
\begin{equation}
\mathbb{E}\left[\varphi\left(D_{\theta^\star}([f_{u,\theta^\star}(Z), \mathbf{c}'])\right)| X=x\right]
=
\mathbb{E}\left[\varphi\left(f_X(\mathbf{c}',U)\right)| X=\mathbf{x}\right].
\end{equation}
\end{proposition}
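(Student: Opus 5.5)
The plan is to argue at the population optimum $\theta^\star$ and show that each of the three loss components forces one structural property. Together these properties pin down the learned pushforward.

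\textbf{Step 1: Concept supervision fixes the concept coordinate.} By (A3), the population concept loss is minimized when $f_{c,\theta^\star}(Z)=C$ almost surely. We then argue that the remaining terms do not trade off against this. Because the decoder is conditioned on the true $\mathbf{c}$ during training (A2), the reconstruction term does not push $\hat{\mathbf{c}}$ away from $\mathbf{c}$. Hence we may take $\hat{\mathbf{c}}=\mathbf{c}$ at the optimum, and the decoder input is exactly $[f_{u,\theta^\star}(Z),C]$.

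\textbf{Step 2: The cross-covariance penalty makes the residual code independent of $C$.} The population version of $\mathcal{L}_{\text{cc}}$ is zero exactly when $\mathrm{Cov}(C,f_u(Z))=0$. On its own, this gives only decorrelation. To get independence we will invoke the richness assumption (A2) to realize a residual code of the form $f_{u,\theta^\star}(Z)=h(U)$ for a measurable $h$. Such a code is independent of $C$ by (A1), so it attains zero penalty. It also attains the minimal reconstruction risk whenever $h$ is injective enough that $X$ is a.s.\ a function of $(h(U),C)$; this uses the ReLU/scale constraint to rule out the degenerate solution $f_u\equiv0$. The minimal achievable value of $L$ is then the sum of the Bayes reconstruction risk, the Bayes concept risk, the KL term, and zero penalty, and this value is attained by such a $\theta$.

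We then argue conversely that any minimizer must share these features. We use the properness of the reconstruction risk (A2): the minimal conditional risk forces the decoder law $p_{\theta^\star}(x\mid f_u(z),c)$ to equal the true law of $X$ given its input. A residual code that retained dependence on $C$ beyond what $U$ carries would either leave a nonzero cross-covariance, which is penalized, or be redundant with the explicit $\mathbf{c}$ input, which gives no reconstruction gain. Either way it is not strictly better. So a minimizer can be chosen with $f_u(Z)\perp C$ and with $f_u(Z)$ sufficient for $U$ in the sense that $X=f_X(C,U)$ is determined by $(f_u(Z),C)$.

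This step is the main obstacle. Zero cross-covariance does not imply independence, and the $\beta$-KL term could in principle trade off against sufficiency. I would handle this by reading (A2) as ``rich enough to realize'' the structural decomposition at zero excess loss, and by restricting attention to minimizers in that realizable class. I would state explicitly that uniqueness is only up to an invertible reparametrization of $u$, which does not affect pushforwards.

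\textbf{Step 3: Transfer to the counterfactual $\mathbf{c}'$.} Fix $\mathbf{x}$ and let $Z\sim q_{\theta^\star}(\cdot\mid\mathbf{x})$. By Step 2, $f_u(Z)$ is distributed as $h(U)$ under $P(U\mid X=\mathbf{x})$; the encoder posterior matches the true abduction posterior because the ELBO is tight in the rich class. By properness, the decoder satisfies $D_{\theta^\star}([h(\mathbf{u}),\mathbf{c}])\overset{d}{=}f_X(\mathbf{c},\mathbf{u})$ for all $(\mathbf{u},\mathbf{c})$ in the joint support. Since $U\perp C$, the joint support of $(h(U),C)$ is a product set, which is where exogeneity is used. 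The identity therefore holds at $(\mathbf{u},\mathbf{c}')$ for every $\mathbf{c}'$ in the support, even when the pair $(\mathbf{u},\mathbf{c}')$ was never co-observed. Integrating $\varphi$ of both sides against $P(U\mid X=\mathbf{x})$ gives the stated equality of expectations for bounded measurable $\varphi$, and hence equality in law.
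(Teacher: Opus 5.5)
There is a genuine gap at the step you flagged, and it is deeper than decorrelation versus independence. Nothing in the objective can identify the learned residual code with a $C$-independent function $h(U)$ of the structural noise. The population objective $L(\theta)$, and hence $\arg\min_\theta L$ and the learned pushforward, depends on the data only through the joint law of $(X,C)$. The target $f_X(\mathbf{c}',U)_{\#}P(U\mid X=\mathbf{x})$ does not.

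\textbf{Counterexample.} Take $C\sim\mathrm{Bernoulli}(1/2)$ and $U\sim\mathrm{Unif}[0,1]$ independent, and compare $X=C+U$ with $X=2C+(1-2C)U$. Both models satisfy (A1), and both give $X\mid C=0\sim\mathrm{Unif}[0,1]$ and $X\mid C=1\sim\mathrm{Unif}[1,2]$. In both, $C$ is a.s.\ the indicator of $\{X\geq 1\}$, so (A2)--(A3) hold equally. Yet for $\mathbf{x}\in[0,1)$ the edit to $\mathbf{c}'=1$ is $1+\mathbf{x}$ in the first model and $2-\mathbf{x}$ in the second. No minimizer can match both, so the conclusion does not follow from (A1)--(A3) however Step 2 is repaired.

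\textbf{Why your Step 2 cannot exclude this.} The second equation is the first rewritten in terms of $V=\sigma_C(U)$, with $\sigma_0=\mathrm{id}$ and $\sigma_1(u)=1-u$. This $V$ is uniform, independent of $C$, and sufficient, so it passes your cross-covariance test and even a full independence test. At the network level, take your realizable minimizer, compose its residual head with a $\hat{\mathbf{c}}$-dependent map that preserves the law of the code, and compose the decoder with the inverse map. This leaves the encoder (hence the KL term), the concept loss, the reconstruction and the cross-covariance unchanged, but it changes the edit. Your Step 2 dichotomy (leftover covariance, or redundancy with $\mathbf{c}$) misses exactly these ties.

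\textbf{Consequences for other claims.}
\begin{itemize}
\item Uniqueness ``up to an invertible reparametrization of $\mathbf{u}$, which does not affect pushforwards'' fails. The reparametrization may depend on $\mathbf{c}$, and then it does affect the pushforward.
\item ELBO tightness in Step 3 only makes $q_{\theta^\star}(z\mid\mathbf{x})$ equal to the model's own posterior, not to $P(U\mid X=\mathbf{x})$.
\item Restricting to minimizers in a realizable class proves only that some minimizer works, and which one depends on which SCM you call true. It does not give the claim for every $\theta^\star\in\arg\min_\theta L$.
\end{itemize}
To close the gap you must add an identifiability hypothesis. One option is to assume outright that the learned code is a $C$-independent function of $U$; this is essentially the realizability hypothesis of the preceding consistency proposition. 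Another is to impose structure such as scalar $U$, with $f_X(\mathbf{c},\cdot)$ and $D_{\theta}([\cdot,\mathbf{c}])$ increasing for every $\mathbf{c}$, plus genuine independence of the code from $C$; this excludes the second model.

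For comparison, the paper's sketch takes a shorter route that uses neither exogeneity nor the cross-covariance term (the latter appears only in a closing remark). Properness and richness give $X=D_{\theta^\star}([f_{u,\theta^\star}(Z),C])$ a.s. A coupling of $(Z,U)$ given $X=\mathbf{x}$ then matches this with $f_X(C,U)$, and $C$ is replaced by $\mathbf{c}'$. That substitution is the same leap. Given $X=\mathbf{x}$, both sides of the coupled identity equal $\mathbf{x}$, so every coupling works and nothing is learned about $D_{\theta^\star}([\cdot,\mathbf{c}'])$ versus $f_X(\mathbf{c}',\cdot)$. The example above defeats that argument too.

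Your product-support observation is a genuinely useful ingredient the paper leaves implicit. Because $U\perp C$, the never co-observed pairs $(\mathbf{u},\mathbf{c}')$ lie in the region where properness pins down the decoder once the code is fixed. This is what you would use after adding an identifiability assumption. For continuous concepts you would also need continuity of the decoder in $\mathbf{c}$, to pass from an almost-everywhere statement to every $\mathbf{c}'$.
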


\begin{proof}[Proof sketch]

\begin{enumerate}[label=(S\arabic*)]

\item \emph{(Optimal conditional reconstruction)}
From A2, because $-\log p_\theta(x\mid \mathbf{z},\mathbf{c})$ is a strictly proper conditional scoring rule and the model class is rich, the population minimizer must realize the true conditional law of $X$ given $(Z,C)$:
\begin{equation}
p_{\theta^\star}(\mathbf{x}\mid Z,C) \;=\; p(\mathbf{x}\mid Z,C) \quad \text{a.s.}
\end{equation}
Equivalently, there exists a measurable decoder $D_{\theta^\star}$ such that $X = D_{\theta^\star}([f_{u,\theta^\star}(Z),C])$ almost surely under the joint law induced by $(X,C)$ and $Z\!\sim\!q_{\theta^\star}(\mathbf{z}\mid X)$.

\item \emph{(Coupling with the structural SCM)}
Combine with Step~1: for a.s.\ $(X,C)$ we can couple $(Z,U)$ given $X{=}x$ so that
\[
D_{\theta^\star}([f_{u,\theta^\star}(Z),C]) \;=\; f_X(C,U)\quad \text{a.s.}
\]
This is just matching two a.s.\ representations of the same $X$ under an appropriate coupling.

\item \emph{(Kernel equality in law)}
Condition on $X=\mathbf{x}$ and replace $C$ by any counterfactual value $\mathbf{c}'$. By the equality in Step~2 and the coupling of $(Z,U)\mid X=\mathbf{x}$, pushing $q_{\theta^\star}(Z\mid X=\mathbf{x})$ through $\mathbf{u}\mapsto D_{\theta^\star}([\mathbf{u},\mathbf{c}'])$ yields the same law as pushing $P(U\mid X=\mathbf{x})$ through $u\mapsto f_X(\mathbf{c}',\mathbf{u})$:
\begin{equation}
\big(D_{\theta^\star}([f_{u,\theta^\star}(Z),\mathbf{c}'])\big)\# q_{\theta^\star}(Z\mid X=\mathbf{x})
=f_X(\mathbf{c}',U)_{\#}P(U|X=\mathbf{x}), \quad \text{a.s.}
\end{equation}

\end{enumerate}

This is precisely the equality of the learned pushforward with the target counterfactual kernel. 

\end{proof}

Note that Step~3 matches \emph{measures on $X$} (the object we use for counterfactual prediction); it is invariant to internal reparameterizations of the residual code. The cross-cov penalty and independent training are used to \emph{select} residuals $\mathbf{u}$ that do not leak concept information, improving identifiability and stability.

\subsection{Decoupling of known and unknown concepts through cross-covariance penalty}
\label{app:decoupling_proof}

In Section~\ref{sub_section:losses} we introduced the cross-covariance loss $\mathcal{L}_{\text{cc}}$. Here we provide a theoretical proof for how the (predicted) known concepts $\hat{\mathbf{c}}$ and unknown factors $\mathbf{u}$ become decoupled through the proposed loss. 
Let: 
\begin{equation}
\mathbf{u} = \phi(A\mathbf{z} + \mathbf{a}^0), \qquad \hat{\mathbf{c}} = \psi(B\mathbf{z} + \mathbf{b}^0)
\end{equation}
where
\begin{equation}
A \in \mathbb{R}^{d_u \times d_z}, \qquad B \in \mathbb{R}^{K \times d_z}
\end{equation}
are weight matrices, and $\phi$, $\psi$ are element-wise, piecewise $C^1$ monotonically increasing activation functions (e.g., ReLU or sigmoid). The vectors $\mathbf{a}^0$, $\mathbf{b}^0$ are bias terms.
We aim to show that using a cross-covariance loss
\begin{equation}
\mathcal{L}_{\text{cc}} = \left\| \textrm{Cov}(\mathbf{u}, \hat{\mathbf{c}}) \right\|_F^2
\end{equation}
encourages decoupling between the representations $\mathbf{u}$ and $\hat{\mathbf{c}}$.
Define the Jacobians:
\begin{equation}
J_{\mathbf{u}}(\mathbf{z}) = \underbrace{\mathrm{diag}(\phi'(A\mathbf{z} + \mathbf{a}^0))}_{D_{\mathbf{u}}}A, \qquad J_{\hat{\mathbf{c}}}(\mathbf{z}) = \underbrace{\mathrm{diag}(\psi'(B\mathbf{z} + \mathbf{b}^0))}_{D_{\hat{\mathbf{c}}}}B
\end{equation}
We define the centered variables $\tilde{\mathbf{u}}$ and $\tilde{\hat{\mathbf{c}}}$ as:
\begin{equation}
\begin{array}{l}
    \tilde{\mathbf{u}}(\mathbf{z})  = \mathbf{u}(\mathbf{z}) - \mathbb{E}[\mathbf{u}(\mathbf{z})]\\
    \tilde{\hat{\mathbf{c}}}(\mathbf{z}) = \hat{\mathbf{c}}(\mathbf{z}) - \mathbb{E}[\hat{\mathbf{c}}(\mathbf{z})] \\
\end{array}
\end{equation}
If we assume that
\begin{equation}
    \mathbb{E}[\mathbf{u}(\mathbf{z})] \approx \mathbf{u}(\mathbb{E}[\mathbf{z}]) \quad , \quad    \mathbb{E}[\hat{\mathbf{c}}(\mathbf{z})] \approx \hat{\mathbf{c}}(\mathbb{E}[\mathbf{z}]) 
\end{equation}
\textit{Note: This assumption is true for all regions of the ReLU activation function where leakage is possible (positive domain).}

Next, using a first order Taylor approximation we can linearize around $\boldsymbol{\mu}_{\mathbf{z}}$. Here, we define $\boldsymbol{\mu}_{\mathbf{x}} := \mathbb{E}[\mathbf{x}]$.
\begin{equation}
        \tilde{\mathbf{u}}(\mathbf{z})  \approx \mathbf{u}(\mathbf{z}) - \underbrace{\mathbf{u}(\boldsymbol{\mu}_{\mathbf{z}})}_{\approx \boldsymbol{\mu}_{\mathbf{u}}} \approx J_{\mathbf{u}}(\boldsymbol{\mu}_{\mathbf{z}})(\mathbf{z} - \boldsymbol{\mu}_{\mathbf{z}}) \\
\end{equation}
\begin{equation}
        \tilde{\hat{\mathbf{c}}}(\mathbf{z}) \approx \hat{\mathbf{c}}(\mathbf{z}) - \underbrace{\hat{\mathbf{c}}(\boldsymbol{\mu}_{\mathbf{z}})}_{\approx \boldsymbol{\mu}_{\hat{\mathbf{c}}}} \approx J_{\hat{\mathbf{c}}}(\boldsymbol{\mu}_{\mathbf{z}})(\mathbf{z} - \boldsymbol{\mu}_{\mathbf{z}}) \\
\end{equation}
With this we can express the covariance $\textrm{Cov}(\mathbf{u},\hat{\mathbf{c}})$ as:
\begin{equation}
    \textrm{Cov}(\mathbf{u},\hat{\mathbf{c}}) = \mathbb{E}[\tilde{\mathbf{u}}\tilde{\hat{\mathbf{c}}}^T] \approx \mathbb{E}[J_{\mathbf{u}}(\boldsymbol{\mu}_{\mathbf{z}})(\mathbf{z} -\boldsymbol{\mu}_{\mathbf{z}})(\mathbf{z}-\boldsymbol{\mu}_{\mathbf{z}})^TJ_{\hat{\mathbf{c}}}(\boldsymbol{\mu}_{\mathbf{z}})^T] 
\end{equation}
Which can be written as:
\begin{equation}
        \textrm{Cov}(\mathbf{u},\hat{\mathbf{c}}) \approx J_{\mathbf{u}}(\boldsymbol{\mu}_{\mathbf{z}})\Sigma_{\mathbf{z}} J_{\hat{\mathbf{c}}}(\boldsymbol{\mu}_{\mathbf{z}})^T, \quad \Sigma_{\mathbf{z}} := \textrm{Cov}(\mathbf{z})
\end{equation}
And thus minimizing the covariance means that:
\begin{equation}
\textrm{Cov}(\mathbf{u}, \hat{\mathbf{c}}) \to 0 \quad \Leftrightarrow \quad J_{\mathbf{u}}(\boldsymbol{\mu}_{\mathbf{z}})\Sigma_{\mathbf{z}}J_{\hat{\mathbf{c}}}(\boldsymbol{\mu}_{\mathbf{z}})^T \to 0
\end{equation}
Let us now look at what happens when we add some infinitesimal change $\delta \mathbf{z} \sim N(\mathbf{0},\Sigma_{\mathbf{z}})$ around $\boldsymbol{\mu}_{\mathbf{z}}$:
\begin{equation}
    \begin{array}{l}
        \delta \mathbf{u} = J_{\mathbf{u}}(\boldsymbol{\mu}_{\mathbf{z}})\delta \mathbf{z}  \\
        \delta \hat{\mathbf{c}} = J_{\hat{\mathbf{c}}}(\boldsymbol{\mu}_{\mathbf{z}}) \delta \mathbf{z} \\ 
    \end{array}
\end{equation}
Then:
\begin{equation}
    \textrm{Cov}(\delta \mathbf{u}, \delta \hat{\mathbf{c}}) = \mathbb{E}[\delta \mathbf{u} \delta \hat{\mathbf{c}}^T] = \mathbb{E}[J_{\mathbf{u}}(\boldsymbol{\mu}_{\mathbf{z}})\delta \mathbf{z} \delta \mathbf{z}^TJ_{\hat{\mathbf{c}}}(\boldsymbol{\mu}_{\mathbf{z}})]
\end{equation} 
Which becomes:
\begin{equation}
 \mathbb{E}[J_{\mathbf{u}}(\boldsymbol{\mu}_{\mathbf{z}})\delta \mathbf{z} \delta \mathbf{z}^TJ_{\hat{\mathbf{c}}}(\boldsymbol{\mu}_{\mathbf{z}})] = J_{\mathbf{u}}(\boldsymbol{\mu}_{\mathbf{z}})\Sigma_{\mathbf{z}}J_{\hat{\mathbf{c}}}(\boldsymbol{\mu}_{\mathbf{z}}) =0
\end{equation}
Hence, there's no first order covariance in the shift of $\mathbf{u}$ and $\hat{\mathbf{c}}$ upon changes to $\mathbf{z}$ along the direction of variance in the data. This concludes the proof that minimizing $\mathcal{L}_{\text{cc}}$ enforces local decoupling between $\mathbf{u}$ and $\hat{\mathbf{c}}$.

\color{black}

\section{Synthetic data}
\label{app:synth}

\subsection{Generative Process}
\label{app:synth:process}

Our synthetic data generation proccess builds on an overdispersed hierarchical Poisson distribution. For each cell $i$ and gene $j$, the logarithm of the mean expression is modeled as the sum of several structured components. Because these factors are combined in log-space, their effects translate into multiplicative interactions on the original scale. Specifically, the log mean expression $(\lambda_{ij})$ is defined as:  

\begin{equation}
\log(\lambda_{ij}) = w_j + l_j + \gamma_{b_i j} + \omega_{u_i j} + \tau_{t_i j} + \sum_{k=1}^{K} v_{ij}c_{kj} + \varepsilon_{ij}
\end{equation}

In this formulation, gene expression is decomposed into baseline levels ($w_j$), cell-specific scaling ($l_i$), batch effects ($\gamma_{b_i j}$), cell-type variation ($\omega_{u_i j}$), and tissue effects ($\tau_{t_i j}$). In addition, latent concept influences are captured by $\sum_{k=1}^{K} v_{ik}c_{kj}$, where $v_{ik}$ denotes the activation of concept $k$ in cell $i$, and $c_{kj}$ its effect on gene $j$. Residual variation is represented by $\varepsilon_{ij}$.

Gene expression counts are then sampled via the inverse transform method:  

\begin{equation}
x_{ij} = Q(\delta_{ij}; \lambda_{ij}) = \inf \{ k \in \mathbb{N} : F_{\mathrm{Poi}}(k ; \lambda_{ij}) > \delta_{ij} \}, \quad \delta_{ij} \sim \mathcal{U}(0,1)
\end{equation}

which is equivalent to drawing directly from the Poisson distribution:  

\begin{equation}
x_{ij} \sim \mathrm{Poi}(\lambda_{ij})
\end{equation}

Here, $F_{\mathrm{Poi}}$ denotes the cumulative distribution function (CDF) of the Poisson distribution, and $Q$ its percent point function (PPF). 

By explicitly modeling the exogenous noise ($\delta_{ij}$), the framework enables the generation of true cell-level counterfactuals following concept-level interventions. To produce the counterfactual $x_{ij}'$, where we change concepts from $c$ to $c'$, we simply do:
\begin{equation}
\label{eq:synth_counterfactual}
\begin{array}{c}
\log(\lambda_{ij}') = w_j + l_i + \gamma_{b_i j} + \omega_{u_i j} + \tau_{t_i j} + \sum\limits_{k=1}^{K} v_{ik}\boxed{c'_{kj}} + \varepsilon_{ij} \\
 x_{ij}' = Q(\delta_{ij}; \lambda_{ij}')
\end{array}
\end{equation}
\\
\\
The factors in the model are distributed as follows:
\[
\begin{array}{rr} 
l_i \sim \mathit{Norm}(0,\sigma_{b_i l}) & (\textrm{library size})\\
\sigma_{b} \sim \mathit{Unif}(r_l,s_l) & (\textrm{library size std})\\
w_j \sim \mathit{Unif}(r_w,s_w) & (\textrm{baseline expression})\\
\varepsilon_{ij} \sim \mathit{Norm}(0,\sigma_\varepsilon) & (\textrm{noise})\\
\gamma_{bj} \sim \mathit{Norm}(0,\mathbf{\sigma_{b}}) & (\textrm{batch effect})\\
\tau_{tj} \sim \mathit{Norm}(0,\mathbf{\sigma_{t}}) & (\textrm{tissue effect})\\
\omega_{uj} \sim \mathit{Norm}(0,\mathbf{\sigma_{u}}) & (\textrm{cell type effect})\\
b_i \sim \mathit{Cat}(\mathbf{p}_i) & (\textrm{batch id})\\
p_i \sim \mathit{Dir}(\mathbf{\alpha}_b) & (\textrm{batch proportions})\\
t_i \sim \mathit{Cat}(\mathbf{p_q}) & (\textrm{tissue id})\\
p_q \sim \mathit{Dir}(\mathbf{\alpha_t}) & (\textrm{tissue proportions})\\
u_i \sim \mathit{Cat}(\mathbf{p}_{t_i}) & (\textrm{cell type id})\\
p_t \sim \mathit{Dir}(\mathbf{\alpha}_u) & (\textrm{cell type proportions, in batch})\\
\end{array}
\]

And the concept-related factors being:

\[
\begin{array}{rr}
c_{kj} \sim \mathit{ZINorm}(\pi_k,0,\sigma_c) & (\textrm{concept coefficient})\\
\pi_k \sim \mathit{Beta}(\alpha,\beta) & (\textrm{zero inflation})\\
p_u \sim \mathit{Dir}(\alpha_c) & (\textrm{concept probability, in cell type } u)\\
v_{ik} \sim \mathit{Ber}(p_{u_i k}) & (\textrm{concept indicator})\\
\end{array}
\]

Here we briefly describe the intuition behind the different terms:
\begin{itemize}
  \item \textbf{library size} ($l_i$) -- regulates the general expression level of cell $i$. This captures how some cells are more transcriptionally active than others.
  \item \textbf{baseline expression} ($w_j$) -- the general expression level of gene $j$. This captures how some genes (e.g., housekeeping genes) tend to have similar expression across multiple cells.
  \item \textbf{batch id} ($b_i$) -- the batch that cell $i$ belongs to.
  \item \textbf{batch effect} ($\gamma_{bj}$) -- how gene $j$ is impacted in cells that belong to batch $b$. Batches can be, for example, different assays or collection sites.
  \item \textbf{tissue id} ($t_i$) -- the tissue that cell $i$ belongs to. The tissue distribution is dependent on the batch.
  \item \textbf{tissue effect} ($\tau_{tj}$) -- determines how gene $j$ is impacted in cells that belong to tissue $t$. Tissue represents the anatomical region from which the sample was collected.
  \item \textbf{cell type id} ($u_i$) -- the cell type that cell $i$ belongs to. The tissue distribution is dependent on the batch. The cell type distribution is dependent on the tissue.
  \item \textbf{cell type effect} ($\omega_{u j}$) -- determines how gene $j$ is impacted in cells that belong to tissue $u$. Cell type represents the phenotypic state of the cell.
  \item \textbf{concept activation} ($v_{ik}$) -- the activation of concept $k$ in cell $i$. In the synthetic data, the concepts are binary, hence $v_{ik} \in \{0,1\}$. Concept activations are dependent on cell type, to mimic how the likelihood of a concept (e.g., biological pathways) being activated differs between cell types.
  \item \textbf{concept coefficients} ($c_{kj}$) -- determines how concept $k$ influences gene $j$. We model the concept coefficients with zero inflation to create a sparse concept coefficient matrix. This is similar to how most genes are not affected by a biological pathway being activated.
\end{itemize}

\subsection{Motivation}
\label{app:synth:motivation}

scRNA-seq data is derived from a sequencing protocol in which cells are dissociated from tissue, their transcripts are barcoded, and the number of transcripts from each gene is counted. This process is inherently noisy: only a fraction of transcripts in each cell is captured and sequenced. As a result, modeling scRNA-seq data requires accounting for both measurement noise and biological variation.

Let \( x_{ij} \) denote the observed number of transcripts (i.e., gene count) for gene \( j \) in cell \( i \), and let \( y_{ij} \) represent the true number of transcripts of gene \( j \) present in the cell prior to sequencing. 

This process can be viewed as a random sampling step: a fixed number of transcripts \( n_i \) are captured from the full pool of \( N_i = \sum_j y_{ij} \) transcripts present in cell \( i \). Each captured transcript is barcoded and sequenced, resulting in observed counts \( x_{ij} \).

The sampling is performed without replacement and can be modeled by a multivariate hypergeometric (MHG) distribution:

\begin{equation}
    \mathbf{x}_i \sim \mathrm{MHG}(\mathbf{t}_i, n_i)
\end{equation}

where \( \mathbf{t}_i = [t_{i1}, \dots, t_{iG}] \) denotes the true transcript counts per gene in cell \( i \), and \( n_i \) is the total number of transcripts captured from that cell. The resulting vector \( \mathbf{x}_i \) describes how many transcripts of each gene were observed among the \( n_i \) sampled molecules.

Since only a small fraction of all transcripts are captured, we have:

\begin{equation}
    \frac{\sum_j x_{ij}}{\sum_j y_{ij}} = \frac{n_i}{N_i} \to 0
\end{equation}

where \( N_i = \sum_j y_{ij} \) is the true total number of transcripts. Under this low-sampling regime, the MHG distribution can be approximated by a multinomial distribution:

\begin{equation}
    \mathbf{x}_i \sim \mathrm{Multinomial}(n_i, \boldsymbol{\pi}_i), \quad \pi_{ij} = \frac{y_{ij}}{N_i}
\end{equation}

Assuming that the total number of captured transcripts \( n_i \) follows a Poisson distribution with cell-specific rate \( \eta_i \), we obtain:

\begin{equation}
    P(\mathbf{x}_i = \mathbf{k}) = \mathrm{Mult}(\mathbf{k} \mid n_i, \boldsymbol{\pi}_i)\, \mathrm{Poi}(n_i \mid \eta_i) = \prod_j \mathrm{Poi}(k_j \mid \eta_i \pi_{ij})
\end{equation}

Hence, the observed count for each gene \( j \) in cell \( i \) follows a Poisson distribution:

\begin{equation}
    x_{ij} \sim \mathrm{Poisson}(\eta_i \pi_{ij})
\end{equation}

This model captures measurement noise but assumes fixed proportions \( \pi_{ij} \). In reality, gene expression varies across cells due to biological factors such as transcriptional bursting, regulation, or cell state heterogeneity. This leads to overdispersion in the observed counts:

\[
\mathrm{Var}(x_{ij}) > \mathbb{E}[x_{ij}]
\]

To capture this variability, we treat the Poisson rate \( \lambda_{ij} := \eta_i \pi_{ij} \) as a random variable, and model its logarithm as a sum of structured and stochastic components:

\begin{equation}
\begin{array}{c}
\log(\lambda_{ij}) = w_j + l_i + \gamma_{b_i j} + \omega_{u_i j} + \tau_{t_i j} + \sum_{k=1}^{|C|} v_{ik} c_{kj} + \varepsilon_{ij} \\
x_{ij} \sim \mathrm{Poisson}(\lambda_{ij})
\end{array}
\end{equation}

This formulation corresponds to a more general overdispersed Poisson model (e.g., a Poisson GLMM or latent variable model), which flexibly accounts for both technical and biological variability in scRNA-seq data.

\subsection{Example of generated synthetic data}
\label{app:synth:example}

Figure~\ref{fig:synthetic_data_example_labels} and Figure~\ref{fig:synthetic_data_example_concepts} show an example of synthetic data generated by our model, 
comprising 20,000 cells and 5,000 genes, with 2 batches, 2 tissue types, 4 cell types and 6 concepts. We let the effects from batch, tissue, cell type, and concepts follow the following impact order: batch $>$ tissue $>$ concept $>$ cell type.
\begin{figure*}[htpb!]
    \centering
    \includegraphics[width=1\textwidth]{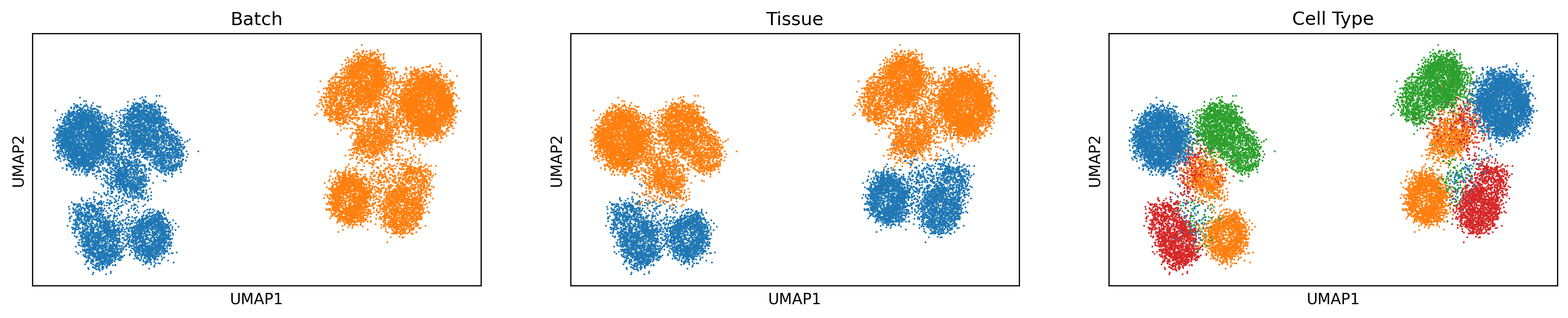}
\caption{Example of synthetic data with highlighted covariates (batch, tissue, cell type). Each dot is a cell, coloring indicate is cells belonging to the same class (e.g, same batch or cell type). Class colors are subplot-specific and not comparable across panels.}
    \label{fig:synthetic_data_example_labels}
\end{figure*}

\begin{figure*}[htpb!]
    \centering
    \includegraphics[width=1\textwidth]{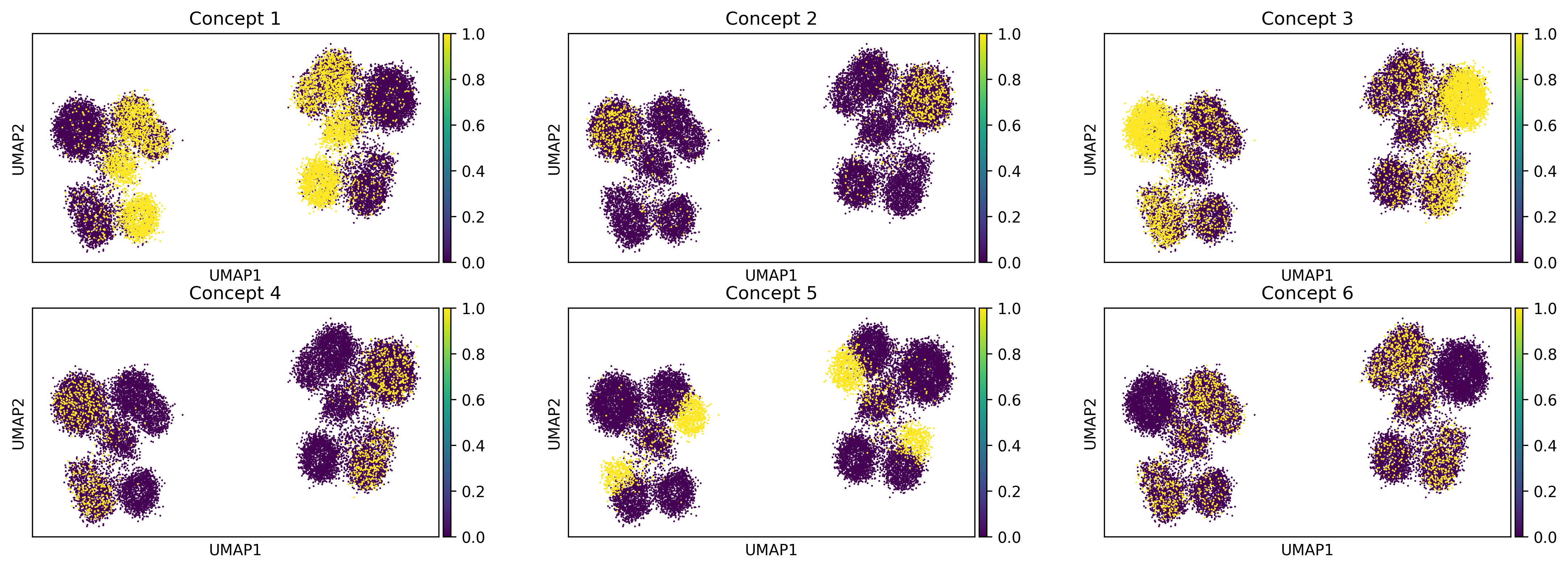}
\caption{Same synthetic data as in Figure \ref{fig:synthetic_data_example_labels} but with highlighted concept activations. Each subplot correspond to one concept.}
    \label{fig:synthetic_data_example_concepts}
\end{figure*}
\newpage

\subsection{Noisy Concept Annotations}
\label{app:synth:noisy_concepts}
To evaluate robustness, we introduce controlled perturbations into the concept annotations, which we refer to as \emph{noisy concepts}. These corruptions mimic realistic imperfections in annotation processes and take the following forms:

\begin{itemize} 
  \item \textbf{Missing:} Randomly removes a specified number or fraction of concept columns, simulating important concepts that are not annotated.  

  \item \textbf{Irrelevant:} Adds concepts with random on/off values, emulating annotations that capture spurious or uninformative signals.  

  \item \textbf{Incorrect:} Selects a subset of existing concepts and independently flips each entry (0$\leftrightarrow$1) with probability $p_{\text{noise}}$, mimicking systematic mislabeling at the concept level.  

  \item \textbf{Duplicate:} Copies selected concept columns one or more times and appends them, increasing redundancy and inducing stronger---or potentially spurious---correlations between concepts and genes.  
\end{itemize}

In all experiments, we evaluated two noise levels: corruption of 1 or 3 concepts. For the \emph{Missing case}, this meant dropping 1 or 3 concepts; for \emph{Irrelevant}, adding 1 or 3 artificial concepts; for \emph{Incorrect}, randomly flipping a certain percentage of the values in 1 or 3 concepts; and for \emph{Duplicate}, replicating 1 or 3 concepts. The intervened-upon concept was always left unaffected.

\subsection{Synthetic Datasets}
\label{app:synth:synt_datasets}
 We generated three synthetic datasets with controlled variations in noise and concept effect strength. \textbf{Synthetic 1} serves as the baseline with default parameters. \textbf{Synthetic 2} increases ''technical noise'' ($\texttt{std\_noise} = 0.08$), introducing more variability and reducing signal clarity. \textbf{Synthetic 3} reduces concept effect variance ($\texttt{std\_concept} = 0.02$), producing sparser and weaker concept-driven influences on gene expression.  

\begin{table}[h]
\centering
\begin{tabular}{lccc}
\toprule
\textbf{Dataset} & \textbf{std\_noise} & \textbf{std\_concept} & \textbf{Key difference vs. baseline} \\
\midrule
\textbf{Synthetic 1} & 0.01 & 0.08 & Baseline \\
\textbf{Synthetic 2} & 0.08 & 0.08 & More technical noise \\
\textbf{Synthetic 3} & 0.01 & 0.02 & Weaker concept effects \\
\bottomrule
\end{tabular}
\caption{Overview of synthetic datasets.}
\end{table}

\paragraph{Shared parameters.} 
All datasets were generated with \textbf{20,000 cells} and \textbf{5,000 genes}, across \textbf{4 cell types}, \textbf{3 tissues}, and \textbf{2 batches}, using \textbf{5 concepts}. Baseline expression values were sampled uniformly between 1 and 5, with batch, tissue, and cell type effects drawn from Normal distributions with standard deviations 0.06, 0.05, and 0.04, respectively. Library size variation was set between 0.01--0.03, residual noise at 0.01 (except where modified), and concept coefficients from a zero-inflated Normal with standard deviation 0.08 (except where modified). We let $\alpha = 1$, $\beta = 0.8$ in the Beta distribution used to sample $\pi_k$, impacting the sparsity of the genes that a concept influences.

\subsection{Intervention Splits for Evaluation}
\label{app:synth_split}
For each dataset, we define 5 intervention tasks by holding out specific concept–cell type pairs, i.e., cases where a given concept is active in a particular cell type. The model is then asked to predict what cells of type $u$ with the concept inactive would look like if it were switched on. This setup emulates a \emph{compositional generalization} scenario. Concretely, let $(u, k)$ denote a held-out pair, where $u$ is the cell type and $k$ the concept of interest:

\begin{itemize}
\item Remove all cells of type $u$ with $\mathbf{c}_k = 1$;
\item Split all cells of type $u$ with $\mathbf{c}_k = 0$ into two equally sized groups: \texttt{train} and \texttt{intervene}
\item Generate ground-truth counterfactuals for the \texttt{intervene} group using Eq.~\ref{eq:synth_counterfactual}, yielding the \texttt{test} set
\item Assign all remaining cells to the \texttt{train} group.
\end{itemize}

In this way, cells of type $u$ are present in the training data, and cells with $c_k = 1$ are also present, but no instance of type $u$ with $c_k = 1$ is ever observed during training. The model is trained on the \texttt{train} group, interventions are applied to the held-out \texttt{intervene} group, and predictions are evaluated against the corresponding counterfactuals in the \texttt{test} set. Because true counterfactuals are available for each intervened cell, evaluation can be performed at the cell level. The five intervention tasks are created by repeating this procedure across randomly chosen $(u, k)$ pairs to ensure robust assessment.

\subsection{Hyperparameter Sweep}
\label{app:syn:hparam_sweep}
For each model (scCBGM and CBGM) we performed a grid search over key hyperparameters. We use the dataset ''Synthetic 1'', with noiseless (clean) concept annotations for the sweep. Specifically, we varied the random seed $\{13, 69\}$, the latent dimension $\{64, 128\}$, hidden dimension $\{128, 256\}$, $\beta \in \{10^{-4}, 5 \times 10^{-5}, 10^{-5}\}$, learning rate $\{5 \times 10^{-4}, 10^{-5}\}$, orthogonality regularization $\{0.05, 0.2, 0.5\}$, and concept regularization $\{0.005, 0.05, 0.1\}$. Other parameters (e.g., bottleneck size $=128$, 2 layers) were held fixed. We picked the set of hyperparameters that performed best across all five (synthetic interventions). This resulted in a total of $2160$ different runs and a total of $432$ hyperparameter sets that were evaluated. The full set of results across all sweeps are displayed in Figure~\ref{fig:hyperparam_results}.
\begin{figure}[htbp!]
    \centering
    \includegraphics[width=0.7\linewidth]{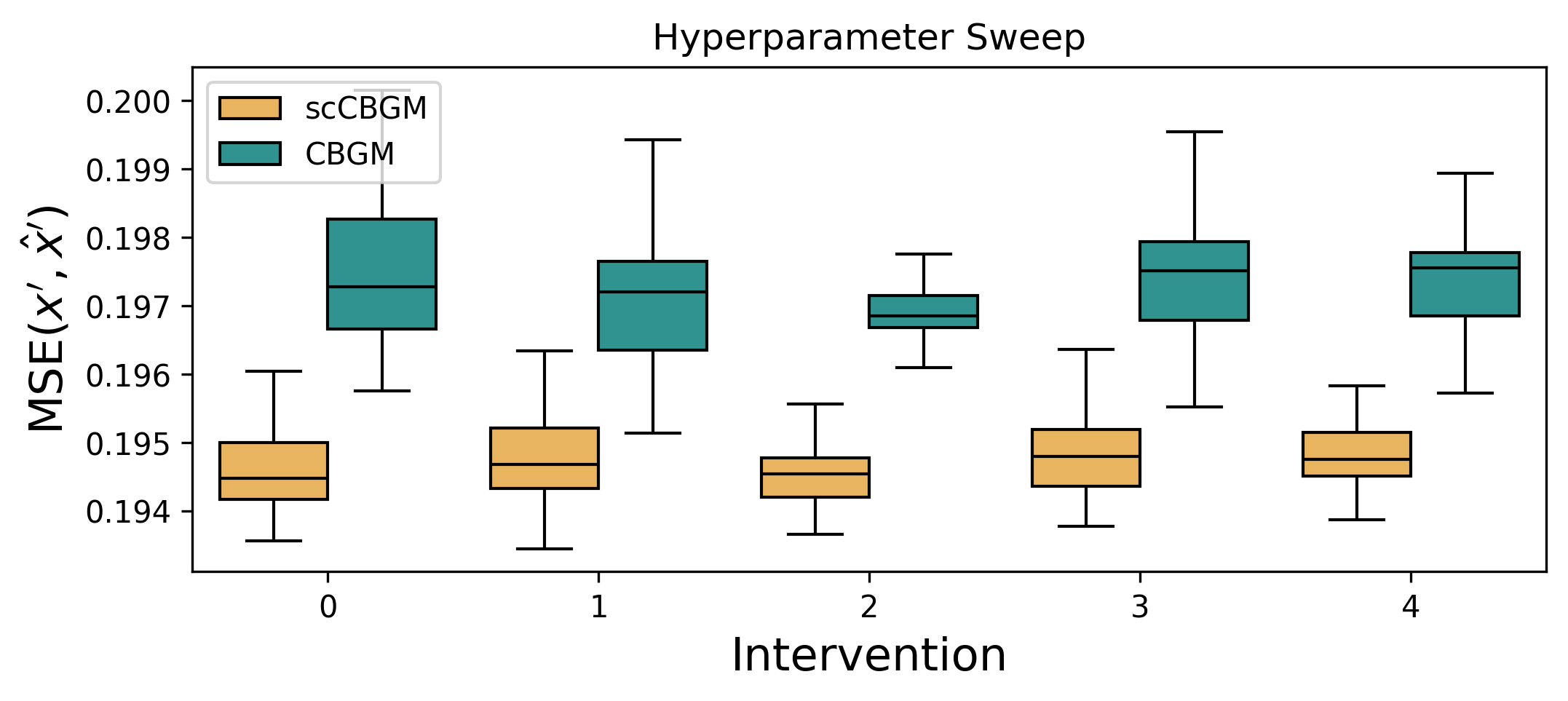}
    \caption{Results from hyperparameter sweep for the scCBGM and CBGM model. Error are \textrm{cell-level} MSEs, comparing the predicted counterfactual and the true counterfactual observation.}
    \label{fig:hyperparam_results}
\end{figure}


\section{Experimental details}
\label{app:experiment_details}

\subsection{Data Processing}
We benchmarked scCBGM and its Flow Matching variants against contemporary approaches for single-cell perturbations prediction on 3 datasets with annotated cell-types across varying perturbation conditions. Each dataset was minimally processed to fit within our counter-factual framework and to allow for systematic comparisons. The list of intervention variables for each dataset is given in Table~\ref{tab:intervention_variables}.

\subsubsection{\texorpdfstring{Kang et al. PBMC IFN-$\beta$ stimulation data}{Kang et al. PBMC IFN-beta stimulation data}}
\label{app:kang_data}

The \cite{kang} dataset comprises 24,264 cells across 8  broad-cell types, observed under two conditions: with and without IFN-$\beta$ stimulation. We excluded megakaryocytes due to their low cell count (210 cells). Data was preprocessed using \textit{scanpy} \cite{wolf2018scanpy}, involving median library size normalization, log-transformation of all counts, and filtering to the top 3000 most highly expressed genes.

\begin{table}[h]
    \centering
    \caption{Overview of cell types and subtypes with instance counts per condition. Note the sparsity in CD14+ Monocytes 1 (ctrl).}
    \label{tab:subtype_counts}
\begin{tabular}{lllr}
\toprule
 &  & Condition & Instances \\
Broad & Subtype &  &  \\
\midrule
\multirow[t]{4}{*}{B cells} & B cells 0 & ctrl & 1105 \\
 & B cells 0 & stim & 1044 \\
 & B cells 1 & ctrl & 201 \\
 & B cells 1 & stim & 187 \\
\cline{1-4}
\multirow[t]{4}{*}{CD14+ Monocytes} & CD14+ Monocytes 0 & ctrl & 2764 \\
 & CD14+ Monocytes 0 & stim & 2023 \\
 & CD14+ Monocytes 1 & ctrl & 9 \\
 & CD14+ Monocytes 1 & stim & 497 \\
\cline{1-4}
\multirow[t]{6}{*}{CD4 T cells} & CD4 T cells 0 & ctrl & 2743 \\
 & CD4 T cells 0 & stim & 2743 \\
 & CD4 T cells 1 & ctrl & 1873 \\
 & CD4 T cells 1 & stim & 1830 \\
 & CD4 T cells 2 & ctrl & 538 \\
 & CD4 T cells 2 & stim & 518 \\
\cline{1-4}
\multirow[t]{4}{*}{CD8 T cells} & CD8 T cells 0 & ctrl & 650 \\
 & CD8 T cells 0 & stim & 607 \\
 & CD8 T cells 1 & ctrl & 432 \\
 & CD8 T cells 1 & stim & 333 \\
\cline{1-4}
\multirow[t]{4}{*}{Dendritic cells} & Dendritic cells 0 & ctrl & 159 \\
 & Dendritic cells 0 & stim & 126 \\
 & Dendritic cells 1 & ctrl & 44 \\
 & Dendritic cells 1 & stim & 98 \\
\cline{1-4}
\multirow[t]{4}{*}{FCGR3A+ Monocytes} & FCGR3A+ Monocytes 0 & ctrl & 431 \\
 & FCGR3A+ Monocytes 0 & stim & 556 \\
 & FCGR3A+ Monocytes 1 & ctrl & 305 \\
 & FCGR3A+ Monocytes 1 & stim & 276 \\
\cline{1-4}
\multirow[t]{2}{*}{NK cells} & NK cells 0 & ctrl & 915 \\
 & NK cells 0 & stim & 1047 \\
\cline{1-4}
\bottomrule
\end{tabular}
    \caption{Overview of the cell types and subtypes in the \cite{kang} PBMC dataset.}
    \label{tab:kang_subtypes}
\end{table}



To benchmark our model on high-fidelity edits that preserve cell phenotype while changing experimental conditions, we focused on identifying granular phenotypes consistent across stimulated and unstimulated cells. We first integrated the two conditions into a unified latent space using Harmony \cite{korsunsky2019fast}. Within this unified latent space, we applied Leiden clustering \cite{traag2019louvain} independently to each broad-cell type to discover direct subtypes. This process yielded a total of 14 subtypes, each approximately evenly distributed across the two experimental conditions, see Table~\ref{tab:kang_subtypes}. For the \citet{kang} experiments, concepts were defined using these original broad-cell types, while stimulation predictions and held-out validations were conducted at the more granular subtype level. We ran our experiments over the same 4 random seeds for all models.
Table~\ref{tab:kang_sweep_hparams} presents the hyperparameters used for the~\citet{kang} experiments.

\begin{table}[ht]
\centering
\caption{Hyperparameters and seeds used in the \citet{kang} experiments. All neural-network-based
methods share \texttt{hidden\_dim}=1024, \texttt{n\_layers}=4, \texttt{batch\_size}=128.
For two-stage FM methods (biolord-FM, CVAE-FM, scCBM-FM), \emph{Epochs} and \emph{LR}
refer to the FM training stage; the base model uses the same settings as its non-FM
counterpart. $\beta$: KL divergence weight (\texttt{kl\_hp} in ConceptFlow);
$z_{\text{dim}}$: latent dimensionality; $\lambda_{\text{orth}}$: orthogonality
regularisation weight. Dashes indicate not applicable.
${}^{\star}$~Cinema-OT: \texttt{thresh}=0.5, \texttt{eps}=0.001.
${}^{\dagger}$~biolord LRs: \texttt{latent\_lr}=\texttt{decoder\_lr}=$10^{-4}$,
\texttt{attribute\_nn\_lr}=$10^{-2}$; penalty weights:
\texttt{reconstruction\_penalty}=$10^{2}$, \texttt{unknown\_attribute\_penalty}=$10^{1}$.
${}^{\ddagger}$~biolord-FM FM component has \texttt{latent\_dim}=128 (independent of
biolord's \texttt{n\_latent}=32).
ConceptFlow additionally has \texttt{flow\_hp}=1.0.
${}^{\mathsection}$~Ablation sweeps vary \texttt{use\_cosine\_loss}$\in\{\text{T,F}\}$
and \texttt{decoder\_type}$\in\{$skip, no-residual, residual$\}$ (residual not included
for CBGM). ${}^{\P}$~scCBM (no-var.) disables the variational bottleneck
(\texttt{variational}=False), so there is no KL term and $\beta$ does not apply.}
\label{tab:kang_sweep_hparams}
\resizebox{\linewidth}{!}{%
\begin{tabular}{llcccccccc}
\toprule
\textbf{Method} & \textbf{Data} & \textbf{Seeds} & \textbf{Epochs}
  & \textbf{LR} & $z_{\text{dim}}$ & $\beta$
  & \texttt{concepts\_hp} & $\lambda_{\text{orth}}$ & \texttt{model.edit} \\
\midrule
scGen          & kang & 13, 42, 1337, 69 & 100  & $10^{-3}$               & ---  & ---       & ---  & ---  & ---              \\
CVAE           & kang & 13, 42, 1337, 69 & 200  & $3\!\times\!10^{-4}$    & 128  & $10^{-5}$ & ---  & ---  & ---              \\
scCBM          & kang & 13, 42, 1337, 69 & 200  & $3\!\times\!10^{-4}$    & 128  & $10^{-5}$ & 0.1  & 0.5  & ---              \\
CBGM           & kang & 13, 42, 1337, 69 & 200  & $3\!\times\!10^{-4}$    & 128  & $10^{-5}$ & 0.1  & 0.5  & ---              \\
biolord$^{\dagger}$ & kang & 13, 42, 1337, 69 & 100 & $10^{-4}$           & 32   & ---       & ---  & ---  & ---              \\
Cinema-OT$^{\star}$ & kang & 13, 42, 1337, 69 & --- & ---                 & ---  & ---       & ---  & ---  & ---              \\
ConceptFlow    & kang & 13, 42, 1337, 69 & 200  & $3\!\times\!10^{-4}$    & 128  & 0.1       & 0.1  & 0.1  & ---              \\
\midrule
biolord-FM$^{\dagger\ddagger}$ & kang & 13, 42, 1337, 69 & 200 & $3\!\times\!10^{-4}$ & 128 & --- & --- & --- & ---         \\
CVAE-FM        & kang & 13, 42, 1337, 69 & 200  & $3\!\times\!10^{-4}$    & 128  & $10^{-5}$ & ---  & ---  & \{edit, decode\} \\
Vanilla-FM     & kang & 13, 42, 1337, 69 & 200  & $3\!\times\!10^{-4}$    & 128  & ---       & ---  & ---  & \{edit, decode\} \\
scCBM-FM       & kang & 13, 42, 1337, 69 & 200  & $3\!\times\!10^{-4}$    & 128  & $10^{-5}$ & 0.1  & 0.5  & \{edit, decode\} \\
\midrule
scCBM (abl.)$^{\mathsection}$ & kang & 13, 42, 1337, 69 & 200 & $3\!\times\!10^{-4}$ & 128 & $10^{-5}$ & 0.1 & \{0.0, 0.5\} & --- \\
CBGM (abl.)$^{\mathsection}$  & kang & 13, 42, 1337, 69 & 200 & $3\!\times\!10^{-4}$ & 128 & $10^{-5}$ & 0.1 & \{0.0, 0.5\} & --- \\
scCBM (no-var.)$^{\P}$ & kang & 13, 42, 1337, 69 & 200 & $3\!\times\!10^{-4}$ & 128 & ---    & 0.1  & 0.5  & ---              \\
\bottomrule
\end{tabular}%
}
\end{table}

\subsubsection{Cui et al. immune dictionary data}

We further evaluated \method~using the Immune Dictionary dataset by \cite{cui2024dictionary}. This dataset comprises 110,378 cells, encompassing over 17 distinct immune cell types stimulated with 86 different cytokines in vivo, providing a rich map of cellular responses to immune perturbations. For preprocessing, similar to the Kang et al. dataset, we applied median library size normalization, log-transformation of counts, and filtered to the top 3000 most highly expressed genes.

Unlike the Kang et al. data where we derived subtypes, the Cui et al. dataset already provides 17 granular immune cell type labels. For this dataset, we created broader concept groupings by pooling these original labels into seven aggregate cell types: Stromal cells, Granulocytes, Monocyte-Macrophages (referred to as myeloid in the tables), Dendritic cells, Innate lymphoid cells (referred to as lymphoid in the tables), B cells, and T cells. These broader categories served as our primary cell type concepts. Additionally, concepts were created for each cytokine stimulation. Consistent with our approach for the Kang et al. data, stimulation predictions and held-out validations were performed at the original, granular 17-subtype level.

Given the vast number of possible cytokine-cell type combinations (86 cytokines across 17 cell types), we focused our evaluation on a select set of seven cell-type - cytokine pairs. These pairs were chosen based on the Immune Dictionary paper's findings, representing combinations shown to induce a significant transcriptional shift compared to control conditions. The specific intervention pairs evaluated were (cell type - cytokine): 

\begin{itemize}
\item $\gamma$$\delta$ T cells - IL17E
\item CD4 T cells - TGF$\beta$
\item CD8 T cells - TNF$\alpha$
\item Conventional dendritic cells (cDC) 2 - INF$\alpha$1
\item Langerhans - IFN$\gamma$
\item Macrophages - M-CSF
\item NK cells - IL15
\end{itemize}
We ran our experiments over the same 4 random seeds for all models. Table~\ref{tab:cui_sweep_hparams} presents the hyperparameters used for the~\citet{cui2024dictionary} experiments.

\begin{table}[ht]
\centering
\caption{Hyperparameters and seeds used in the \citet{cui2024dictionary} experiments. All neural-network-based
methods share \texttt{hidden\_dim}=1024, \texttt{n\_layers}=4, \texttt{batch\_size}=128.
For two-stage FM methods (biolord-FM, CVAE-FM, scCBM-FM), \emph{Epochs} and \emph{LR}
refer to the FM training stage; the base model uses the same settings as its non-FM
counterpart. $\beta$: KL divergence weight (\texttt{kl\_hp} in ConceptFlow);
$z_{\text{dim}}$: latent dimensionality of the learned representation or FM flow space;
$\lambda_{\text{orth}}$: orthogonality regularization weight.
Dashes indicate not applicable.
${}^{\star}$~Cinema-OT is a non-parametric optimal-transport method; its only
hyperparameters are \texttt{thresh}=0.5 and \texttt{eps}=0.001.
${}^{\dagger}$~biolord uses component-wise LRs: \texttt{latent\_lr}=\texttt{decoder\_lr}=$10^{-4}$,
\texttt{attribute\_nn\_lr}=$10^{-2}$; additional loss weights:
\texttt{reconstruction\_penalty}=$10^{2}$, \texttt{unknown\_attribute\_penalty}=$10^{1}$.
${}^{\ddagger}$~biolord-FM's FM component has \texttt{latent\_dim}=128 (independent of
biolord's \texttt{n\_latent}=32).
ConceptFlow additionally has \texttt{flow\_hp}=1.0 (FM loss weight).}
\label{tab:cui_sweep_hparams}
\resizebox{\linewidth}{!}{%
\begin{tabular}{llcccccccc}
\toprule
\textbf{Method} & \textbf{Data} & \textbf{Seeds} & \textbf{Epochs}
  & \textbf{LR} & $z_{\text{dim}}$ & $\beta$
  & \texttt{concepts\_hp} & $\lambda_{\text{orth}}$ & \texttt{model.edit} \\
\midrule
scGen          & cui      & 13, 42, 1337, 69 & 100  & $10^{-5}$               & ---  & ---       & ---  & ---  & ---              \\
CVAE           & cui      & 13, 42, 1337, 69 & 200  & $3\!\times\!10^{-4}$    & 128  & $10^{-5}$ & ---  & ---  & ---              \\
scCBM          & cui      & 13, 42, 1337, 69 & 200  & $3\!\times\!10^{-4}$    & 128  & $10^{-5}$ & 0.1  & 0.5  & ---              \\
CBGM           & cui      & 13, 42, 1337, 69 & 200  & $3\!\times\!10^{-4}$    & 128  & $10^{-5}$ & 0.1  & 0.5  & ---              \\
biolord$^{\dagger}$ & cui & 13, 42, 1337, 69 & 100  & $10^{-4}$               & 32   & ---       & ---  & ---  & ---              \\
Cinema-OT$^{\star}$ & cui & 13, 42, 1337, 69 & ---  & ---                     & ---  & ---       & ---  & ---  & ---              \\
ConceptFlow    & cui      & 13, 42, 1337, 69 & 200  & $3\!\times\!10^{-4}$    & 128  & 0.1       & 0.1  & 0.1  & ---              \\
\midrule
biolord-FM$^{\dagger\ddagger}$ & cui & 13, 42, 1337, 69 & 200 & $3\!\times\!10^{-4}$ & 128 & --- & --- & --- & ---              \\
CVAE-FM        & cui      & 13, 42, 1337, 69 & 200  & $3\!\times\!10^{-4}$    & 128  & $10^{-5}$ & ---  & ---  & \{edit, decode\} \\
Vanilla-FM     & cui      & 13, 42, 1337, 69 & 200  & $3\!\times\!10^{-4}$    & 128  & ---       & ---  & ---  & \{edit, decode\} \\
scCBM-FM       & cui      & 13, 42, 1337, 69 & 200  & $3\!\times\!10^{-4}$    & 128  & $10^{-5}$ & 0.1  & 0.5  & \{edit, decode\} \\
\bottomrule
\end{tabular}%
}
\end{table}

\subsubsection{Nault et al. liver dosages response dataset}

To assess \method's capacity for modeling continuous responses and its utility in cellular control, we utilized the Nault et al. dataset \cite{nault2023single}. This dataset comprises 131,613 cells across 11 granular cell types of liver responses to varying dosages of TCDD, a toxic dioxin compound, ranging from 0$\mu$g/kg up to 30$\mu$g/kg. Preprocessing followed the same procedure as the other datasets: median library size normalization, log-transformation of counts, and filtering to the top 3000 most highly expressed genes.

We conducted two primary experiments using this dataset:

\paragraph{Continuous Dosage Prediction}
For this experiment, concepts were defined to include broad cell types and TCDD dosage. The 11 original cell types were grouped into four broad categories: Hepatocyte, Immune cells, Cholangiocyte, and Stromal, which served as hard  concepts. TCDD dosage was treated as a soft (continuous) concept and its true values were log-transformed and then scaled to be between 0 and 1. The counterfactual task involved predicting gene expression profiles for intermediate TCDD dosages, specifically holding out the $3$ and $10\mu$g/kg dosed test cell subtypes during training. This setup enabled benchmarking \method's ability to fill experimental gaps by inferring responses at unobserved dosage levels. For this experiment, we used a single seed and aggregated the results over the different treatment dosages.

\paragraph{Interpretable Cellular Control via Pathway Activation}
This experiment demonstrated \method's application in controlling cells towards a desired biological state, particularly enhancing treatment effects in specific cell types. We focused on hepatic stellate cells, which showed only a moderate response to TCDD in the original dataset (i.e., minimal effect at lower dosages, unlike hepatocytes, but responding at 10 and 30$\mu$g/kg).

First, we computed gene-set loadings for PROGENy pathways \cite{schubert2018perturbation} using the \textit{decoupler} package \cite{badia2022decoupler}. By comparing responder hepatic stellate cells (dosed at 10 and 30$\mu$g/kg) with non-responders (all other dosages), we identified differentially regulated pathways. Specifically, responder stellate cells exhibited high activity in TGF$\beta$, PI3K, and p53 pathways, and low activity in Trail and VEGF pathways. All $15$ PROGENy pathways were mapped to continuous concepts, with their values min-max scaled between $0$ and $1$ across the entire dataset.

For the editing task, all responder hepatic stellate cells were explicitly removed from the training data, meaning the model was unaware that these cells could respond to TCDD. We then performed in silico stimulation on control stellate cells by jointly modifying the soft concepts corresponding to the identified pathways (increasing TGF$\beta$, PI3K, p53, and decreasing Trail, VEGF), while simultaneously activating the TCDD dosage concept. Specifically, we intervened on hepatic stellate cells with a dosage of 0.01$\mu$g/kg. We encoded these cells, edited their dose concept to 30$\mu$g/kg, and shifted their \textit{TGF$\beta$, PI3K, p53, Trail}, and \textit{VEGF} pathway concepts to $0.598, 0.521, 0.327, 0.697$, and $0.333$ (respectively). These specific concept values represent the mean activations within the responder population.

The resulting edited cell population displayed markedly increased sensitivity to the compound, validating \method's ability to identify and modulate mechanistic routes for treatment response. 

We used a single seed for all models and the standard deviations were obtained using the variability over different treatment dosages. Table~\ref{tab:liver_sweep_hparams} presents the hyperparameters used for the~\citet{nault2023single} experiments

\begin{table}[ht]
\centering
\caption{Hyperparameters used in the \citet{nault2023single} experiments. All experiments use
a single random seed (0). All neural-network-based methods share
\texttt{hidden\_dim}=1024, \texttt{n\_layers}=4, \texttt{batch\_size}=128.
$\beta$: KL divergence weight; $z_{\text{dim}}$: latent dimensionality;
$\lambda_{\text{orth}}$: orthogonality regularization weight.
For two-stage FM methods (CB-FM, cVAE-FM), \emph{Epochs} and \emph{LR}
refer to the FM training stage; the base model (scCBGM or CVAE) uses the same settings
as its non-FM counterpart. The \emph{inference mode} column distinguishes
edit-mode ($\nabla$-guided flow editing) from guided-mode (conditional decoding),
corresponding to \texttt{edit=True/False} in the scripts;
these are denoted \textit{edit} and \textit{guided} respectively
(cf.\ \textit{decode} in the CUI/Kang tables).
Dashes indicate not applicable.
${}^{\star}$~Cinema-OT: \texttt{thresh}=0.5, \texttt{eps}=0.001.
${}^{\dagger}$~biolord LRs: \texttt{latent\_lr}=\texttt{decoder\_lr}=$10^{-4}$,
\texttt{attribute\_nn\_lr}=$10^{-2}$; penalty weights:
\texttt{reconstruction\_penalty}=$10^{2}$,
\texttt{unknown\_attribute\_penalty}=$10^{1}$.
${}^{\ddagger}$~CBGM in the baselines script uses \texttt{CEM\_MetaTrainer} with
\texttt{max\_epochs}=200; scCBGM in the main script uses \texttt{CB\_VAE\_MIXED} with
\texttt{num\_epochs}=100. Both share the same architectural hyperparameters.}
\label{tab:liver_sweep_hparams}
\resizebox{\linewidth}{!}{%
\begin{tabular}{lcccccccc}
\toprule
\textbf{Method} & \textbf{Seed} & \textbf{Epochs} & \textbf{LR}
  & $z_{\text{dim}}$ & $\beta$ & \texttt{concepts\_hp}
  & $\lambda_{\text{orth}}$ & \textbf{Inference mode} \\
\midrule
Cinema-OT$^{\star}$ & 0 & ---  & ---                     & ---  & ---       & ---  & ---  & ---              \\
scGen               & 0 & 100  & $10^{-5}$               & ---  & ---       & ---  & ---  & ---              \\
CBGM$^{\ddagger}$   & 0 & 200  & $3\!\times\!10^{-4}$    & 128  & $10^{-5}$ & 0.1  & 0.5  & ---              \\
biolord$^{\dagger}$ & 0 & 100  & $10^{-4}$               & 32   & ---       & ---  & ---  & ---              \\
scCBGM$^{\ddagger}$ & 0 & 100  & $3\!\times\!10^{-4}$    & 128  & $10^{-5}$ & 0.1  & 0.5  & ---              \\
CVAE                & 0 & 100  & $3\!\times\!10^{-4}$    & 128  & $10^{-5}$ & ---  & ---  & ---              \\
CB-FM               & 0 & 100  & $3\!\times\!10^{-4}$    & 128  & ---       & ---  & ---  & \{edit, guided\} \\
Raw-FM              & 0 & 100  & $3\!\times\!10^{-4}$    & 128  & ---       & ---  & ---  & \{edit, guided\} \\
cVAE-FM             & 0 & 100  & $3\!\times\!10^{-4}$    & 128  & ---       & ---  & ---  & \{edit, guided\} \\
\bottomrule
\end{tabular}%
}
\end{table}

\begin{table}[!h]
\centering
\caption{Intervention variables used across the three datasets.
\emph{\# tasks} counts the number of held-out (cell type, perturbation) conditions
evaluated per dataset: for \citet{kang} the same IFN-$\beta$ stimulus is applied to
multiple donor-cluster splits of each cell type, giving 14 tasks over 7 cell-type groups; for \citet{nault2023single} each cell type is evaluated
at two held-out dose levels, giving 6 tasks over 3 cell types.
All concepts are binary (0/1) or continuous (dose) scalars embedded in the concept
bottleneck.}
\label{tab:intervention_variables}
\begin{tabular}{llllrc}
\toprule
\textbf{Dataset} & \textbf{Cell type} & \textbf{Perturbation} & \textbf{Reference state} & \textbf{Target state} & \textbf{\# tasks} \\
\midrule
\multirow{7}{*}{\citet{cui2024dictionary}}
  & CD4 T cells           & TGF-$\beta$  & PBS & TGF-$\beta$ & 1 \\
  & CD8 T cells           & TNF$\alpha$   & PBS & TNF$\alpha$  & 1 \\
  & $\gamma\delta$ T cells & IL-17E       & PBS & IL-17E       & 1 \\
  & NK cells              & IL-15         & PBS & IL-15        & 1 \\
  & Macrophages           & M-CSF         & PBS & M-CSF        & 1 \\
  & cDC2                  & IFN$\alpha$1  & PBS & IFN$\alpha$1 & 1 \\
  & Langerhans cells      & IFN$\gamma$   & PBS & IFN$\gamma$  & 1 \\
\midrule
\multirow{7}{*}{\citet{kang}}
  & B cells               & IFN-$\beta$   & ctrl & stim & 2 \\
  & CD4 T cells           & IFN-$\beta$   & ctrl & stim & 3 \\
  & CD8 T cells           & IFN-$\beta$   & ctrl & stim & 2 \\
  & CD14$^+$ Monocytes    & IFN-$\beta$   & ctrl & stim & 2 \\
  & Dendritic cells       & IFN-$\beta$   & ctrl & stim & 2 \\
  & FCGR3A$^+$ Monocytes  & IFN-$\beta$   & ctrl & stim & 2 \\
  & NK cells              & IFN-$\beta$   & ctrl & stim & 1 \\
\midrule
\multirow{3}{*}{\citet{nault2023single}}
  & B cells               & Drug dose     & 1.0  & \{3.0, 10.0\} & 2 \\
  & Macrophages           & Drug dose     & 1.0  & \{3.0, 10.0\} & 2 \\
  & T cells               & Drug dose     & 1.0  & \{3.0, 10.0\} & 2 \\
\midrule
\multicolumn{5}{r}{\textbf{Total}} & \textbf{27} \\
\bottomrule
\end{tabular}
\end{table}

\clearpage
\subsection{Evaluation metrics}
\label{app:metrics}

We compare performance of different models using the following metrics: the Maximum Mean Discrepancy ratio (rMMD), the Frechet Inception Distance ratio (rFID), and the Sinkhorn Divergence ratio (rSD). We give a definition of these metrics below. Complete benchmarking results with these metrics are provided in Appendix~\ref{app:additional_benchmark}.

\paragraph{Notations}

We define $\hat{p}_{\mathbf{c}}^*$ as the true empirical distribution of an initial cell population of interest with concept $\mathbf{c}$, $\hat{p}_{\mathbf{c}}$ for the empirical distribution of the population of \emph{all} cells with with concept $\mathbf{c}$, $\hat{p}_{\mathbf{c}\rightarrow\mathbf{c}'}^*$ the distribution of the target counterfactual cell population of interest with concept $\mathbf{c}'$. We write $\hat{p}_{\mathbf{c}\rightarrow\mathbf{c}'}^*$ for the \emph{predicted} counterfactual distribution.

The ratio formulation normalizes the performance of the model’s prediction against the best achievable (lowest) metric among all cell populations, providing a scale-independent measure of relative quality.

\paragraph{rMMD}

The Maximum Mean Discrepancy ratio is defined as

\begin{equation}
\textrm{rMMD} = \frac{\textrm{MMD}(\hat{p}_{\mathbf{c}\rightarrow\mathbf{c}'}^*,\hat{p}_{\mathbf{c}\rightarrow\mathbf{c}'}^*)}{\min_{\mathbf{\gamma}}\textrm{MMD}(\hat{p}_{\mathbf{\gamma}},\hat{p}_{\mathbf{c}\rightarrow\mathbf{c}'}^*)}.
\end{equation}

where $\textrm{rMMD}$ is defined as the \textit{squared} maximum mean discrepancy with kernel $k$:

\begin{equation}
\textrm{MMD}(p,q) = \mathbb{E}_p[k(X,X)] - 2\mathbb{E}_{p,q}[k(X,Y)] + \mathbb{E}_q[k(Y,Y)].
\end{equation}

In our experiments, we used the RBF kernel.

\paragraph{rFID}

The Frechet Inception Distance ratio (rFID) is defined as

\begin{equation}
\textrm{rFID} = \frac{\textrm{FID}(\hat{p}_{\mathbf{c}\rightarrow\mathbf{c}'}^*, \hat{p}_{\mathbf{c}\rightarrow\mathbf{c}'}^*)}{\min_{\mathbf{\gamma}}\textrm{FID}(\hat{p}_{\mathbf{\gamma}}, \hat{p}_{\mathbf{c}\rightarrow\mathbf{c}'}^*)}.
\end{equation}

The Fréchet Inception Distance (FID) measures the difference between two multivariate Gaussian distributions fitted to the input empirical distributions. For two distributions $p$ and $q$ with means $\mu_p, \mu_q$ and covariances $\Sigma_p, \Sigma_q$, it is given by

\begin{equation}
\textrm{FID}(p, q) = \|\mu_p - \mu_q\|_2^2 + \mathrm{Tr}\left(\Sigma_p + \Sigma_q - 2(\Sigma_p \Sigma_q)^{1/2}\right).
\end{equation}

\paragraph{rSD}

The Sinkhorn Divergence ratio (rSD) is defined as

\begin{equation}
\textrm{rSD} = \frac{\textrm{SD}_\varepsilon(\hat{p}_{\mathbf{c}\rightarrow\mathbf{c}'}^*, \hat{p}_{\mathbf{c}\rightarrow\mathbf{c}'}^*)}{\min_{\mathbf{\gamma}}\textrm{SD}_\varepsilon(\hat{p}_{\mathbf{\gamma}}, \hat{p}_{\mathbf{c}\rightarrow\mathbf{c}'}^*)}.
\end{equation}

The Sinkhorn Divergence $\textrm{SD}_\varepsilon(p, q)$ is an entropically regularized optimal transport divergence between two distributions $p$ and $q$:

\begin{equation}
\textrm{SD}_\varepsilon(p, q) = W_{2,\varepsilon}^2(p, q) - \tfrac{1}{2}\left(W_{2,\varepsilon}^2(p, p) + W_{2,\varepsilon}^2(q, q)\right),
\end{equation}

where $W_{2,\varepsilon}$ denotes the entropically regularized Wasserstein-2 distance with regularization strength $\varepsilon$.  
The regularization smooths the optimal transport problem by adding an entropy term, which makes computation more tractable. As $\varepsilon \to 0$, $\textrm{SD}_\varepsilon$ converges to the squared Wasserstein distance $W_2^2$, while for large $\varepsilon$, it approaches the squared Maximum Mean Discrepancy (MMD)~\citep{feydy2019interpolating}.

\subsection{Challenges of evaluation on real data}
\label{app:challenges_of_real}

The use of real scRNA-seq data is essential to assess how our model performs under realistic conditions --- i.e., on data of the kind it is ultimately intended to be applied to. However, we emphasize that such datasets lack definitive ground truth labels. In our experiments, we use \textit{cell type} and \textit{stimulation} annotations both as concepts for intervention and as targets for evaluation in our ratio-based metrics. These labels, however, are not perfectly accurate: cell type annotations are inherently noisy and error-prone, and even for stimulation conditions, cells may have been exposed to the treatment but failed to exhibit a transcriptional response.  To illustrate this, we analyze the Kang \textit{et al.} dataset.

\begin{figure}[ht!]
    \centering
    \includegraphics[width=0.8\linewidth]{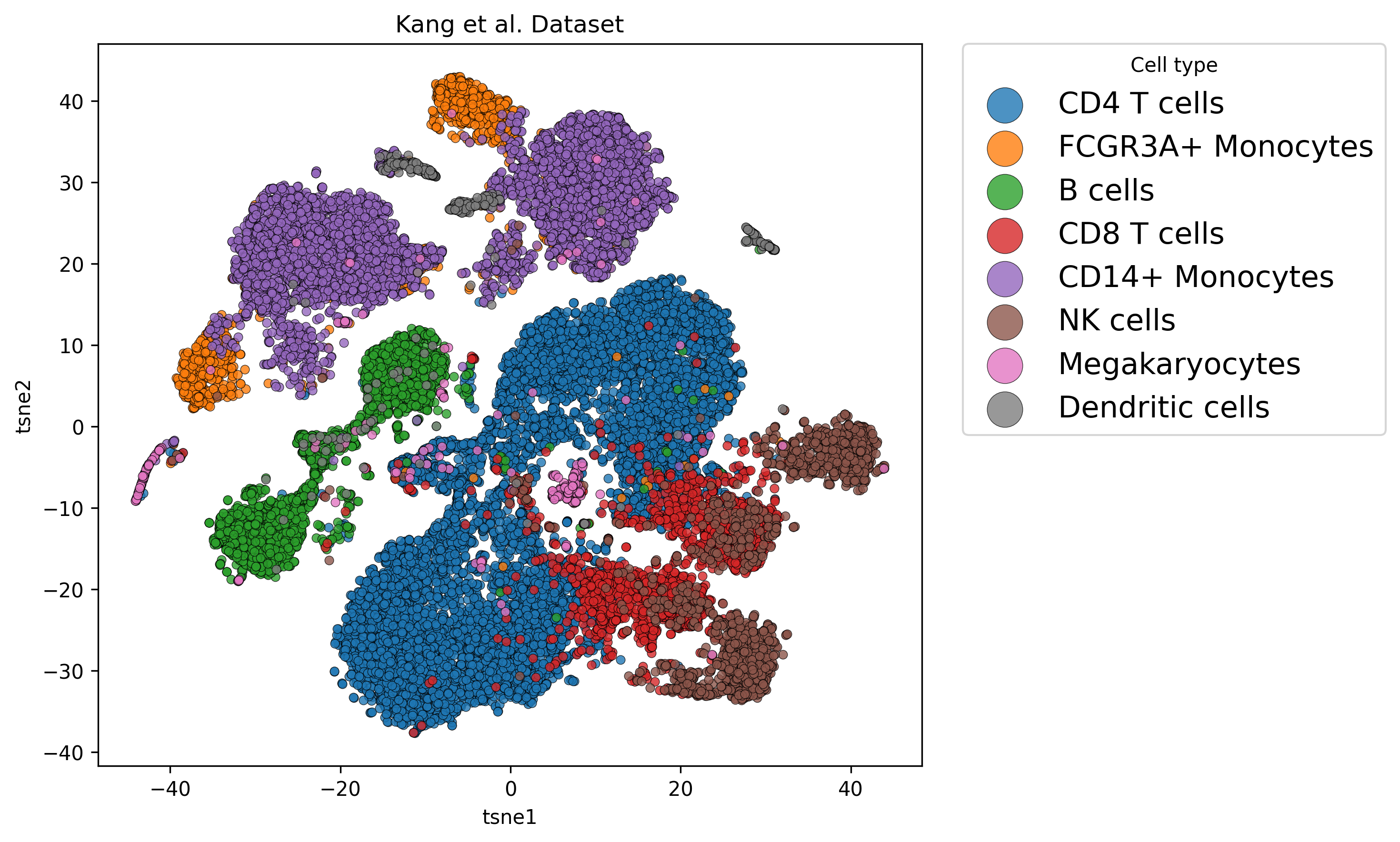}
    \caption{t-SNE of the Kang \textit{et al.} dataset using the original cell type labels. Each point corresponds to a single cell, colored by its annotated cell type.}
    \label{fig:kang_overlap}
\end{figure}

As shown in Figure~\ref{fig:kang_overlap}, several annotated cell types exhibit strong overlap in the embedding space, most notably NK cells and CD8 T cells. To quantify this overlap, we performed a pairwise 5-fold stratified cross-validation analysis. Specifically, we trained a logistic regression model to discriminate between each pair of cell types using their PCA representations and evaluated the performance on held-out sets using the misclassification error rate. As detailed in Table~\ref{tab:kang_separability}, several pairs yield high error rates, indicating that these classes are not linearly separable—likely due to biological similarity in underlying cell states or noise in the original annotations. These results provide context for instances where ratio-based metrics exceed one, as the lack of clear separation complicates distinct cluster definition. Notably, CD8 T cells and NK cells, FCGR3A+ Monocytes and CD14+ Monocytes, and FCGR3A+ Monocytes and Dendritic cells represent the most difficult pairs to resolve.


\begin{table}[ht!]
\centering
\resizebox{\textwidth}{!}{%
\begin{tabular}{lcccccccc}
\toprule
 & B cells & CD4 T cells & CD8 T cells & CD14+ Monocytes & Dendritic cells & FCGR3A+ Monocytes & Megakaryocytes & NK cells \\
\midrule
B cells & NaN & 0.40\% & 0.50\% & 0.11\% & 2.09\% & 1.09\% & 0.97\% & 0.36\% \\
CD4 T cells & 0.40\% & NaN & 2.43\% & 0.17\% & 0.13\% & 0.25\% & 0.80\% & 0.29\% \\
CD8 T cells & 0.50\% & 2.43\% & NaN & 0.10\% & 0.29\% & 0.42\% & 1.23\% & \textbf{8.90\%} \\
CD14+ Monocytes & 0.11\% & 0.17\% & 0.10\% & NaN & 1.04\% & \textbf{7.07\%} & 1.01\% & 0.12\% \\
Dendritic cells & 2.09\% & 0.13\% & 0.29\% & 1.04\% & NaN & \textbf{4.00\%} & 2.94\% & 0.00\% \\
FCGR3A+ Monocytes & 1.09\% & 0.25\% & 0.42\% & \textbf{7.07\%} & \textbf{4.00\%} & NaN & 2.30\% & 0.45\% \\
Megakaryocytes & 0.97\% & 0.80\% & 1.23\% & 1.01\% & 2.94\% & 2.30\% & NaN & 0.77\% \\
NK cells & 0.36\% & 0.29\% & \textbf{8.90\%} & 0.12\% & 0.00\% & 0.45\% & 0.77\% & NaN \\
\bottomrule
\end{tabular}
}
\caption{Pairwise misclassification rates (\%). Values are means from 5-fold stratified cross-validation using logistic regression on the top 50 PCs of held-out test sets.}
\label{tab:kang_separability}
\end{table}

\subsection{Neural Architectures and training parameters}
\label{app:real_arch}

This section details the neural network architectures, training hyper-parameters, and general experimental settings for scCBGM and the Flow Matching (FM) models used in our experiments. When possible for each model, gene expression data was reduced to 128 principal components (PCs), which were computed exclusively from the training data to prevent data leakage.

\paragraph{scCBGM Model}
The scCBGM encoder consisted of $4$ hidden layers, each with $1024$ neurons. These layers incorporated residual connections, layer normalization, and dropout ($p=0.1$). The encoder mapped gene expression (represented by 128 PCs) to a latent space of $2\times128$ dimensions (128 for $\mu$ and for $\sigma$). From the sampled latent variable $z\sim \mathcal{N}(\mu,\sigma)$, the concept bottleneck layers produced the known concepts (whose dimensions varied by experiment) and a default of $128$ unknown concepts. The decoder mirrored the encoder's structure with 4 layers of $1024$ neurons, layer normalization, and dropout. It utilized residual connections for processing unknown concepts and direct skip connections for known concepts, as described in the main text.

scCBGM was trained for $200$ epochs with a batch size of 128. Optimization was performed using Adam, starting with a learning rate of $5\cdot10^{-4}$ and an exponential learning rate scheduler with a decay rate of $0.997$. The loss function coefficients were set as follows: concept loss at $0.1$, orthogonality loss (for cross-covariance) at $0.5$, and the KL divergence for $\mu$ and $\sigma$ at $1\cdot10^{-5}$.

\paragraph{Flow Matching (FM) Models}
Both scCBGM-FM and Vanilla FM models employed a 4-layer fully-connected network, each layer comprising 1024 neurons. Similar to scCBGM, these networks included residual connections, layer normalization, and dropout. Covariates, which were either scCBGM's known and unknown concepts (for scCBGM-FM) or the ground-truth known concepts (for Vanilla FM), were fed into the model via a learned embedding layer designed to match the hidden dimension of 1024. All FM models were trained for 200 epochs with a batch size of 128. Optimization used Adam, with an initial learning rate of $3\cdot10^{-4} 
$ and an exponential learning rate scheduler with a decay rate of $0.997$. Classifier-free guidance was not utilized, as no significant performance benefit was observed.

\paragraph{Baseline Models} We also implemented a conditional VAE (cVAE) for ablation purposes. To ensure a fair comparison, these models utilized the same backbone architecture and training hyperparameters as the CBGM model described above. All other external baseline models, including biolord, CINEMTA-OT, and scGen, were trained using their respective default parameters as specified in their original implementations.

\paragraph{Reconstruction Validation} We further validated the ability of these architectures to faithfully reconstruct single-cell data. As shown in Figure~\ref{fig:decode_accuracy}, scCBGM and scCBGM-FM reliably decode the dataset, preserving both global structure and local density. In contrast, Vanilla-FM exhibits poor reconstruction fidelity, demonstrating the limitations of conditioning solely on coarse labels compared to our granular latent approach.For Vanilla-FM, reconstruction was evaluated by generating a sample from random noise conditioned on the cell's observed concepts and comparing it to the original expression profile. As expected, since this model lacks access to cell-specific identity, it yields high MSE despite generating biologically plausible samples. This baseline highlights the necessity of the scCBGM latent representation for preserving individual cell identity during the encoding-decoding process.

\begin{figure}[htbp!]
    \centering
    \includegraphics[width=0.8\linewidth]{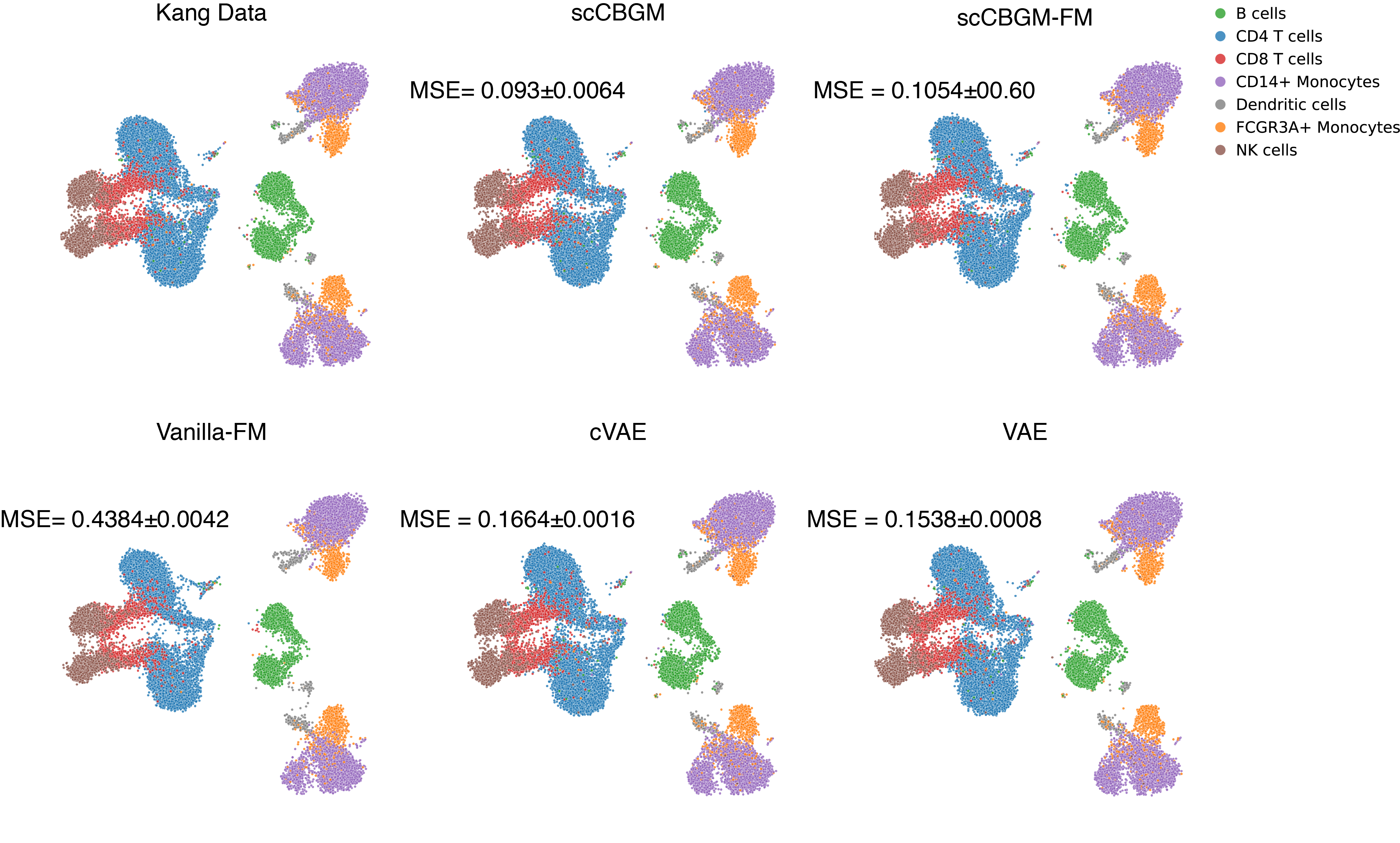}
    \caption{\textbf{Qualitative and quantitative comparison of data reconstruction.} 
    UMAP visualizations of the Kang dataset (top-left) and reconstructions from five generative models. Points are colored by cell type. scCBGM and scCBGM-FM successfully reproduce the data distribution with low test-set MSE. Vanilla-FM shows poor reconstruction fidelity due to a lack of granular conditioning, while cVAE and VAE achieve moderate accuracy. MSE values are means and standard deviations over 5 random initialization seeds for each model.}
    \label{fig:decode_accuracy}
\end{figure}

\subsection{Benchmark on synthetic data}
\label{app:exp:synth}
For the synthetic data benchmark, we used the same architectural design and hyperparameters as described in Appendix~\ref{app:real_arch}. We ran the experiments on the three datasets outlined in Appendix~\ref{app:synth:synt_datasets}, each with 5 interventions and 4 random seeds.

\begin{table*}[htbp!]
\centering
\begin{tabular}{lccc}
\toprule
 & Synthetic 1 & Synthetic 2 & Synthetic 3 \\
\midrule
scCBGM & 0.0388$\pm$0.0053 & 0.0318$\pm$0.0071 & 0.0407$\pm$0.0027 \\
scCBGM-FM (edit) & \textbf{0.0025$\pm$0.0016} & \textbf{0.0028$\pm$0.0014} & \textbf{0.0039$\pm$0.0023} \\
scCBGM-FM (decode) & 0.0539$\pm$0.0039 & 0.0444$\pm$0.0082 & 0.0594$\pm$0.0092 \\
Vanilla-FM (edit) & 0.0031$\pm$0.0014 & 0.0049$\pm$0.0016 & 0.0049$\pm$0.0027 \\
Vanilla-FM (decode) & 0.0756$\pm$0.0014 & 0.0769$\pm$0.0014 & 0.0773$\pm$0.0041 \\
CINEMA-OT & 0.0390$\pm$0.0025 & 0.0393$\pm$0.0012 & 0.0400$\pm$0.0034 \\
biolord & 0.0402$\pm$0.0014 & 0.0411$\pm$0.0021 & 0.0428$\pm$0.0021 \\
\bottomrule
\end{tabular}
\caption{Comparison of methods on synthetic data. We evaluate in PCA space to account for different normalization strategies, so scCBGM values differ from Table \ref{tab:ablation}. Values are MSE (mean ± std), averaged over interventions (5) and seeds (4).}
\label{tab:syn_benchmark_sota}
\end{table*}

\subsection{Performance and number of concepts}
\label{app:performance_by_concept}

While only the concepts on which one intends to \textit{intervene} need to be explicitly characterized in the \textit{known concept layer}, the total number of concepts can increase rapidly when modeling complex biological systems. To further assess how \method scales with the number of concepts, we generated three additional synthetic datasets using the same hyperparameters as \textit{Synthetic~1} (see Appendix~\ref{app:synth:synt_datasets}), varying only the number of concepts. This setup enables a controlled evaluation of how concept dimensionality influences model performance.  

As shown in Table~\ref{tab:performance_by_concept}, performance decreases only marginally---by less than 2\%---as the number of concepts increases. This trend is consistent with prior observations in the literature~\cite{abdelsalam2024concept}, indicating that the model remains stable and effective even as the concept space expands.

\begin{table}[ht!]
\centering
\begin{tabular}{cc}
\toprule
concepts & MSE \\
\midrule
5 & 0.19617 $\pm$ 0.00195 \\
20 & 0.19597 $\pm$ 0.00089 \\
100 & 0.19846 $\pm$ 0.00081 \\
250 & 0.19809 $\pm$ 0.00053 \\
\bottomrule
\end{tabular}
\caption{Performance under varying numbers of concepts. Values are reported as mean~$\pm$~standard deviation, computed across three random seeds and five interventions. No noise was introduced in the concept annotations.}
\label{tab:performance_by_concept}
\end{table}

\newpage
\subsection{Concept leakage}
\label{app:concept_leakage}

The \textbf{cross-covariance loss} between the known and unknown concepts is introduced to prevent information leakage between the two layers. This effect is supported by our results in Section~\ref{sec:experiments} and the theoretical justification in Appendix~\ref{app:decoupling_proof}. 
To examine this question more directly, we designed an experiment that mimics a realistic scenario in which relevant concepts are \textit{not} included among the known ones and must instead be captured by the unknown layer, followed by an examination of how intervening on the known concepts impacted the state of the unknown ones.

In this experiment, we used the \textbf{Kang et al.} dataset (Appendix~\ref{app:kang_data}). Only the \textit{binary stimulation concept} was included in the model --- i.e., cell-type information was \textit{not} encoded in the known layer. The data was split into a 60/40 train/test partition. 
We trained \textsc{scCBGM} with the single stimulation concept, along with two separate classifiers: 
(i) a one-vs-all classifier predicting \textbf{cell type} from the learned expression representation, and 
(ii) a binary classifier predicting whether a cell was \textbf{stimulated} or \textbf{control}.

\begin{wraptable}{r}{0.42\textwidth} 
\centering
\footnotesize 
\setlength{\tabcolsep}{2pt} 

\begin{tabular}{lcccc} 
\toprule
 & \multicolumn{2}{c}{f(cell type)} & \multicolumn{2}{c}{f(stim)} \\
 & Bef. & Aft. & Bef. & Aft. \\
\midrule
B cells & 0.983 & 0.977 & 0.004 & 1.000 \\
CD14+ Mono. & 0.971 & 0.938 & 0.002 & 1.000 \\
CD4 T cells & 0.985 & 0.988 & 0.009 & 1.000 \\
CD8 T cells & 0.718 & 0.691 & 0.014 & 1.000 \\
FCGR3A+ Mon. & 0.750 & 0.777 & 0.000 & 1.000 \\
NK cells & 0.902 & 0.945 & 0.022 & 1.000 \\
\bottomrule
\end{tabular}

\caption{Mean classifier scores before/after intervention (test data).}
\label{tab:concept_leakage}
\end{wraptable}

For evaluation, we took the test data and, within each cell type,
\textit{intervened} on all unstimulated (control) cells by setting their
stimulation state to ``on.'' We then applied the two classifiers---trained on
the original training data---to predict cell type and stimulation status for
each cell \textit{before} and \textit{after} the intervention.

If concept leakage has been effectively prevented, the \textbf{stimulation
predictions} should switch from 0 to 1, while the \textbf{cell-type predictions}
should remain stable. This behavior is clearly observed in
Figure~\ref{fig:concept_leakage}, where the stimulation scores shift from low to
high after intervention, whereas the cell-type scores remain largely unchanged.
The corresponding quantitative results are summarized in
Table~\ref{tab:concept_leakage}.

\begin{figure}[ht!]
    \centering
    \includegraphics[width=0.8\linewidth]{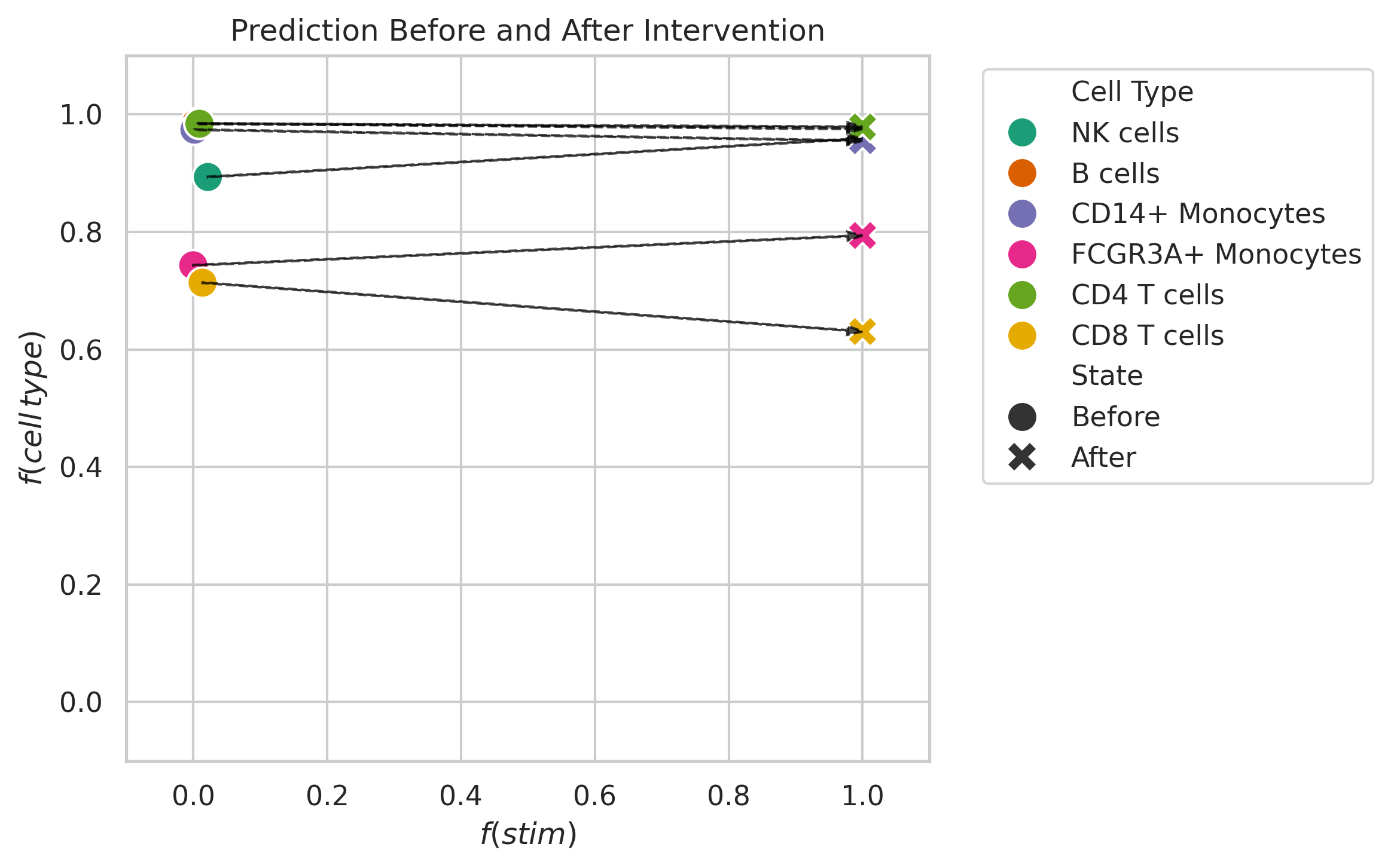}
    \caption{Results from classifiers trained to predict cell type and stimuli applied to the test data before and after intervention. Values are reported as mean across all cells in the test data. Each color represents a cell type. Good results are straight lines from left to right.}
    \label{fig:concept_leakage}
\end{figure}

Furthermore, we performed an isolation experiment on the Kang~et~al.\ dataset using the stimulation label. Specifically, we compared two \method~models: one where stimulation was included as a known concept, and another where it was excluded. A logistic regression classifier (via 5-fold cross-validation) was trained to predict the stimulation state from the unknown embeddings. When stimulation is explicitly modeled, the classifier's performance drops from an AUROC of 0.9988 to 0.5580, \ref{tab:isolation_experiment}. This stark reduction confirms that the known categorical variation is effectively isolated and decoupled from the residual unknown layer.

\begin{table}[htbp]
\centering
\footnotesize 
\setlength{\tabcolsep}{6pt} 

\begin{tabular}{lcccccc}
\toprule
 & \textbf{Fold 1} & \textbf{Fold 2} & \textbf{Fold 3} & \textbf{Fold 4} & \textbf{Fold 5} & \textbf{Mean $\pm$ Std} \\
\midrule
Stim in Known   & 0.572 & 0.577 & 0.555 & 0.545 & 0.541 & 0.5580 $\pm$ 0.0160 \\
Stim in Unknown & 0.998 & 1.000 & 0.998 & 0.999 & 0.999 & 0.9988 $\pm$ 0.0008 \\
\bottomrule
\end{tabular}
\caption{Classifier performance (AUROC) across a 5-fold cross-validation isolation experiment.}
\label{tab:isolation_experiment}
\end{table}

\subsection{Ablations}
\subsubsection{Architecture Ablations Synthetic Data}

\begin{table}[ht!]
        \centering
 \begin{tabular}{lccccc}
\toprule
Model & Bottleneck & Skip & $\mathcal{L}$ & MSE \\
\midrule
 & CBM & \cmark & $\mathcal{L}_{cosine}$ & 0.1989$\pm$0.00023 \\
 & CEM & \cmark & $\mathcal{L}_{cc}$ & 0.19865$\pm$0.00023 \\
 & CEM & \cmark & $\mathcal{L}_{cosine}$ & 0.19816$\pm$0.00022 \\
 & CEM & \xmark & $\mathcal{L}_{cc}$ & 0.1981$\pm$0.0002 \\
CBGM & CEM & \xmark & $\mathcal{L}_{cosine}$ & 0.19804$\pm$0.00021 \\
 & CBM & \xmark & $\mathcal{L}_{cosine}$ & 0.19791$\pm$0.00021 \\
 & CBM & \xmark & $\mathcal{L}_{cc}$ & 0.19736$\pm$0.0002 \\
\hline
\textbf{scCBGM} & \textbf{CBM} & \textbf{\cmark} & \textbf{$\mathcal{L}_{cc}$} & \textbf{0.19655$\pm$0.00019} \\
\bottomrule
\end{tabular}
        \caption{Ablation study over the skip connections and cross-covariance loss. MSE between true and predicted counterfactuals, averaged over datasets (3), interventions (3), noise levels (2), types of noise (5), and seeds (2). We report values as Mean $\pm$ SEM.}
        \label{tab:ablation}
    \end{table}

We compare the contribution of each individual component across a $2 \times 2 \times 2$ design space of Bottleneck (CEM vs. CBM), Skip Connection (yes or no), Loss (cross-covariance vs. cosine). On the synthetic data we are able to do cell-level evaluation, something that real data does not permit. We evaluate each model across: 3 different datasets, 5 different types of interventions (in each dataset), 2 random seeds, 5 different types of noise, and 2 different levels of noise. For comparable results, we used the exact same hyperparameters across all configurations except for the ablated components. However, we highlight that the models using a cosine loss are restricted to using the same number of known as unknown concepts, a limit not imposed on the cross-covariance loss.

\subsubsection{Architecture Ablations on Real Data}
\label{app:real_ablations}

In Table~\ref{tab:rwd_ablations}, we report the result of our model architecture ablations on the~\citet{kang} dataset. This experiment confirms the need for the disentanglement loss and the skip connection in \method.

\begin{table}[htpb!]
\centering
\caption{rMMD over all subtypes on the~\citet{kang} dataset, for different ablations of \method ordered mean in descending order. We evaluate different choices of bottleneck architecture, the presence of skip connection and different disentanglement losses (cosine similarity, cross-covariance, or no disentanglement loss (left blank)). We report values as Mean $\pm$ SEM, and report the average rMMD over all subtypes for each method (\textit{Mean} column). Best model for each subtype is bolded. The \method and CBGM configurations used in our benchmarks are highlighted in the \emph{Method} column.}
\label{tab:rwd_ablations}
\begin{adjustbox}{width=\textwidth}
\begin{tabular}{cccc|cc|cc|ccc|cc|cc|cc|c|c}
\toprule
 & & & & \multicolumn{2}{c|}{B Cells} & \multicolumn{2}{c|}{Cd14} & \multicolumn{3}{c|}{Cd4} & \multicolumn{2}{c|}{Cd8} & \multicolumn{2}{c|}{Dendritic} & \multicolumn{2}{c|}{Fcgr3A} & Nk & \\
Method & Bottleneck & Skip & $\mathcal{L}$ & subtype 0 & subtype 1 & subtype 0 & subtype 1 & subtype 0 & subtype 1 & subtype 2 & subtype 0 & subtype 1 & subtype 0 & subtype 1 & subtype 0 & subtype 1 & subtype 0 & Mean \\
\midrule
 & CEM & \cmark & $\mathcal{L}_{cc}$ & 2.859 $\pm$ 0.560 & 1.484 $\pm$ 0.348 & 11.505 $\pm$ 1.676 & 9.367 $\pm$ 0.917 & 4.294 $\pm$ 0.864 & 6.938 $\pm$ 2.221 & 2.264 $\pm$ 0.511 & 2.575 $\pm$ 0.174 & 5.936 $\pm$ 1.966 & 2.159 $\pm$ 0.244 & 1.779 $\pm$ 0.164 & 1.663 $\pm$ 0.073 & 24.590 $\pm$ 1.246 & 1.911 $\pm$ 0.094 & 5.666 \\
 & CEM & \cmark & $\mathcal{L}_{cosine}$ & 1.456 $\pm$ 0.266 & 1.154 $\pm$ 0.143 & 8.753 $\pm$ 0.244 & 8.449 $\pm$ 0.728 & 3.103 $\pm$ 0.031 & 4.401 $\pm$ 0.609 & 1.178 $\pm$ 0.125 & 2.604 $\pm$ 0.169 & 2.676 $\pm$ 0.350 & 1.133 $\pm$ 0.163 & 1.603 $\pm$ 0.164 & 1.777 $\pm$ 0.087 & 20.765 $\pm$ 1.889 & 1.917 $\pm$ 0.517 & 4.355 \\
 & CEM & \cmark &  & 1.257 $\pm$ 0.063 & 0.930 $\pm$ 0.126 & 9.026 $\pm$ 0.429 & 7.466 $\pm$ 0.548 & 2.674 $\pm$ 0.441 & 3.970 $\pm$ 0.246 & 1.133 $\pm$ 0.032 & 2.271 $\pm$ 0.085 & 2.434 $\pm$ 0.275 & 1.394 $\pm$ 0.120 & 1.762 $\pm$ 0.244 & 1.159 $\pm$ 0.061 & 22.574 $\pm$ 3.052 & 1.391 $\pm$ 0.187 & 4.246 \\
 & CBM & \xmark &  & 1.078 $\pm$ 0.011 & 0.758 $\pm$ 0.007 & 9.634 $\pm$ 0.078 & 7.544 $\pm$ 0.100 & 2.081 $\pm$ 0.010 & 3.321 $\pm$ 0.040 & 0.809 $\pm$ 0.022 & 1.758 $\pm$ 0.007 & 1.674 $\pm$ 0.012 & 1.436 $\pm$ 0.010 & 1.449 $\pm$ 0.012 & 1.684 $\pm$ 0.015 & 22.232 $\pm$ 0.202 & 1.067 $\pm$ 0.008 & 4.038 \\
 & CBM & \cmark & $\mathcal{L}_{cosine}$ & 1.111 $\pm$ 0.123 & 1.134 $\pm$ 0.247 & 4.007 $\pm$ 2.175 & 1.647 $\pm$ 0.134 & 1.664 $\pm$ 0.264 & 10.931 $\pm$ 3.377 & 1.276 $\pm$ 0.028 & 3.942 $\pm$ 1.281 & 2.650 $\pm$ 0.258 & 1.324 $\pm$ 0.221 & 1.527 $\pm$ 0.246 & 1.380 $\pm$ 0.296 & 14.633 $\pm$ 7.135 & 5.634 $\pm$ 0.890 & 3.776 \\
 & CBM & \cmark &  & 0.977 $\pm$ 0.012 & 0.639 $\pm$ 0.008 & 9.001 $\pm$ 0.045 & 6.776 $\pm$ 0.056 & 1.719 $\pm$ 0.072 & 2.797 $\pm$ 0.066 & 0.703 $\pm$ 0.003 & 1.578 $\pm$ 0.023 & 1.372 $\pm$ 0.022 & 1.336 $\pm$ 0.015 & 1.310 $\pm$ 0.022 & 1.467 $\pm$ 0.021 & 20.841 $\pm$ 0.071 & 0.953 $\pm$ 0.018 & 3.676 \\
 & CEM & \xmark &  & 1.060 $\pm$ 0.062 & 0.877 $\pm$ 0.048 & 7.820 $\pm$ 0.228 & 6.365 $\pm$ 0.218 & 2.036 $\pm$ 0.135 & 3.496 $\pm$ 0.164 & 1.058 $\pm$ 0.072 & 2.032 $\pm$ 0.136 & 1.896 $\pm$ 0.083 & 1.336 $\pm$ 0.028 & 1.741 $\pm$ 0.093 & 1.387 $\pm$ 0.119 & 17.643 $\pm$ 0.650 & 1.310 $\pm$ 0.065 & 3.575 \\
 & CEM & \xmark & $\mathcal{L}_{cc}$ & 1.137 $\pm$ 0.132 & 0.823 $\pm$ 0.066 & 6.488 $\pm$ 0.284 & 5.767 $\pm$ 0.103 & 1.703 $\pm$ 0.071 & 3.088 $\pm$ 0.214 & 0.937 $\pm$ 0.093 & 1.999 $\pm$ 0.061 & 1.723 $\pm$ 0.051 & 1.196 $\pm$ 0.032 & 1.560 $\pm$ 0.153 & 1.400 $\pm$ 0.043 & 16.781 $\pm$ 0.306 & 1.793 $\pm$ 0.092 & 3.314 \\
CBGM & CEM & \xmark & $\mathcal{L}_{cosine}$ & 1.057 $\pm$ 0.052 & 0.866 $\pm$ 0.056 & 7.270 $\pm$ 0.435 & 5.371 $\pm$ 0.412 & 1.761 $\pm$ 0.084 & 3.268 $\pm$ 0.295 & 1.152 $\pm$ 0.029 & 1.829 $\pm$ 0.062 & 2.024 $\pm$ 0.185 & 1.225 $\pm$ 0.085 & 1.621 $\pm$ 0.074 & 1.305 $\pm$ 0.034 & 14.960 $\pm$ 0.971 & 1.165 $\pm$ 0.222 & 3.205 \\
 & CBM & \xmark & $\mathcal{L}_{cosine}$ & 1.156 $\pm$ 0.339 & 1.148 $\pm$ 0.630 & 5.739 $\pm$ 2.045 & 1.599 $\pm$ 0.340 & 2.371 $\pm$ 1.264 & 1.647 $\pm$ 0.485 & 0.609 $\pm$ 0.030 & 0.928 $\pm$ 0.121 & 1.260 $\pm$ 0.196 & 1.118 $\pm$ 0.298 & 1.943 $\pm$ 0.534 & 1.437 $\pm$ 0.189 & 9.970 $\pm$ 2.096 & 0.791 $\pm$ 0.114 & 2.265 \\
scCBGM & CBM & \cmark & $\mathcal{L}_{cc}$ & \textbf{0.153 $\pm$ 0.004} & \textbf{0.103 $\pm$ 0.005} & 1.195 $\pm$ 0.058 & \textbf{1.069 $\pm$ 0.061} & \textbf{0.155 $\pm$ 0.008} & \textbf{0.279 $\pm$ 0.015} & \textbf{0.076 $\pm$ 0.004} & \textbf{0.182 $\pm$ 0.010} & \textbf{0.165 $\pm$ 0.014} & \textbf{0.390 $\pm$ 0.011} & 0.359 $\pm$ 0.039 & \textbf{0.166 $\pm$ 0.007} & \textbf{3.151 $\pm$ 0.318} & 0.921 $\pm$ 0.244 & 0.598 \\
 & CBM & \xmark & $\mathcal{L}_{cc}$ & 0.175 $\pm$ 0.004 & \textbf{0.103 $\pm$ 0.007} & \textbf{0.751 $\pm$ 0.082} & 1.142 $\pm$ 0.030 & 0.201 $\pm$ 0.012 & 0.297 $\pm$ 0.023 & 0.096 $\pm$ 0.011 & 0.221 $\pm$ 0.010 & 0.198 $\pm$ 0.011 & 0.430 $\pm$ 0.018 & \textbf{0.343 $\pm$ 0.059} & 0.202 $\pm$ 0.012 & 3.474 $\pm$ 0.129 & \textbf{0.166 $\pm$ 0.009} & \textbf{0.557} \\
\bottomrule
\end{tabular}
\end{adjustbox}
\end{table}

    \subsubsection{variational auto-encoder vs. auto-encoder}
In figure~\ref{fig:vae_vs_ae}, we present a performance comparison between~\method and an ablated version of~\method where the variational auto-encoder backbone has been replaced with a simple auto-encoder (\method (ae)). we found that the rMMD values on different cell populations of the~\citet{kang} dataset were typically lower (better) with a variational auto-encoder backbone than without.

\begin{figure}[!ht]
    \centering
    \includegraphics[width=\linewidth]{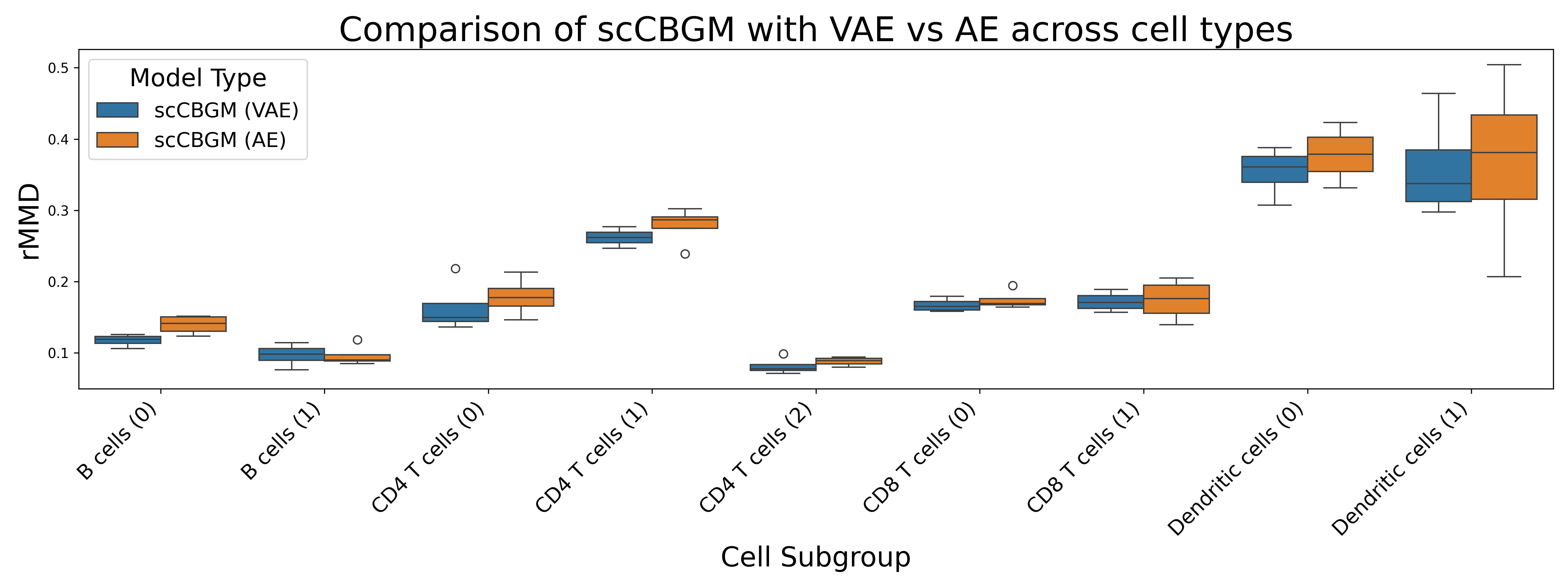}
    \caption{Comparison of rMMD performance between two versions of the auto-encoder backbone of \method: a variational auto-encoder (VAE) and a regular auto-encoder (AE). The boxplots are computed over 4 seeds for each cell population from the~\citet{kang} dataset. Lower rMMD is better.}
    \label{fig:vae_vs_ae}
\end{figure}

\subsection{Additional benchmarking results}
\label{app:additional_benchmark}

\subsubsection{Counter-factual modeling predicts cellular response to perturbation}

Tables~\ref{tab:kang_fid_ratio_aggregated_mean_sem} and~\ref{tab:kang_sinkhorn_div_w2_ratio_aggregated_mean_sem} show results on the~\citet{kang} dataset on the rFID and rSD metrics. Similiarly to our rMMD results in the main text, we find that our model outperforms other baselines in 5 out of 7 experiments.

\begin{table}[htbp!]
\caption{rFID (Mean ± SEM) per cell group for different models (\bestres{\textbf{best}}, \secondbest{second-best}, and \thirdbest{third-best} bolded) in the ~\citet{kang} dataset.}
\label{tab:kang_fid_ratio_aggregated_mean_sem}
\begin{adjustbox}{width=\textwidth}
\begin{tabular}{lccccccc}
\toprule
\textbf{Model} & B cells & \makecell{\textbf{T cells} \\ \textbf{(CD4)}} & \makecell{\textbf{T cells} \\ \textbf{(CD8)}} & \makecell{\textbf{Monocytes} \\ \textbf{(FCGR3A)}} & \makecell{\textbf{Monocytes} \\ \textbf{(CD14)}} & Dendritic Cells & NK cells \\
\midrule
scCBGM & \thirdbest{0.265 $\pm$ 0.003} & \thirdbest{0.277 $\pm$ 0.003} & \thirdbest{0.515 $\pm$ 0.004} & \thirdbest{1.135 $\pm$ 0.044} & \thirdbest{1.719 $\pm$ 0.032} & \thirdbest{0.702 $\pm$ 0.010} & 1.230 $\pm$ 0.285 \\
scCBGM-FM (decode) & \bestres{\textbf{0.247 $\pm$ 0.002}} & \secondbest{0.253 $\pm$ 0.001} & \bestres{\textbf{0.457 $\pm$ 0.003}} & \secondbest{0.996 $\pm$ 0.040} & 1.769 $\pm$ 0.051 & \bestres{\textbf{0.665 $\pm$ 0.009}} & 0.215 $\pm$ 0.006 \\
scCBGM-FM (edit) & \secondbest{0.248 $\pm$ 0.002} & \bestres{\textbf{0.241 $\pm$ 0.003}} & \secondbest{0.472 $\pm$ 0.009} & \bestres{\textbf{0.971 $\pm$ 0.018}} & 1.723 $\pm$ 0.038 & \secondbest{0.670 $\pm$ 0.005} & \secondbest{0.202 $\pm$ 0.001} \\
\midrule
CBGM & 1.086 $\pm$ 0.053 & 2.031 $\pm$ 0.059 & 2.142 $\pm$ 0.041 & 5.244 $\pm$ 0.112 & 5.353 $\pm$ 0.123 & 1.634 $\pm$ 0.031 & 1.664 $\pm$ 0.174 \\
\midrule
Vanilla-FM (decode) & 0.937 $\pm$ 0.013 & 0.953 $\pm$ 0.002 & 0.999 $\pm$ 0.013 & 1.995 $\pm$ 0.009 & \secondbest{1.589 $\pm$ 0.043} & 2.017 $\pm$ 0.026 & \thirdbest{0.215 $\pm$ 0.001} \\
Vanilla-FM (edit) & 0.551 $\pm$ 0.012 & 0.452 $\pm$ 0.007 & 0.584 $\pm$ 0.007 & 1.233 $\pm$ 0.039 & \bestres{\textbf{1.341 $\pm$ 0.011}} & 1.300 $\pm$ 0.033 & \bestres{\textbf{0.177 $\pm$ 0.003}} \\
\midrule
biolord & 1.730 $\pm$ 0.001 & 3.219 $\pm$ 0.001 & 3.039 $\pm$ 0.006 & 3.978 $\pm$ 0.009 & 1.855 $\pm$ 0.003 & 1.979 $\pm$ 0.002 & 1.949 $\pm$ 0.001 \\
biolord-FM & 1.165 $\pm$ 0.003 & 1.782 $\pm$ 0.002 & 1.613 $\pm$ 0.004 & 5.446 $\pm$ 0.011 & 7.439 $\pm$ 0.020 & 1.986 $\pm$ 0.004 & 1.049 $\pm$ 0.004 \\
Cinema-OT & 1.500 $\pm$ 0.001 & 3.563 $\pm$ 0.001 & 3.077 $\pm$ 0.003 & 3.550 $\pm$ 0.004 & 4.637 $\pm$ 0.002 & 0.928 $\pm$ 0.000 & 2.565 $\pm$ 0.003 \\
scGen & 1.275 $\pm$ 0.003 & 2.732 $\pm$ 0.010 & 2.833 $\pm$ 0.013 & 2.567 $\pm$ 0.014 & 2.801 $\pm$ 0.081 & 0.883 $\pm$ 0.010 & 1.966 $\pm$ 0.022 \\
CVAE & 0.653 $\pm$ 0.006 & 1.002 $\pm$ 0.007 & 1.068 $\pm$ 0.019 & 4.075 $\pm$ 0.016 & 4.553 $\pm$ 0.034 & 1.216 $\pm$ 0.013 & 0.732 $\pm$ 0.007 \\
CVAE-FM (decode) & 0.568 $\pm$ 0.005 & 0.812 $\pm$ 0.006 & 0.948 $\pm$ 0.006 & 3.608 $\pm$ 0.023 & 4.257 $\pm$ 0.029 & 1.109 $\pm$ 0.010 & 0.623 $\pm$ 0.006 \\
CVAE-FM (edit) & 0.563 $\pm$ 0.002 & 0.803 $\pm$ 0.004 & 0.933 $\pm$ 0.003 & 3.723 $\pm$ 0.014 & 4.213 $\pm$ 0.035 & 1.123 $\pm$ 0.007 & 0.600 $\pm$ 0.003 \\
\bottomrule
\end{tabular}
\end{adjustbox}
\end{table}
\begin{table}
\caption{rSD (Mean ± SEM) per cell group for different models (\bestres{\textbf{best}}, \secondbest{second-best}, and \thirdbest{third-best} bolded) in the ~\citet{kang} dataset.}
\label{tab:kang_sinkhorn_div_w2_ratio_aggregated_mean_sem}
\begin{adjustbox}{width=\textwidth}
\begin{tabular}{lccccccc}
\toprule
\textbf{Model} & B cells & \makecell{\textbf{T cells} \\ \textbf{(CD4)}} & \makecell{\textbf{T cells} \\ \textbf{(CD8)}} & \makecell{\textbf{Monocytes} \\ \textbf{(FCGR3A)}} & \makecell{\textbf{Monocytes} \\ \textbf{(CD14)}} & Dendritic Cells & NK cells \\
\midrule
scCBGM & \thirdbest{0.393 $\pm$ 0.004} & \secondbest{0.507 $\pm$ 0.004} & 0.971 $\pm$ 0.006 & \thirdbest{0.968 $\pm$ 0.028} & 1.246 $\pm$ 0.033 & 0.744 $\pm$ 0.010 & 1.135 $\pm$ 0.254 \\
scCBGM-FM (decode) & \secondbest{0.387 $\pm$ 0.014} & \thirdbest{0.515 $\pm$ 0.005} & \bestres{\textbf{0.853 $\pm$ 0.015}} & \secondbest{0.868 $\pm$ 0.023} & 1.312 $\pm$ 0.046 & \thirdbest{0.742 $\pm$ 0.013} & \bestres{\textbf{0.290 $\pm$ 0.023}} \\
scCBGM-FM (edit) & \bestres{\textbf{0.387 $\pm$ 0.002}} & \bestres{\textbf{0.500 $\pm$ 0.002}} & \thirdbest{0.960 $\pm$ 0.013} & \bestres{\textbf{0.838 $\pm$ 0.012}} & 1.261 $\pm$ 0.040 & \secondbest{0.726 $\pm$ 0.003} & \secondbest{0.293 $\pm$ 0.003} \\
\midrule
CBGM & 1.046 $\pm$ 0.057 & 1.824 $\pm$ 0.048 & 1.865 $\pm$ 0.010 & 3.432 $\pm$ 0.024 & 4.401 $\pm$ 0.124 & 1.677 $\pm$ 0.008 & 1.446 $\pm$ 0.130 \\
\midrule
Vanilla-FM (decode) & 1.089 $\pm$ 0.029 & 1.042 $\pm$ 0.003 & 1.336 $\pm$ 0.044 & 1.484 $\pm$ 0.020 & \thirdbest{1.165 $\pm$ 0.038} & 2.062 $\pm$ 0.038 & 0.436 $\pm$ 0.031 \\
Vanilla-FM (edit) & 0.662 $\pm$ 0.011 & 0.750 $\pm$ 0.011 & 1.124 $\pm$ 0.018 & 1.025 $\pm$ 0.019 & \secondbest{0.916 $\pm$ 0.010} & 1.431 $\pm$ 0.032 & \thirdbest{0.318 $\pm$ 0.012} \\
\midrule
biolord & 0.888 $\pm$ 0.002 & 0.897 $\pm$ 0.002 & \secondbest{0.878 $\pm$ 0.001} & 1.405 $\pm$ 0.004 & \bestres{\textbf{0.371 $\pm$ 0.001}} & 1.570 $\pm$ 0.001 & 0.444 $\pm$ 0.002 \\
biolord-FM & 1.283 $\pm$ 0.016 & 1.694 $\pm$ 0.005 & 1.785 $\pm$ 0.021 & 3.765 $\pm$ 0.012 & 6.414 $\pm$ 0.030 & 1.937 $\pm$ 0.009 & 1.114 $\pm$ 0.022 \\
Cinema-OT & 0.840 $\pm$ 0.001 & 1.593 $\pm$ 0.004 & 1.154 $\pm$ 0.003 & 1.591 $\pm$ 0.002 & 3.241 $\pm$ 0.007 & \bestres{\textbf{0.700 $\pm$ 0.001}} & 1.195 $\pm$ 0.007 \\
scGen & 0.967 $\pm$ 0.005 & 1.876 $\pm$ 0.012 & 1.872 $\pm$ 0.007 & 1.454 $\pm$ 0.019 & 1.979 $\pm$ 0.075 & 0.796 $\pm$ 0.010 & 1.366 $\pm$ 0.014 \\
CVAE & 0.730 $\pm$ 0.005 & 1.192 $\pm$ 0.018 & 1.422 $\pm$ 0.030 & 2.914 $\pm$ 0.030 & 3.846 $\pm$ 0.029 & 1.265 $\pm$ 0.010 & 0.767 $\pm$ 0.009 \\
CVAE-FM (decode) & 0.670 $\pm$ 0.005 & 0.959 $\pm$ 0.011 & 1.213 $\pm$ 0.036 & 2.527 $\pm$ 0.032 & 3.601 $\pm$ 0.022 & 1.206 $\pm$ 0.019 & 0.666 $\pm$ 0.018 \\
CVAE-FM (edit) & 0.643 $\pm$ 0.004 & 1.010 $\pm$ 0.008 & 1.309 $\pm$ 0.012 & 2.665 $\pm$ 0.015 & 3.544 $\pm$ 0.036 & 1.181 $\pm$ 0.009 & 0.646 $\pm$ 0.002 \\
\bottomrule
\end{tabular}
\end{adjustbox}
\end{table}

\subsubsection{Compositional generalization with many stimuli} Tables~\ref{tab:cui_fid_ratio_aggregated_mean_sem} and~\ref{tab:cui_sinkhorn_div_w2_ratio_aggregated_mean_sem} show results on rFID and rSD on the~\citet{cui2024dictionary} dataset. Similarly to our rMMD results in the main text, we find that our model outperforms other baselines in 4 out of 7 experiments.

The~\citet{cui2024dictionary} dataset contains 17 cell types with 86 cytokines perturbations, leading to 1479 possible configurations. To keep experiments concise, we focused on seven pairs that the original study identified as inducing significant transcriptional shifts, ensuring a robust signal for evaluation.

For completeness, we still ran an extensive comparison analysis between scCBGM-FM (edit) and CVAE-FM (edit) over all 1479 cell-types - interventions pairs in the dataset with a single seed. Running all baselines was prohibitive (both models require 850 GPU hours each). We thus chose the two most competitive models on the seven pairs from the main analyses. Aggregated results per cell-type (over cytokine perturbations) are presented in Figure~\ref{fig:complete_cui}. Using a paired t-test over all 1479 configurations, scCBGM-FM was found to significantly outperform CVAE-FM across the full intervention space (p-val$<1e^{-10}$) for all metrics.

\begin{table}
\caption{rFID (Mean ± SEM) per cell group for different models (\bestres{\textbf{best}}, \secondbest{second-best}, and \thirdbest{third-best} bolded) in the ~\citet{cui2024dictionary} dataset.}
\label{tab:cui_fid_ratio_aggregated_mean_sem}
\begin{adjustbox}{width=\textwidth}
\begin{tabular}{lccccccc}
\toprule
\textbf{Model} & \makecell{\textbf{T cells}\\\textbf{(Gamma-delta)}} & \makecell{\textbf{T cells}\\\textbf{(CD4)}} & \makecell{\textbf{T cells}\\\textbf{(CD8)}} & \makecell{\textbf{Dendritic}\\\textbf{(cDC2)}} & \makecell{\textbf{Dendritic}\\\textbf{(Langerhans)}} & \makecell{\textbf{Myeloid}\\\textbf{(Macrophages)}} & \makecell{\textbf{Lymphoid}\\\textbf{(NK cells)}} \\
\midrule
scCBGM & \bestres{\textbf{0.688 $\pm$ 0.006}} & \bestres{\textbf{0.523 $\pm$ 0.003}} & \secondbest{0.474 $\pm$ 0.007} & 0.851 $\pm$ 0.007 & \bestres{\textbf{0.162 $\pm$ 0.002}} & \bestres{\textbf{1.263 $\pm$ 0.019}} & \secondbest{1.941 $\pm$ 0.021} \\
scCBGM-FM (decode) & \thirdbest{0.731 $\pm$ 0.007} & \thirdbest{0.555 $\pm$ 0.003} & 0.509 $\pm$ 0.005 & 0.875 $\pm$ 0.008 & 0.170 $\pm$ 0.002 & \secondbest{1.276 $\pm$ 0.004} & \thirdbest{1.956 $\pm$ 0.008} \\
scCBGM-FM (edit) & 0.751 $\pm$ 0.005 & 0.570 $\pm$ 0.003 & 0.508 $\pm$ 0.003 & 0.869 $\pm$ 0.008 & 0.169 $\pm$ 0.001 & 1.319 $\pm$ 0.008 & 1.985 $\pm$ 0.015 \\
\midrule
CBGM & 0.884 $\pm$ 0.091 & 0.588 $\pm$ 0.025 & \thirdbest{0.488 $\pm$ 0.015} & \thirdbest{0.840 $\pm$ 0.056} & \secondbest{0.167 $\pm$ 0.004} & 1.555 $\pm$ 0.160 & 2.335 $\pm$ 0.070 \\\midrule
Vanilla-FM (decode) & 1.527 $\pm$ 0.040 & 1.474 $\pm$ 0.090 & 1.470 $\pm$ 0.057 & 1.223 $\pm$ 0.055 & 1.472 $\pm$ 0.044 & 4.001 $\pm$ 0.253 & 2.124 $\pm$ 0.110 \\
Vanilla-FM (edit) & 0.875 $\pm$ 0.020 & 0.644 $\pm$ 0.009 & 0.510 $\pm$ 0.016 & \bestres{\textbf{0.646 $\pm$ 0.049}} & 0.478 $\pm$ 0.045 & 3.518 $\pm$ 0.169 & \bestres{\textbf{1.014 $\pm$ 0.044}} \\\midrule
biolord & 2.164 $\pm$ 0.003 & 1.412 $\pm$ 0.001 & 1.265 $\pm$ 0.001 & 1.223 $\pm$ 0.000 & 0.999 $\pm$ 0.000 & 4.400 $\pm$ 0.013 & 2.612 $\pm$ 0.006 \\
biolord-FM & 1.357 $\pm$ 0.007 & 1.117 $\pm$ 0.005 & 1.056 $\pm$ 0.009 & 1.577 $\pm$ 0.003 & 1.125 $\pm$ 0.010 & 1.859 $\pm$ 0.009 & 2.634 $\pm$ 0.019 \\
Cinema-OT & 1.655 $\pm$ 0.002 & 1.056 $\pm$ 0.001 & 1.127 $\pm$ 0.001 & 0.976 $\pm$ 0.001 & 0.816 $\pm$ 0.000 & 4.187 $\pm$ 0.015 & 5.729 $\pm$ 0.007 \\
scGen & 1.601 $\pm$ 0.011 & 1.405 $\pm$ 0.009 & 1.070 $\pm$ 0.013 & \secondbest{0.830 $\pm$ 0.004} & 0.291 $\pm$ 0.002 & 2.053 $\pm$ 0.005 & 2.767 $\pm$ 0.008 \\
CVAE & \secondbest{0.730 $\pm$ 0.004} & \secondbest{0.525 $\pm$ 0.002} & \bestres{\textbf{0.470 $\pm$ 0.002}} & 0.930 $\pm$ 0.003 & \thirdbest{0.167 $\pm$ 0.001} & 1.312 $\pm$ 0.006 & 2.047 $\pm$ 0.016 \\
CVAE-FM (decode) & 0.754 $\pm$ 0.005 & 0.559 $\pm$ 0.002 & 0.511 $\pm$ 0.004 & 0.929 $\pm$ 0.003 & 0.176 $\pm$ 0.001 & \thirdbest{1.288 $\pm$ 0.008} & 2.048 $\pm$ 0.006 \\
CVAE-FM (edit) & 0.778 $\pm$ 0.002 & 0.571 $\pm$ 0.004 & 0.507 $\pm$ 0.005 & 0.938 $\pm$ 0.001 & 0.177 $\pm$ 0.000 & 1.351 $\pm$ 0.002 & 2.093 $\pm$ 0.005 \\
\bottomrule
\end{tabular}
\end{adjustbox}
\end{table}
\begin{table}
\caption{rSD (Mean ± SEM) per cell group for different models (\bestres{\textbf{best}}, \secondbest{second-best}, and \thirdbest{third-best} bolded) in the ~\citet{cui2024dictionary} dataset.}
\label{tab:cui_sinkhorn_div_w2_ratio_aggregated_mean_sem}
\begin{adjustbox}{width=\textwidth}
\begin{tabular}{lccccccc}
\toprule
\textbf{Model} & \makecell{\textbf{T cells}\\\textbf{(Gamma-delta)}} & \makecell{\textbf{T cells}\\\textbf{(CD4)}} & \makecell{\textbf{T cells}\\\textbf{(CD8)}} & \makecell{\textbf{Dendritic}\\\textbf{(cDC2)}} & \makecell{\textbf{Dendritic}\\\textbf{(Langerhans)}} & \makecell{\textbf{Myeloid}\\\textbf{(Macrophages)}} & \makecell{\textbf{Lymphoid}\\\textbf{(NK cells)}} \\
\midrule
scCBGM & \bestres{\textbf{0.781 $\pm$ 0.018}} & \bestres{\textbf{0.684 $\pm$ 0.004}} & \thirdbest{0.704 $\pm$ 0.011} & 0.852 $\pm$ 0.012 & \bestres{\textbf{0.194 $\pm$ 0.005}} & \bestres{\textbf{1.301 $\pm$ 0.024}} & \thirdbest{1.898 $\pm$ 0.015} \\
scCBGM-FM (decode) & \secondbest{0.822 $\pm$ 0.010} & 0.757 $\pm$ 0.011 & 0.808 $\pm$ 0.013 & 0.866 $\pm$ 0.010 & 0.204 $\pm$ 0.002 & \secondbest{1.326 $\pm$ 0.015} & 1.915 $\pm$ 0.014 \\
scCBGM-FM (edit) & 0.858 $\pm$ 0.003 & 0.756 $\pm$ 0.001 & 0.747 $\pm$ 0.003 & 0.872 $\pm$ 0.007 & 0.208 $\pm$ 0.002 & \thirdbest{1.343 $\pm$ 0.008} & 1.984 $\pm$ 0.013 \\\midrule
CBGM & 0.936 $\pm$ 0.122 & 0.767 $\pm$ 0.033 & \bestres{\textbf{0.680 $\pm$ 0.029}} & \thirdbest{0.836 $\pm$ 0.056} & \secondbest{0.196 $\pm$ 0.002} & 1.520 $\pm$ 0.101 & 2.272 $\pm$ 0.049 \\\midrule
Vanilla-FM (decode) & 1.503 $\pm$ 0.024 & 1.678 $\pm$ 0.038 & 2.127 $\pm$ 0.069 & 1.212 $\pm$ 0.049 & 1.454 $\pm$ 0.035 & 3.455 $\pm$ 0.155 & 2.040 $\pm$ 0.121 \\
Vanilla-FM (edit) & 1.005 $\pm$ 0.011 & 0.873 $\pm$ 0.020 & 0.819 $\pm$ 0.029 & \bestres{\textbf{0.636 $\pm$ 0.045}} & 0.511 $\pm$ 0.050 & 3.105 $\pm$ 0.108 & \bestres{\textbf{0.964 $\pm$ 0.038}} \\\midrule
biolord & 1.099 $\pm$ 0.001 & 0.778 $\pm$ 0.001 & 0.785 $\pm$ 0.002 & 0.897 $\pm$ 0.001 & 0.795 $\pm$ 0.000 & 2.287 $\pm$ 0.007 & \secondbest{1.537 $\pm$ 0.005} \\
biolord-FM & 1.431 $\pm$ 0.019 & 1.359 $\pm$ 0.022 & 1.526 $\pm$ 0.033 & 1.530 $\pm$ 0.004 & 1.132 $\pm$ 0.010 & 2.019 $\pm$ 0.016 & 2.525 $\pm$ 0.030 \\
Cinema-OT & 1.240 $\pm$ 0.003 & 0.869 $\pm$ 0.002 & 1.105 $\pm$ 0.002 & 0.918 $\pm$ 0.001 & 0.792 $\pm$ 0.001 & 3.138 $\pm$ 0.010 & 4.971 $\pm$ 0.028 \\
scGen & 0.999 $\pm$ 0.010 & 1.044 $\pm$ 0.009 & 0.821 $\pm$ 0.017 & \secondbest{0.689 $\pm$ 0.002} & 0.205 $\pm$ 0.002 & 1.496 $\pm$ 0.007 & 2.119 $\pm$ 0.009 \\
CVAE & \thirdbest{0.852 $\pm$ 0.005} & \secondbest{0.717 $\pm$ 0.001} & \secondbest{0.688 $\pm$ 0.005} & 0.928 $\pm$ 0.003 & \thirdbest{0.204 $\pm$ 0.003} & 1.356 $\pm$ 0.012 & 2.035 $\pm$ 0.015 \\
CVAE-FM (decode) & 0.878 $\pm$ 0.016 & \thirdbest{0.733 $\pm$ 0.008} & 0.750 $\pm$ 0.017 & 0.919 $\pm$ 0.004 & 0.214 $\pm$ 0.006 & 1.383 $\pm$ 0.019 & 2.016 $\pm$ 0.028 \\
CVAE-FM (edit) & 0.895 $\pm$ 0.004 & 0.752 $\pm$ 0.003 & 0.754 $\pm$ 0.009 & 0.937 $\pm$ 0.000 & 0.216 $\pm$ 0.001 & 1.385 $\pm$ 0.003 & 2.076 $\pm$ 0.011 \\
\bottomrule
\end{tabular}
\end{adjustbox}
\end{table}

\begin{figure}[h!]
    \centering
    \includegraphics[width=\linewidth]{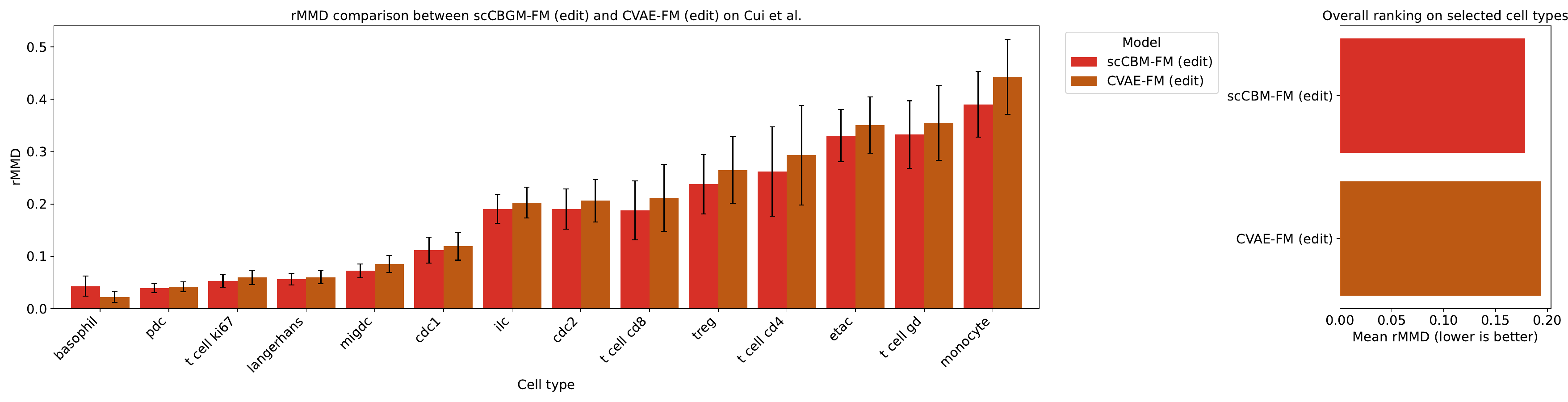}
    \caption{Comparison of \method-FM (edit) and CVAE-FM (edit) on the full \citet{cui2024dictionary} dataset. (Left) rMMD for both methods averaged per cell type over the different cytokine perturbations. (Right) Average rMMD on all cell-type-cytokine pairs.}
    \label{fig:complete_cui}
\end{figure}

\subsubsection{\texorpdfstring{Complete \citet{nault2023single} results}{Complete Nault et al. (2023) results}}
\label{app:liver_results}

In Tables~\ref{tab:complete_mmd_stats_liver},~\ref{tab:liver_frechet}, and~\ref{tab:liver_sinkhorn} we report the results of our experiment on the~\citet{nault2023single} dataset on all available cell types for the rMMD, rFID, and rSD metrics respectively.
\begin{table}[htbp!]
\centering
\caption{rMMD (Mean $\pm$ SEM) per model and target group (\bestres{\textbf{best}}, \secondbest{second-best}, and \thirdbest{third-best} bolded) in~\citet{nault2023single} dataset}
\begin{adjustbox}{width=\textwidth}
\begin{tabular}{lccccccc}
\toprule
\textbf{Model} & \textbf{B cells} & \textbf{T cells} & \makecell{\textbf{Hepatocytes} \\ \textbf{Centrilobular}} & \textbf{Endothelial cells} & \textbf{Stellate cell} & \textbf{Macrophages} & \makecell{\textbf{Hepatocytes} \\ \textbf{(Periportal)}} \\
\midrule
\method & \bestres{\textbf{0.9309 $\pm$ 0.3402}} & 0.8832 $\pm$ 0.3996 & 0.6236 $\pm$ 0.0114 & 1.1352 $\pm$ 0.3946 & 0.8952 $\pm$ 0.5483 & \secondbest{0.6171 $\pm$ 0.4101} & \thirdbest{0.7521 $\pm$ 0.0457} \\
\method -FM (decode) & \thirdbest{0.9497 $\pm$ 0.2785} & \bestres{\textbf{0.8368 $\pm$ 0.3466}} & \secondbest{0.6084 $\pm$ 0.0286} & \thirdbest{1.0465 $\pm$ 0.3216} & \thirdbest{0.8609 $\pm$ 0.4399} & \bestres{\textbf{0.5960 $\pm$ 0.3432}} & \secondbest{0.7186 $\pm$ 0.0564} \\
\method -FM (edit) & 0.9499 $\pm$ 0.3040 & \secondbest{0.8507 $\pm$ 0.3719} & \thirdbest{0.6172 $\pm$ 0.0012} & 1.0589 $\pm$ 0.3438 & \bestres{\textbf{0.8437 $\pm$ 0.4515}} & \thirdbest{0.6286 $\pm$ 0.4083} & \bestres{\textbf{0.7079 $\pm$ 0.0565}} \\
\midrule
Vanilla-FM (decode) & 3.2153 $\pm$ 1.6126 & 3.7338 $\pm$ 2.0041 & 1.4582 $\pm$ 0.6899 & 8.1729 $\pm$ 2.6690 & 10.4811 $\pm$ 7.0579 & 2.3361 $\pm$ 1.3756 & 1.1853 $\pm$ 0.4838 \\
Vanilla-FM (edit) & 1.2832 $\pm$ 0.4411 & 1.4381 $\pm$ 0.2921 & \bestres{\textbf{0.4424 $\pm$ 0.0877}} & 6.8274 $\pm$ 0.9141 & 7.2351 $\pm$ 3.2392 & 1.2286 $\pm$ 0.4893 & 1.0301 $\pm$ 0.3172 \\
\midrule
Biolord & 22.2091 $\pm$ 9.1562 & 27.6154 $\pm$ 12.7611 & / & 41.3361 $\pm$ 17.3826 & 44.3398 $\pm$ 32.5313 & 21.1511 $\pm$ 10.7040 & 4.7065 $\pm$ 2.0315 \\
Cinema-OT & 23.1503 $\pm$ 11.3323 & 29.3735 $\pm$ 15.0724 & 4.6668 $\pm$ 1.5976 & 40.3636 $\pm$ 20.2509 & 45.5700 $\pm$ 33.2892 & 11.7172 $\pm$ 4.8449 & 5.2949 $\pm$ 1.3632 \\
scGen & 11.5939 $\pm$ 5.2396 & 13.4637 $\pm$ 6.5632 & 2.2144 $\pm$ 0.6419 & 16.7187 $\pm$ 7.2767 & 12.0668 $\pm$ 9.0206 & 6.1012 $\pm$ 3.3398 & 2.3892 $\pm$ 0.7981 \\
CVAE & 1.263 $\pm$ 0.072 & 1.328 $\pm$ 0.060 & 1.842 $\pm$ 0.809 & 1.625 $\pm$ 0.121 & 1.349 $\pm$ 0.281 & 1.104 $\pm$ 0.229 & 1.554 $\pm$ 0.433 \\
CVAE-FM (decode) & 1.005 $\pm$ 0.102 & \thirdbest{0.870 $\pm$ 0.091} & 1.333 $\pm$ 0.585 & \secondbest{1.018 $\pm$ 0.110} & 0.882 $\pm$ 0.117 & 0.852 $\pm$ 0.015 & 1.143 $\pm$ 0.263 \\
CVAE-FM (edit) & \secondbest{0.938 $\pm$ 0.101} & 0.887 $\pm$ 0.161 & 1.337 $\pm$ 0.569 & \bestres{\textbf{0.963 $\pm$ 0.102}} & \secondbest{0.853 $\pm$ 0.171} & 0.909 $\pm$ 0.122 & 1.131 $\pm$ 0.253 \\
\bottomrule
\end{tabular}
\end{adjustbox}

\label{tab:complete_mmd_stats_liver}
\end{table}
\begin{table}[htbp!]
\caption{rFID (Mean ± SEM) per cell group for different models (\bestres{\textbf{best}}, \secondbest{second-best}, and \thirdbest{third-best} bolded) in the~\citet{nault2023single} dataset.}
\label{tab:liver_frechet}
\begin{adjustbox}{width=\textwidth}
\begin{tabular}{lccccccc}
\toprule
\textbf{Model} & \textbf{B cells} & \textbf{T cells} & \makecell{\textbf{Hepatocytes} \\ \textbf{(Centrilobular)}} & \textbf{Endothelial cells} & \textbf{Stellate cell} & \textbf{Macrophages} & \makecell{\textbf{Hepatocytes} \\ \textbf{(Periportal)}} \\
\midrule
scCBGM & 1.028 $\pm$ 0.158 & 0.964 $\pm$ 0.241 & 0.755 $\pm$ 0.034 & \thirdbest{1.121 $\pm$ 0.255} & 1.042 $\pm$ 0.353 & \thirdbest{0.993 $\pm$ 0.756} & \thirdbest{0.842 $\pm$ 0.050} \\
scCBGM-FM (decode) & \thirdbest{1.015 $\pm$ 0.140} & \secondbest{0.918 $\pm$ 0.210} & \secondbest{0.717 $\pm$ 0.007} & 1.132 $\pm$ 0.230 & \thirdbest{0.964 $\pm$ 0.324} & \bestres{\textbf{0.911 $\pm$ 0.674}} & \bestres{\textbf{0.807 $\pm$ 0.026}} \\
scCBGM-FM (edit) & 1.022 $\pm$ 0.151 & 0.942 $\pm$ 0.224 & \thirdbest{0.727 $\pm$ 0.006} & 1.156 $\pm$ 0.250 & \thirdbest{0.964 $\pm$ 0.332} & \secondbest{0.957 $\pm$ 0.723} & \secondbest{0.810 $\pm$ 0.036} \\\midrule
Vanilla-FM (decode) & 3.425 $\pm$ 0.001 & 3.022 $\pm$ 0.147 & 1.552 $\pm$ 0.793 & 21.293 $\pm$ 6.744 & 21.950 $\pm$ 7.596 & 3.885 $\pm$ 1.869 & 1.232 $\pm$ 0.482 \\
Vanilla-FM (edit) & 1.499 $\pm$ 0.185 & 1.333 $\pm$ 0.198 & \bestres{\textbf{0.436 $\pm$ 0.151}} & 17.998 $\pm$ 3.897 & 14.030 $\pm$ 2.147 & 1.766 $\pm$ 0.489 & 1.028 $\pm$ 0.283 \\\midrule
biolord & 5.103 $\pm$ 0.560 & 5.510 $\pm$ 0.702 & / & 26.211 $\pm$ 9.435 & 23.090 $\pm$ 9.283 & 7.530 $\pm$ 4.283 & 4.353 $\pm$ 1.775 \\
Cinema-OT & 5.414 $\pm$ 0.918 & 5.841 $\pm$ 0.928 & 4.035 $\pm$ 1.900 & 18.180 $\pm$ 8.224 & 24.920 $\pm$ 9.252 & 4.808 $\pm$ 2.321 & 4.335 $\pm$ 1.276 \\
scGen & 6.018 $\pm$ 0.665 & 6.498 $\pm$ 1.000 & 3.246 $\pm$ 1.460 & 9.912 $\pm$ 3.031 & 6.016 $\pm$ 2.582 & 6.495 $\pm$ 5.042 & 3.457 $\pm$ 1.269 \\
CVAE & 1.064 $\pm$ 0.020 & 1.069 $\pm$ 0.020 & 1.778 $\pm$ 0.314 & 1.276 $\pm$ 0.134 & 1.126 $\pm$ 0.126 & 1.215 $\pm$ 0.303 & 1.623 $\pm$ 0.308 \\
CVAE-FM (decode) & \secondbest{0.985 $\pm$ 0.087} & \bestres{\textbf{0.909 $\pm$ 0.108}} & 1.235 $\pm$ 0.417 & \secondbest{1.014 $\pm$ 0.123} & \secondbest{0.924 $\pm$ 0.106} & 1.032 $\pm$ 0.294 & 1.190 $\pm$ 0.324 \\
CVAE-FM (edit) & \bestres{\textbf{0.961 $\pm$ 0.070}} & \thirdbest{0.926 $\pm$ 0.113} & 1.250 $\pm$ 0.421 & \bestres{\textbf{0.974 $\pm$ 0.082}} & \bestres{\textbf{0.898 $\pm$ 0.136}} & 1.056 $\pm$ 0.326 & 1.192 $\pm$ 0.321 \\
\bottomrule
\end{tabular}
\end{adjustbox}
\end{table}

\begin{table}[htbp!]
\caption{rSD  (Mean ± SEM) per cell group for different models (\bestres{\textbf{best}}, \secondbest{second-best}, and \thirdbest{third-best} bolded) in the~\citet{nault2023single} dataset.}
\label{tab:liver_sinkhorn}
\begin{adjustbox}{width=\textwidth}
\begin{tabular}{lccccccc}
\toprule
\textbf{Model} & \textbf{B cells} & \textbf{T cells} & \makecell{\textbf{Hepatocytes} \\ \textbf{(Centrilobular)}} & \textbf{Endothelial cells} & \textbf{Stellate cell} & \textbf{Macrophages} & \makecell{\textbf{Hepatocytes} \\ \textbf{(Periportal)}} \\
\midrule
scCBGM & 0.986 $\pm$ 0.103 & 0.974 $\pm$ 0.182 & 0.705 $\pm$ 0.090 & 1.073 $\pm$ 0.232 & 1.014 $\pm$ 0.358 & \thirdbest{0.961 $\pm$ 0.794} & \thirdbest{0.742 $\pm$ 0.001} \\
scCBGM-FM (decode) & 0.999 $\pm$ 0.089 & \secondbest{0.902 $\pm$ 0.165} & \secondbest{0.683 $\pm$ 0.061} & 1.108 $\pm$ 0.209 & 0.943 $\pm$ 0.326 & \bestres{\textbf{0.927 $\pm$ 0.722}} & \secondbest{0.713 $\pm$ 0.027} \\ 
scCBGM-FM (edit) & 1.021 $\pm$ 0.084 & 0.968 $\pm$ 0.170 & \thirdbest{0.704 $\pm$ 0.089} & 1.154 $\pm$ 0.230 & \secondbest{0.914 $\pm$ 0.342} & \secondbest{0.959 $\pm$ 0.763} & \bestres{\textbf{0.712 $\pm$ 0.016}} \\\midrule
Vanilla-FM (decode) & 3.051 $\pm$ 0.276 & 2.686 $\pm$ 0.012 & 1.394 $\pm$ 0.572 & 20.301 $\pm$ 5.140 & 25.426 $\pm$ 10.854 & 3.173 $\pm$ 1.247 & 1.330 $\pm$ 0.635 \\
Vanilla-FM (edit) & 1.703 $\pm$ 0.123 & 1.290 $\pm$ 0.287 & \bestres{\textbf{0.504 $\pm$ 0.218}} & 17.380 $\pm$ 2.853 & 16.764 $\pm$ 4.256 & 1.416 $\pm$ 0.136 & 1.037 $\pm$ 0.319 \\\midrule
biolord & 2.261 $\pm$ 0.000 & 2.277 $\pm$ 0.137 & / & 20.420 $\pm$ 6.117 & 22.210 $\pm$ 10.198 & 3.418 $\pm$ 1.693 & 1.444 $\pm$ 0.641 \\
Cinema-OT & 2.784 $\pm$ 0.189 & 2.839 $\pm$ 0.320 & 1.414 $\pm$ 0.290 & 13.078 $\pm$ 5.379 & 24.800 $\pm$ 10.770 & 1.440 $\pm$ 0.128 & 1.813 $\pm$ 0.307 \\
scGen & 3.544 $\pm$ 0.109 & 3.778 $\pm$ 0.372 & 0.981 $\pm$ 0.118 & 5.742 $\pm$ 1.091 & 3.615 $\pm$ 1.817 & 3.859 $\pm$ 3.268 & 1.215 $\pm$ 0.312 \\
CVAE & \bestres{\textbf{0.922 $\pm$ 0.028}} & \thirdbest{0.911 $\pm$ 0.015} & 1.398 $\pm$ 0.616 & \thirdbest{1.046 $\pm$ 0.054} & 0.926 $\pm$ 0.082 & 1.102 $\pm$ 0.266 & 1.303 $\pm$ 0.429 \\
CVAE-FM (decode) & \secondbest{0.944 $\pm$ 0.031} & \bestres{\textbf{0.816 $\pm$ 0.121}} & 1.235 $\pm$ 0.449 & \secondbest{1.015 $\pm$ 0.039} & \thirdbest{0.921 $\pm$ 0.104} & 0.989 $\pm$ 0.248 & 1.154 $\pm$ 0.307 \\
CVAE-FM (edit) & \thirdbest{0.942 $\pm$ 0.070} & 0.933 $\pm$ 0.100 & 1.268 $\pm$ 0.435 & \bestres{\textbf{0.990 $\pm$ 0.076}} & \bestres{\textbf{0.892 $\pm$ 0.140}} & 1.032 $\pm$ 0.311 & 1.166 $\pm$ 0.294 \\
\bottomrule
\end{tabular}
\end{adjustbox}
\end{table}

\clearpage
\subsection{Case study: controlled single-cell editing for enhanced drug response}
Figures  ~\ref{fig:pathway_marker_genes} \& ~\ref{fig:umap_gene_trends} show the cell population after editing appeared similar to the populations which responded to the treatment, in both rMMD and gene-expression changes

\begin{figure} [htpb!]
    \centering
    \includegraphics[width=0.8\linewidth]{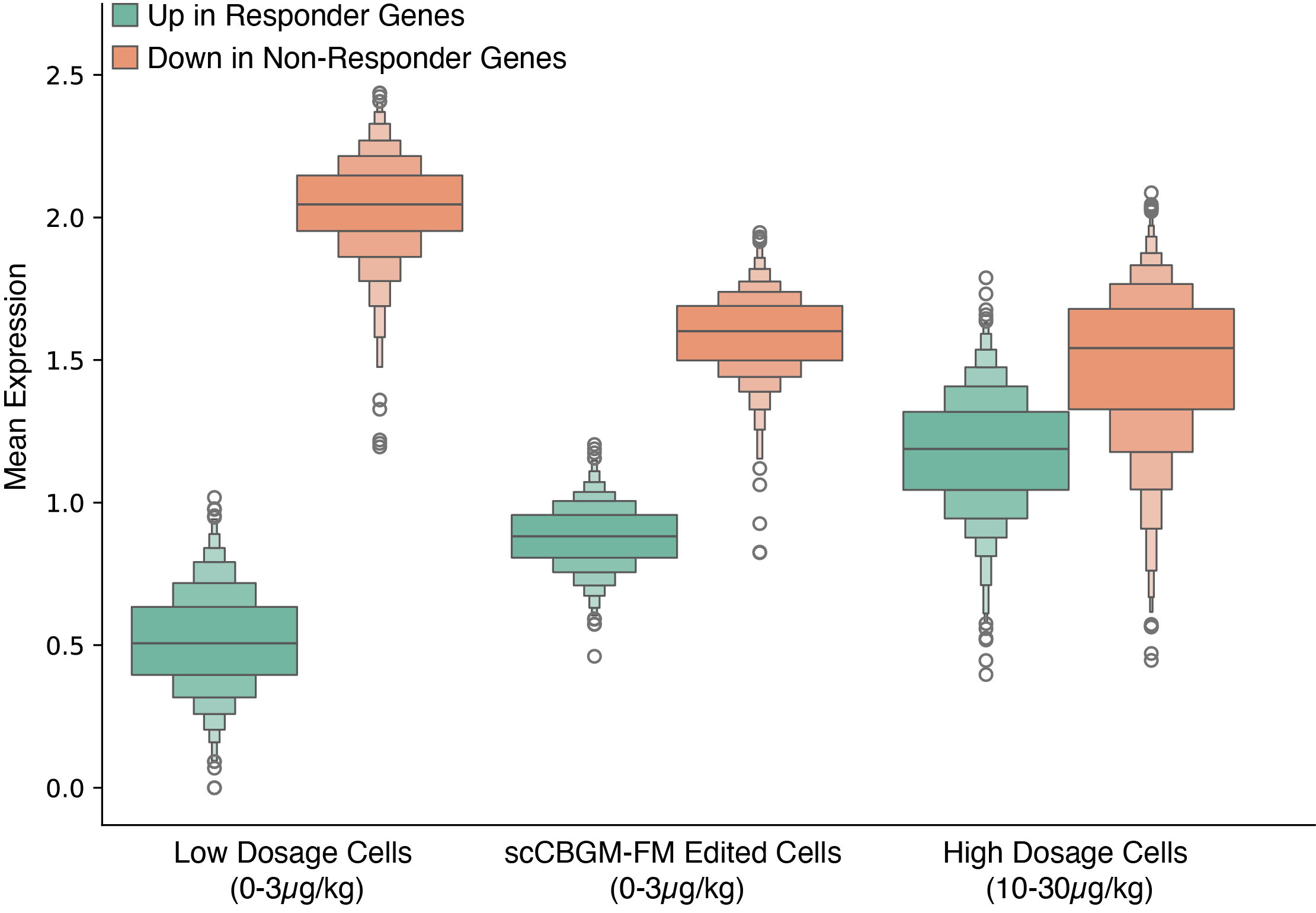}
    \caption{\textbf{Gene expression trends under in-silico perturbation in the~\citet{nault2023single} dataset.} 
    The distribution of mean expression values (averaged across cells in each group) for the top 100 upregulated (green) and downregulated (orange) marker genes. The edited cells (center) successfully reproduce the target gene signatures, shifting the expression of responder and non-responder genes towards the levels observed in the true high-dosage cells, complementing the global results in Figure \ref{fig:pathway_pertubation}. We note that only 40 of these marker genes overlap with the total set of top 100 genes defining the manipulated pathway concepts (500 total). The successful recovery of the remaining 80\% suggests that the model captures downstream regulatory effects beyond the direct inputs.}
    \label{fig:pathway_marker_genes}
\end{figure}

\begin{figure} [htpb!]
    \centering
    \includegraphics[width=0.8\linewidth]{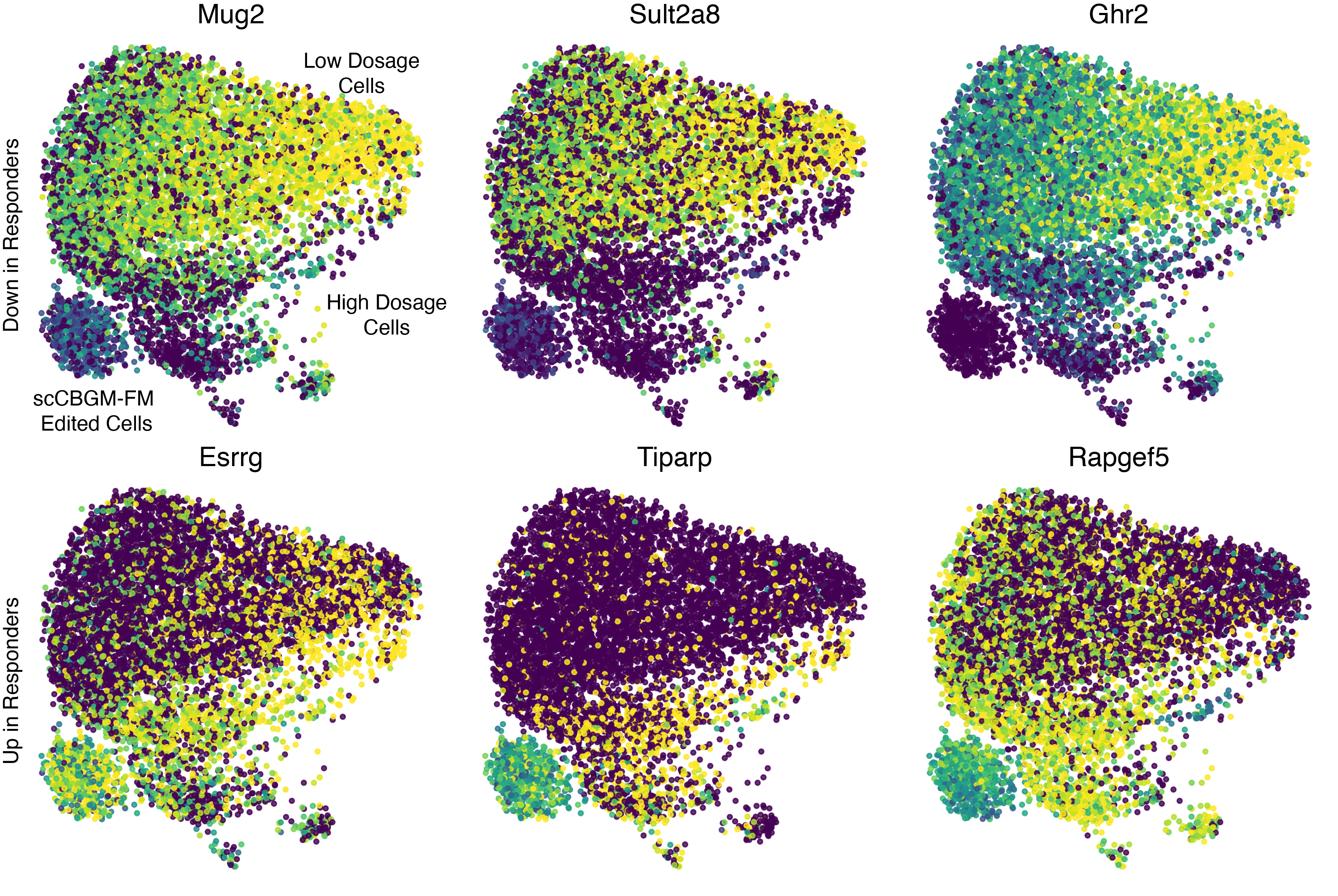}
    \caption{\textbf{Predicted gene expression trends match experimental ground truth.} 
    UMAP visualizations of representative genes from the top 100 differentially expressed set analyzed in Figure~\ref{fig:pathway_marker_genes}. 
    \textbf{Top row:} Genes downregulated in responders exhibit high expression (yellow) in low-dosage cells but are correctly suppressed in the edited population. 
    \textbf{Bottom row:} Genes upregulated in responders are activated in the edited cells, shifting from the low-dosage state (purple) to the high-dosage phenotype (yellow). 
    These patterns confirm that scCBGM-FM drives granular gene-specific shifts consistent with the aggregate trends.}
    \label{fig:umap_gene_trends}
\end{figure}

\subsection{Cell subtype accuracy}

To complement our benchmark, we evaluate whether edited cells from a given model preserve their subtype $\mathbf{s}$. For each edited cell in the~\citet{kang} dataset, we predict its subtype using a kNN classifier with $k=15$ neighbors. In Table~\ref{tab:accuracy_by_subtype}, we report the predicted subtype accuracy (mean $\pm$ SEM) for different models used to edit the cells. \method(-FM) achieves again the best performance over most cell types. As expected, models that achieved an rMMD $\leq 1$ show high accuracy (rMMD $\leq 1$ 1 means that the predicted counterfactual distribution is closer to the target than any other cell-type distribution in the data). However, we note that high subtype accuracy does not necessarily lead to successful counterfactual predictions. Indeed, the predicted distribution may be closest to the correct subtype distribution but not necessarily overlap.

\begin{table}[htbp!
]
\caption{Subtype Prediction Accuracy (Mean ± SEM) per subtype for different models on the~\citet{kang} dataset (\bestres{\textbf{best}}, \secondbest{second-best}, and \thirdbest{third-best} models bolded).}
\label{tab:accuracy_by_subtype}
\begin{adjustbox}{width=\textwidth}
\begin{tabular}{lcc|cc|ccc|cc|cc|cc|c}
\toprule
 & \multicolumn{2}{c}{\textbf{B-cells}} & \multicolumn{2}{c}{\textbf{CD14}} & \multicolumn{3}{c}{\textbf{CD4 T cells}} & \multicolumn{2}{c}{\textbf{CD8 T cells}} & \multicolumn{2}{c}{\textbf{Dendritic}} & \multicolumn{2}{c}{\textbf{FCGR3A+}} & \multicolumn{1}{c}{\textbf{NK cells}} \\
\textbf{Model} & \textbf{Type 0} & \textbf{Type 1} & \textbf{Type 0} & \textbf{Type 1} & \textbf{Type 0} & \textbf{Type 1} & \textbf{Type 2} & \textbf{Type 0} & \textbf{Type 1} & \textbf{Type 0} & \textbf{Type 1} & \textbf{Type 0} & \textbf{Type 1} & \textbf{Type 0} \\
\midrule
scCBGM & \thirdbest{0.996 $\pm$ 0.001} & \thirdbest{0.822 $\pm$ 0.008} & \bestres{\textbf{0.990 $\pm$ 0.000}} & \thirdbest{0.222 $\pm$ 0.045} & \thirdbest{0.983 $\pm$ 0.002} & \secondbest{0.863 $\pm$ 0.007} & \thirdbest{0.734 $\pm$ 0.011} & \bestres{\textbf{0.980 $\pm$ 0.001}} & \thirdbest{0.546 $\pm$ 0.007} & \bestres{\textbf{0.973 $\pm$ 0.003}} & \secondbest{0.710 $\pm$ 0.006} & 0.769 $\pm$ 0.043 & \thirdbest{0.106 $\pm$ 0.009} & \bestres{\textbf{0.992 $\pm$ 0.001}} \\
scCBGM-FM (decode) & \bestres{\textbf{0.996 $\pm$ 0.000}} & 0.817 $\pm$ 0.014 & \thirdbest{0.988 $\pm$ 0.003} & 0.139 $\pm$ 0.028 & \bestres{\textbf{0.985 $\pm$ 0.002}} & 0.837 $\pm$ 0.011 & 0.730 $\pm$ 0.007 & \thirdbest{0.972 $\pm$ 0.004} & \bestres{\textbf{0.552 $\pm$ 0.005}} & \secondbest{0.958 $\pm$ 0.004} & \bestres{\textbf{0.722 $\pm$ 0.006}} & \thirdbest{0.786 $\pm$ 0.037} & \secondbest{0.121 $\pm$ 0.005} & \thirdbest{0.980 $\pm$ 0.002} \\
scCBGM-FM (edit) & \secondbest{0.996 $\pm$ 0.001} & \secondbest{0.833 $\pm$ 0.006} & 0.988 $\pm$ 0.001 & \bestres{\textbf{0.306 $\pm$ 0.053}} & \secondbest{0.983 $\pm$ 0.001} & \bestres{\textbf{0.881 $\pm$ 0.005}} & \bestres{\textbf{0.745 $\pm$ 0.006}} & \secondbest{0.975 $\pm$ 0.001} & \secondbest{0.551 $\pm$ 0.008} & \thirdbest{0.948 $\pm$ 0.004} & \thirdbest{0.682 $\pm$ 0.000} & \secondbest{0.876 $\pm$ 0.018} & 0.093 $\pm$ 0.007 & 0.960 $\pm$ 0.002 \\
\midrule
Vanilla-FM (decode) & 0.338 $\pm$ 0.014 & 0.020 $\pm$ 0.002 & 0.802 $\pm$ 0.014 & 0.028 $\pm$ 0.028 & 0.474 $\pm$ 0.009 & 0.125 $\pm$ 0.007 & 0.008 $\pm$ 0.002 & 0.412 $\pm$ 0.011 & 0.065 $\pm$ 0.005 & 0.033 $\pm$ 0.005 & 0.028 $\pm$ 0.006 & 0.108 $\pm$ 0.004 & 0.011 $\pm$ 0.003 & \secondbest{0.990 $\pm$ 0.001} \\
Vanilla-FM (edit) & 0.425 $\pm$ 0.026 & 0.450 $\pm$ 0.014 & 0.812 $\pm$ 0.015 & \secondbest{0.306 $\pm$ 0.053} & 0.858 $\pm$ 0.005 & 0.468 $\pm$ 0.006 & 0.281 $\pm$ 0.022 & 0.825 $\pm$ 0.012 & 0.362 $\pm$ 0.009 & 0.044 $\pm$ 0.021 & 0.131 $\pm$ 0.025 & 0.164 $\pm$ 0.025 & 0.056 $\pm$ 0.004 & 0.967 $\pm$ 0.004 \\
\midrule
CVAE & 0.988 $\pm$ 0.001 & \bestres{\textbf{0.833 $\pm$ 0.006}} & \secondbest{0.990 $\pm$ 0.001} & 0.111 $\pm$ 0.000 & 0.970 $\pm$ 0.002 & \thirdbest{0.844 $\pm$ 0.003} & \secondbest{0.738 $\pm$ 0.008} & 0.945 $\pm$ 0.006 & 0.444 $\pm$ 0.010 & 0.904 $\pm$ 0.008 & 0.648 $\pm$ 0.007 & \bestres{\textbf{0.983 $\pm$ 0.001}} & \bestres{\textbf{0.142 $\pm$ 0.005}} & 0.922 $\pm$ 0.001 \\
\bottomrule
\end{tabular}
\end{adjustbox}
\end{table}

\newpage

\section{Reproducibility Statement}
We provide code and notebooks to fully reproduce all results, figures, and tables in a public GitHub repository: \url{https://github.com/almaan/conceptlab/tree/mlgenx} All data used in this work are publicly available; for convenience, we also share the processed files through an anonymous OSF repository: \url{https://osf.io/kfqj8/?view_only=02cfaddc86da47d5b8fca0577628ddf7}.

\end{document}